\definecolor {processblue}{cmyk}{0.96,0,0,0}
\theoremstyle{plain}
\newtheorem{theorem}{Theorem}[section]
\newtheorem{lemma}[theorem]{Lemma}
\theoremstyle{definition}
\theoremstyle{remark}
\newtheorem{remark}[theorem]{Remark}
\newcommand{\cA}{\mathcal{A}}
\newcommand{\cS}{\mathcal{S}}
\newcommand{\cE}{\mathcal{E}}
\def\cA{{\mathcal{A}}}   
\def\cE{{\mathcal{E}}}   
 \def\cN{{\mathcal{N}}} \def\cO{{\mathcal{O}}} \def\cP{{\mathcal{P}}}
  \def\cS{{\mathcal{S}}} 
 \def\cZ{{\mathcal{Z}}}
   \def\bs{{\mathbf{s}}}
\newcommand{\bef}{\begin{figure}}
\newcommand{\eef}{\end{figure}}
\newcommand{\beq}{\begin{eqnarray}}
\newcommand{\eeq}{\end{eqnarray}}
\newcommand{\armab}{\texttt{ARMAB}\xspace}
\newcommand{\ucarmab}{\texttt{UCMD-ARMAB}\xspace}
\icmltitlerunning{Provably Efficient Reinforcement Learning for Adversarial RMAB with Unknown Transitions and Bandit Feedback}
\begin{document}

\twocolumn[

\icmltitle{Provably Efficient Reinforcement Learning for Adversarial Restless Multi-Armed Bandits with Unknown Transitions and Bandit Feedback}




\begin{icmlauthorlist}
\icmlauthor{Guojun Xiong}{yyy}
\icmlauthor{Jian Li}{yyy}
\end{icmlauthorlist}

\icmlaffiliation{yyy}{Stony Brook University}

\icmlcorrespondingauthor{Guojun Xiong}{guojun.xiong@stonybrook.edu}
\icmlcorrespondingauthor{Jian Li}{jian.li.3@stonybrook.edu}

\icmlkeywords{Machine Learning, ICML}

\vskip 0.3in
]



\printAffiliationsAndNotice{}  

\begin{abstract}
Restless multi-armed bandits (RMAB) play a central role in modeling sequential decision making problems under an instantaneous activation constraint that at most $B$ arms can be activated at any decision epoch. Each restless arm is endowed with a state that evolves independently according to a Markov decision process regardless of being activated or not. In this paper, we consider the task of learning in episodic RMAB with unknown transition functions and adversarial rewards, which can change arbitrarily across episodes. Further, we consider a challenging but natural bandit feedback setting that only adversarial rewards of activated arms are revealed to the decision maker (DM). The goal of the DM is to maximize its total adversarial rewards during the learning process while the instantaneous activation constraint must be satisfied in each decision epoch. We develop a novel reinforcement learning algorithm with two key contributors: a novel biased adversarial reward estimator to deal with bandit feedback and unknown transitions, and a low-complexity index policy to satisfy the instantaneous activation constraint. We show $\tilde{\mathcal{O}}(H\sqrt{T})$ regret bound for our algorithm, where $T$ is the number of episodes and $H$ is the episode length. To our best knowledge, this is the first algorithm to ensure  $\tilde{\mathcal{O}}(\sqrt{T})$ regret for adversarial RMAB in our considered challenging settings.

\end{abstract}

\section{Introduction}\label{sec:intro}

Restless multi-armed bandits (RMAB) \citep{whittle1988restless} has been widely used to study sequential decision making problems with an \textit{``instantaneous activation constraint''} that at most $B$ out of $N$ arms can be activated at any decision epoch, ranging from wireless scheduling \citep{sheng2014data,cohen2014restless,xiong2022index}, 
resource allocation \citep{glazebrook2011general,larranaga2014index,xiong2023reinforcement,xiong2022whittle}, 
to healthcare \citep{killian2021beyond}.  
Each arm is described by a Markov decision process (MDP) \citep{puterman1994markov}, and evolves stochastically according to two different transition kernels, depending on whether the arm is activated or not. Rewards are generated with each transition. The goal of the decision maker (DM) is to maximize the total expected reward over a finite-horizon \citep{xiong2022AAAI} or an infinite-horizon \citep{wang2020restless,avrachenkov2022whittle,xiong2022Neurips,xiong2023finite,wang2024online} under the instantaneous activation constraint.

The majority of the literature on RMAB consider a stochastic environment, where both the rewards and dynamics of the environments are assumed to be stationary over time. However, in real-world applications such as online advertising and revenue management, rewards\footnote{Although most existing literature on adversarial learning use the term ``loss'' instead of ``reward'', we choose to use the latter, or more specifically ``adversarial reward''
in this paper to be consistent with RMAB literature. One can translate between rewards and losses by taking negation.} are not necessarily stationary but can change arbitrarily between episodes \citep{lee2023online}. To this end, we study the problem of learning \textit{a finite-horizon adversarial RMAB (\armab) with unknown transitions} over $T$ episodes. In each episode, all arms start from a fixed initial state, and the DM repeats the followings for a fixed number of $H$ decision epochs: determine whether or not to activate each arm while the instantaneous activation constraint must be satisfied, receive adversarial rewards from each arm, which transits to the next state according to some unknown transition functions. Specifically, we consider a challenging but natural \textit{bandit feedback} setting, where the adversarial rewards in each decision epoch are only revealed to the DM when the state-action pairs are visited. The goal of the DM is to minimize its regret, which is the difference between its total adversarial rewards and the total rewards received by an optimal fixed policy.

\begin{table*}[t]
\centering
\caption{Comparison with existing works, where $T$ is the number of episodes and $H$ is the length of each episode. }
\scalebox{0.85}{
\begin{tabular}{|c|c|c|c|c|c|c|}
\hline
Paper & Model & Setting &Feedback & Constraint & Algorithm & Regret \\\hline
\hline \cite{rosenberg2019online}& MDP & Adversarial& {Full} & \ding{55} & OMD & $\tilde{\mathcal{O}}(H\sqrt{T})$\\ 
\cite{rosenberg2019b} & MDP & Adversarial& {Bandi}t & \ding{55}& OMD & $\tilde{\mathcal{O}}(H^{2/3}T^{3/4})$\\
\cite{jin2020learning} & MDP &  Adversarial&{Bandit} & \ding{55} & OMD & $\tilde{\mathcal{O}}(H\sqrt{T})$ \\ 
\cite{luo2021policy} & MDP & Adversarial& {Bandit} & \ding{55} & Policy Optimization & $\tilde{\mathcal{O}}(H^2\sqrt{T})$\\
\hline
\cite{qiu2020upper} & CMDP & Adversarial&{Full} & {Average} & Primal-dual& $\tilde{\mathcal{O}}(H\sqrt{T})$\\
\cite{germano2023best} & CMDP & Adversarial& {Full} & {Average} & Primal-dual& $\tilde{\mathcal{O}}(HT^{3/4})$\\
\hline
\cite{wang2020restless} & RMAB & Stochastic&{Full} & Hard & Generative model& $\tilde{\mathcal{O}}(H^{2/3}T^{2/3})$\\
\cite{xiong2022AAAI,xiong2022Neurips} & RMAB & Stochastic&{Full} & Hard & Index-based& $\tilde{\mathcal{O}}(\sqrt{HT})$\\ \hline
\textbf{This Work} & \textbf{RMAB} & \textbf{Adversarial}& \textbf{Bandit} & \textbf{Hard} & \textbf{Index-based OMD}&${\tilde{\mathcal{O}}(H\sqrt{T})}$\\
\hline
\end{tabular}}
\label{table:1}
\end{table*}

To achieve this goal, we develop an episodic reinforcement learning (RL) algorithm named \ucarmab. First, to handle unknown transitions of each arm, we construct confidence sets to guarantee the true ones lie in these sets with high probability (Section~\ref{sec:confidence-set}). Second, to handle adversarial rewards, we apply Online Mirror Descent (OMD) to solve a relaxed problem, rather than directly on \armab, in terms of occupancy measures (Section~\ref{sec:OMD}). This is due to the fact that \armab is known to be computationally intractable even in the offline setting \citep{papadimitriou1994complexity}. We note that OMD has also been used in  adversarial MDP \citep{rosenberg2019online, jin2020learning} and CMDP \citep{qiu2020upper}. However, they considered a stationary occupancy measure due to the existing of stationary policies, which is not the case for our finite-horizon RMAB with instantaneous activation constraint. This requires us to leverage a time-dependent occupancy measure.

Third, a key difference compared to stochastic RMAB \citep{wang2020restless,xiong2022AAAI,xiong2022Neurips} is that with bandit feedback and to apply the above OMD, we must construct adversarial reward estimators since the rewards of arms are not completely revealed to the DM. We address this challenge by developing a novel biased overestimated reward estimator (Section~\ref{sec:estimator}) based on the observations of counts for each state-action pairs.  Finally, to handle the instantaneous activation constraint in \armab, we develop a low-complexity index policy (Section~\ref{sec:index}) based on the solutions from the OMD in Section~\ref{sec:OMD}. This is another key difference compared to adversarial MDP or CMDP, which requires us to explicitly characterize the regret due to the implementation of such an index policy.

We prove that \ucarmab achieves $\tilde{\mathcal{O}}(H\sqrt{T})$ regret, where $T$ is the number of episodes and $H$ is the episode length. Although our regret bound exhibits a gap (i.e., $\sqrt{H}$ times larger) to that of stochastic RMAB \citep{xiong2022AAAI, xiong2022Neurips}, to our best knowledge, our result is the first to achieve $\tilde{\mathcal{O}}(\sqrt{T})$ regret for adversarial RMAB with bandit feedback and unknown transition functions, a harder problem compared to stochastic RMABs.

\textbf{Notations.}  We use calligraphy letter $\mathcal{A}$ to denote a finite set with cardinality $|\mathcal{A}|$, and $[N]$ to denote the set of integers $\{1,\cdots, N\}$.

\section{Related Work}
We discuss our related work from three categories: MDP, CMDP and RMAB. In particular, we mainly focus on the adversarial settings for the former two. 

\textbf{Adversarial MDP.} \citet{even2009online} proposed the adversarial MDP model with arbitrarily changed loss functions and a fixed stochastic transition function. The first to consider unknown transition function with full information feedback is \citet{neu2012adversarial}, which proposed a Follow-the-Perturbed-Optimistic algorithm with an $\tilde{\mathcal{O}}(H|\cS||\cA|\sqrt{T})$ regret.  \citet{rosenberg2019online} improved the bound to $\tilde{\mathcal{O}}(H|\cS|\sqrt{|\cA|T})$ through a UCRL2-based online optimization algorithm.  A more challenging bandit feedback setting was considered in \citet{rosenberg2019b}, which achieves an $\tilde{\mathcal{O}}(H^{3/2}|\cS||\cA|^{1/4}T^{3/4})$ regret. \citet{jin2020learning} further achieves an improved $\tilde{\mathcal{O}}(H|\cS|\sqrt{|\cA|T})$ regret via a novel reward estimator and a modified radius of upper confidence ball. Under a similar setting, a policy optimization method is developed in \citet{luo2021policy} with $\tilde{\mathcal{O}}(H^2|\cS|\sqrt{|\cA|T})$ regret. 
Another line of recent works further consider the settings of linear function approximation \citep{neu2021online}, the best-of-both-world \citep{jin2021best}, delay bandit feedback \citep{jin2022near} and adversarial transition functions \citep{jin2023no}.

\textbf{Adversarial Constrained MDP (CMDP).} CMDP \citep{altman1999constrained} plays an important role in control and planning, which aims to maximize a reward over all available policies subject to constraints that enforce the fairness or safety of the policies. \citet{qiu2020upper} is one of the first to study CMDP with adversarial losses and unknown transition function. A primal-dual algorithm was proposed with $\tilde{\mathcal{O}}(H|\cS|\sqrt{|\cA|T})$ regret and constraint violation. \citet{germano2023best} further considered both adversarial losses and constraints, and proposed a best-of-both-world algorithm, which achieves $\tilde{\mathcal{O}}(HT^{3/4})$ regret and constraint violation.

\textbf{RMAB.} RMAB was first introduced in \citet{whittle1988restless}, and has been widely studied, see \citet{nino2023markovian} and references therein. In particular, RL algorithms have been proposed for RMAB with unknown transitions. Colored-UCRL2 is the state-of-the-art method for {online RMAB with $\tilde{\mathcal{O}}(\sqrt{HT})$ regret. To address the exponential computational complexity of colored-UCRL2, low-complexity RL algorithms have been developed. For example, \citet{wang2020restless} proposed a generative model-based algorithm with $\tilde{\mathcal{O}}(H^{2/3}T^{2/3})$ regret, and \citet{xiong2022AAAI,xiong2022Neurips} designed index-aware RL algorithms for both finite-horizon and infinite-horizon average reward settings with $\tilde{\mathcal{O}}(\sqrt{HT})$ regret\footnote{For a fair comparison, the total time horizon for stochastic RMAB \citep{wang2020restless,xiong2022AAAI,xiong2022Neurips} is set to be $HT$.}. However, most of the existing literature on RMAB focus on the stochastic setting, where the reward functions are stochastically sampled from a fixed distribution, either known or unknown. To our best knowledge, this work is the first to study RMAB in adversarial settings with unknown transition functions and bandit feedback.

\section{Model and Problem Formulation}\label{sec:model}

We formally define the adversarial RMAB, and introduce the online settings considered in this paper.

\subsection{\armab: Adversarial RMAB}

Consider an episodic adversarial RMAB with $N$ arms. Each arm $n\in[N]$ is associated with a  unichain MDP denoted by a tuple $(\cS, \cA, P_n, \{r_n^t, \forall t\in[T]\}, H)$. $\cS$ is a finite state space, and $\cA:=\{0,1\}$ is the set of binary actions. Using the standard terminology from RMAB literature, an arm is \textit{passive} when action $a=0$ is applied to it, and \textit{active} otherwise. $P_n:\cS\times\cA\times\cS\mapsto[0,1]$ is the transition kernel with $P_n(s^\prime|s,a)$ being the probability of transition to state $s^\prime$ from state $s$ by taking action $a$. $T$ is the number of episodes, each of which consists of $H$ decision epochs.  $r_n^t: \cS\times\cA\mapsto [0,1]$ is the adversarial reward function in episode $t$. For simplicity, let $r_n(s,0)=0, \forall s\in\cS, n\in[N]$.  We do not make any statistical assumption on the adversarial reward functions, \textit{which can be chosen arbitrarily}\footnote{
However, in stochastic RMAB, rewards follow a stochastic distribution, which is fixed between episodes. This leads to a different objective in adversarial RMAB as the DM must adapt to dynamic and potentially hostile reward structures while striving to find an optimal policy across all episodes.}. 

At decision epoch $h\in[H]$ in episode $t\in[T]$, an arm can be either active or passive. A policy determines what action to apply to each arm at $h$ under the instantaneous activation constraint that at most $B$ arms can be activated. Denote such a feasible policy in episode $t$ as $\pi^t$, and let $\{(S_n^{t,h}, A_n^{t,h})\in\cS\times \cA, \forall t\in[T], h\in[H], n\in[N]\}$ be the random tuple generated according to transition functions $\{P_n, \forall n\in[N]\}$ and $\pi^t$. 
The corresponding expected reward in episode $t$ is 
\begin{align}\label{eq:reward-ept}
   R_t (\pi^t)\!:= \mathbb{E}\left[\sum_{h=1}^{H}\sum_{n=1}^N \!r_n^t(S_n^{t,h},A_n^{t,h})\Big|\{P_n, \!\forall n\}, \pi^t\right].
\end{align}
The DM's goal is to find a policy $\pi$ for all $T$ episodes which maximizes the total expected adversarial reward under the instantaneous active constraint, i.e., 
\begin{align}
\armab: \max_{\pi}~ &R(T, \pi):=\sum_{t=1}^T  R_t(\pi)\allowdisplaybreaks\nonumber\\ 
\text{s.t.}~ & \sum_{n=1}^N A_n^{t,h}\leq B, \forall h\in [H] ,t\in [T].
    \label{eq:orginal_P}
\end{align}
It is known that even when the transition kernel of each arm and the adversarial reward functions in each episode are revealed to the DM at the very beginning, {finding an optimal policy over $T$ episodes, denoted as $\pi^{opt}$ for \armab~(\ref{eq:orginal_P})} is PSPACE-hard \citep{papadimitriou1994complexity}. The fundamental challenge lies in the explosion of state space and the curse of dimensionality prevents computing optimal policies. Exacerbating this challenge is the fact that transition kernels are often unknown in practice, and {adversarial reward functions are only revealed to the DM at the end of each episode in adversarial settings (see Section~\ref{sec:online-setting}).}

\begin{remark}
Most existing works on adversarial MDP \citep{rosenberg2019online,rosenberg2019b, jin2020learning} and CMDP  \citep{qiu2020upper, germano2023best} assume that the state space is loop-free. In other words, the state space $\cS$ can be divided into $L$ distinct layers, i.e., $\cS:=\cS_1\cup\ldots\cup\cS_L$ with a singleton initial layer $\cS_1=\{s_1\}$, a terminal layer $\cS_L=\{s_L\}$, and $\cS_\ell \cap \cS_j=\emptyset, j\neq \ell.$ Transitions only occur between consecutive layers, i.e., $P(s^\prime|s,a)>0$ if $s^\prime \in \cS_{\ell+1}, s\in\cS_\ell, \forall \ell\in[L]$. 
On one hand, many practical problems do not have a loop-free MDP. On the other hand, this assumption requires any non-loop-free MDP to extend its state space $L$ times to be transformed into a loop-free MDP with a fixed length $L$. This often enlarges the regret bound at least $L$ times. In this paper, we consider a general MDP  without such a restrictive loop-free assumption. 
\end{remark}

\subsection{Online Setting and Learning Regret}\label{sec:online-setting}

\begin{algorithm}[t]
	\caption{{Online Interactions between the DM and the Adversarial RMAB Environment}}
	\label{alg:importance-policy}
	\begin{algorithmic}[1]
		\REQUIRE State space $\cS$, action space $\cA$, and unknown transition functions $\{\cP_n, \forall n\}$;
  \FOR{$t=1$ to $T$}
\STATE {All arms start in state $S_n^{0}, \forall n$;}
\STATE Adversary decides the reward function $\{r_n^t, \forall n\}$, and the DM decides a policy $\pi^t$; 
  \FOR{$h=1$ to $H$}
  \STATE DM chooses actions $\{A_{n}^{t,h}, \forall n\}$ under the instantaneous activation constraint; observes adversarial reward $\{r_n^t(S_n^{t,h}, A_n^{t,h}),\forall n\}$; \STATE Arm $n, \forall n$ moves to the next state $S_n^{t,h+1}\sim P_n(\cdot|S_n^{t,h},A_n^{t,h})$ ;
  \STATE DM observes states $\{S_n^{t,h+1}, \forall n\}$.
  \ENDFOR
	 \ENDFOR 
	\end{algorithmic}
\end{algorithm}

We focus on the online adversarial settings where the underlying MDPs are unknown to the DM, and the adversarial reward is of bandit-feedback\footnote{We use the term ``bandit-feedback'' as in \citet{rosenberg2019b, jin2020learning} to denote that only the adversarial rewards of visited state-action pairs are revealed to the DM. }.
The interaction between the DM and the \armab environment is presented in Algorithm~\ref{alg:importance-policy}. Only the state and action spaces are known to the DM in advance, and the interaction proceeds in $T$ episodes. At the beginning of episode $t$, the adversary determines the adversarial reward functions, each arm $n$ starts from a fixed state $S_n^0, \forall n\in[N]$, and the DM determines a policy $\pi^t$ and then executes this policy for each decision epoch $h\in[H]$ in this episode. Specifically, at decision epoch $h$, the DM chooses actions $\{A_{n}^{t,h}, \forall n\in \cN\}$ for each arm according to $\pi^t(\cdot|S_n^{t,h})$ under the instantaneous activation constraint, i.e., $\sum_{n=1}^N A_{n}^{t,h}=B$, and each arm then moves to the next state $S_n^{t,h+1}$ sampled from $P_n(\cdot|S_n^{t,h},A_n^{t,h})$. The DM records the trajectory of the current episode $t$ and the adversarial rewards in each decision epoch are \textit{only revealed to the DM when the state-action pairs are visited due to bandit-feedback.}

The DM's goal is to minimize its regret, as defined by
\begin{align}\label{eq:regret}
    \Delta(T):=R(T, \pi^{opt})-\sum_{t=1}^T R_t(\pi^t), 
\end{align}
where $R(T, \pi^{opt})$ is the total expected adversarial rewards under the offline optimal policy $\pi^{opt}$ by solving~(\ref{eq:orginal_P}), and $R_t(\pi^t)$ is defined in~(\ref{eq:reward-ept}). We simply refer to \armab as in the online setting in the rest of this paper.

Although the above definition is similar to that in stochastic settings, the fundamental difference is that the offline policy $\pi^{opt}$ is only optimal when it is defined over all $T$ episodes, and it is not guaranteed to be optimal in each episode, which is the case for stochastic setting. This is because the adversarial rewards can change arbitrarily between episodes rather than following some fixed (unknown) distribution as in the stochastic setting. This fundamental difference will necessitate new techniques in terms of regret characterization, which we will discuss in details in Section~\ref{sec:analysis}.

\section{RL Algorithm for \armab}\label{sec:learning}

We show that it is possible to design a RL algorithm to solve the regret minimization problem \eqref{eq:regret} for the computationally intractable \armab. Specifically, we leverage the popular UCRL-based algorithm to the online adversarial RMAB setting, and
develop an episodic RL algorithm named \ucarmab.  There are four key components of our algorithm: (1) maintaining a confidence set of the transition functions; (2) using online mirror descent (OMD) to solve a relaxed version of \armab in terms of occupancy measure to deal with adversarial rewards; (3) constructing an adversarial reward estimator to deal with bandit feedback;  and (4) designing a low-complexity index policy to ensure that the instantaneous activation constraint is satisfied in each decision epoch.  We summarize our \ucarmab in Algorithm \ref{alg:UCB}, which operates in an episodic manner with a total of $T$ episodes and each episode including $H$ decision epochs. For simplicity, let $\tau_t:= H(t-1)+1$ be the starting time of the $t$-th episode.

\begin{algorithm}[t]
	\caption{\ucarmab}
	\label{alg:UCB}
	\begin{algorithmic}[1]
     \REQUIRE  Initialize $C_n^{1}(s,a)=0,$  $\hat{P}_n^{1}(s^\prime|s,a)=1/|\cS|$	
		\FOR{$t=1,2,\cdots,T$}
	 \STATE Construct $\cP_n^{t}(s,a)$   according to \eqref{eq:confidence_ball} at $\tau_t$; 		
  \STATE Construct the adversarial reward estimator $\hat{r}_n^t(s,a), \forall s, a, n$ according to \eqref{eq:reward_estimator};
  \STATE Obtain a relaxed \armab \eqref{eq:relaxed_constraint} in terms of occupancy measure, and solve \eqref{eq:UCB_extended} with OMD; 
  \STATE Construct an index policy $\pi^{t}$ according to~(\ref{eq:importance-index}). 
		\ENDFOR
	\end{algorithmic}
\end{algorithm}

\subsection{Confidence Sets}\label{sec:confidence-set}

As discussed in Section~\ref{sec:model}, \armab has two components: a stochastic transition function, and an adversarial reward function for each arm. Since transition functions are unknown to the DM, we maintain confidence sets via past sample trajectories, which contain true transition functions $P_n, \forall n$ with high probability. Specifically, \ucarmab maintains two counts for each arm $n$.  Let $C_n^{t-1}(s,a), \forall n\in[N]$ be the number of visits to state-action pairs $(s,a)$ until $\tau_t$, and $C_n^{t-1}(s,a, s^\prime), \forall n\in[N]$ be the number of transitions from $s$ to $s^\prime$ under action $a$. At episode $t$, \ucarmab updates these two counts as: $\forall (s,a,s^\prime)\in\mathcal{S}\times\mathcal{A}\times\mathcal{S}$
\begin{align*}
    C_n^{t}(s,a)&=C_n^{t-1}(s,a)+\sum_{h=1}^{H}\mathds{1}(S_n^{t,h}=s,A_n^{t,h}=a),\displaybreak[0]\\
    C_n^{t}(s,a, s^\prime)&=C_n^{t-1}(s,a, s^\prime)\displaybreak[1]\\
    &+\sum_{h=1}^{H}\mathds{1}(S_n^{t,h+1}=s^\prime|S_n^{t,h}=s,A_n^{t,h}=a).
\end{align*}
\ucarmab estimates the true transition function by the corresponding empirical average as:
\begin{align}\label{eq:empirical_est_P}
\hat{P}_n^{t}(s^\prime|s,a)=\frac{C_n^{t-1}(s,a,s^\prime)}{\max\{C_n^{t-1}(s,a),1\}},
\end{align}
and then defines confidence sets at episode $t$ as 
\begin{align}\label{eq:confidence_ball} 
 \cP_n^{t}(s,a)&:=\{\tilde{P}_n^t(s^\prime|s,a),\forall s^\prime:\nonumber\\
 &\quad\quad|\tilde{P}_n^t(s^\prime|s,\!a)\!-\!\hat{P}_n^{t}(s^\prime|s,\!a)|\!\leq\! \delta_n^{t}(s,\!a)\},
\end{align}
where the confidence width $\delta_n^{t}(s,a)$ is built according to the Hoeffding inequality \citep{maurer2009empirical} as: for $\epsilon\in(0,1)$ 
\begin{align}
\delta_n^{t}(s,a)\!=\!\sqrt{\frac{1}{2C_n^{t-1}(s,a)}\!\log\!\Big(\frac{4|\cS||\cA|N(t\!-\!1)H}{\epsilon}\Big)}.
\end{align}

\begin{lemma}
With probability at least $1-2\epsilon$, the true transition functions are within the confidence sets, i.e., $P_n\in \cP_n^{t}$, $\forall n\in[N], t\in[T].$
\end{lemma}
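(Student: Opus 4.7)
The plan is to establish the high-probability containment by combining Hoeffding's inequality for a fixed sample size with a union bound that sweeps over all quadruples $(n,s,a,s')$, episodes $t$, and possible values of the random visit count. The key probabilistic observation is that since each arm $n$ has a fixed Markov kernel $P_n$ and arms evolve independently, the outcomes obtained upon successive visits to a pair $(s,a)$ form an i.i.d.\ Bernoulli$(P_n(s'|s,a))$ sequence for every $s'$, regardless of how \ucarmab\ schedules those visits. This is precisely the setting in which Hoeffding's inequality applies to the empirical transition estimate in~\eqref{eq:empirical_est_P}.

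First I would fix $n$, $(s,a)$, $s'$, $t$, and a candidate count value $k\in\{1,\ldots,(t-1)H\}$. On the event $\{C_n^{t-1}(s,a)=k\}$, the estimator $\hat{P}_n^{t}(s'|s,a)$ is the sample mean of the first $k$ i.i.d.\ Bernoulli draws from $P_n(\cdot|s,a)$, and $\delta_n^{t}(s,a)$ evaluates to $\sqrt{\log(4|\cS||\cA|N(t-1)H/\epsilon)/(2k)}$. Hoeffding's inequality then bounds the joint probability of the deviation event $\{|\hat{P}_n^{t}(s'|s,a)-P_n(s'|s,a)|>\delta_n^{t}(s,a)\}$ and $\{C_n^{t-1}(s,a)=k\}$ by $2\exp(-2k(\delta_n^{t}(s,a))^2)$, which the chosen width collapses to $\epsilon/(2|\cS||\cA|N(t-1)H)$.

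Next I would stitch together the global statement with nested union bounds. Summing over $k\in\{1,\ldots,(t-1)H\}$ (the maximum number of visits accumulated by the start of episode $t$) eliminates the randomness of the count and cancels the $(t-1)H$ factor inside the logarithm; the remaining factors $|\cS|$, $|\cS||\cA|$, and $N$ inside the log absorb the subsequent unions over the next state $s'$, the state-action pair $(s,a)$, and the arm index $n$. A final union bound over episodes $t\in[T]$ then delivers the stated $1-2\epsilon$ containment probability.

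The main obstacle is the interaction between the adaptive policies $\{\pi^t\}$ and the random count $C_n^{t-1}(s,a)$: although \ucarmab\ chooses actions based on past observations and previously constructed confidence sets, the Markov and independence-of-arms structure still lets us reduce to a fixed-sample-size Hoeffding bound by conditioning on each possible value of the count. Absorbing the extra $(t-1)H$ term inside $\log(\cdot)$ is exactly what makes this reduction tight and is why the width in~\eqref{eq:confidence_ball} grows only logarithmically in the horizon.
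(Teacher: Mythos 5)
Your overall strategy is the same as the paper's: a pointwise Hoeffding bound on $|\hat{P}_n^{t}(s'|s,a)-P_n(s'|s,a)|$ followed by union bounds. In one respect you are more careful than the paper: you correctly note that $C_n^{t-1}(s,a)$ is a random, policy-dependent quantity, and you handle it by bounding the joint probability of the deviation event with $\{C_n^{t-1}(s,a)=k\}$ via the i.i.d.\ pre-generated draws and then summing over $k$. The paper's own proof applies Hoeffding as if the sample size were fixed and never unions over the count, so your treatment of adaptivity is a genuine improvement.

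However, the accounting does not close with $\delta_n^{t}(s,a)$ exactly as defined in \eqref{eq:confidence_ball}. After the two-sided Hoeffding step, each elementary event has probability $\epsilon/\bigl(2|\cS||\cA|N(t-1)H\bigr)$, and you then union over $k\in\{1,\dots,(t-1)H\}$, over $s'$ (a factor $|\cS|$), over $(s,a)$ (a factor $|\cS||\cA|$), over $n$ (a factor $N$), and finally over $t\in[T]$. The single factor $(t-1)H$ inside the logarithm is being asked to do double duty — you spend it once to absorb the union over $k$ and then again (implicitly) for the final union over episodes — and there is no factor available to pay for the $|\cS|$ coming from the union over the coordinate $s'$. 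Carried out literally, your chain of unions yields a failure probability of order $\epsilon|\cS|T$ rather than $2\epsilon$. The fix is routine (enlarge the argument of the logarithm to something like $|\cS|^2|\cA|N\,T^2H^2/\epsilon$, which changes nothing at the $\tilde{\mathcal{O}}$ level), and the paper's own proof is at least as loose on exactly these points, but as written your final step ``delivers the stated $1-2\epsilon$'' does not follow from the width you are given.
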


\subsection{Solving Relaxed \armab with OMD}\label{sec:OMD}
Recall that solving \armab is computationally expensive even in the offline setting \citep{papadimitriou1994complexity}. To tackle this challenge, we first relax the instantaneous activation constraint, i.e., the activation constraint is satisfied on average, and obtain the following relaxed problem of 
\begin{align}\label{eq:relaxed_constraint}
\max_{\pi}~ &R(T, \pi):=\sum_{t=1}^T  R_t(\pi)\displaybreak[0]\nonumber\\ 
   \text{s.t.}~ & \mathbb{E}_{\pi}\left[\sum_{n=1}^N A_n^{t,h}\right]\leq B, ~h\in[H], t\in[T]. 
\end{align}
It turns out that this relaxed \armab can be equivalently transformed into a linear programming (LP) using occupancy measure \citep{altman1999constrained}. 
More specifically, the occupancy measure $\mu$ of a policy $\pi$ for a finite-horizon MDP is defined as the expected number of visits to a state-action pair $(s, a)$ at each decision epoch $h$.  Formally,
\begin{align}\label{eq:OM}
\mu^\pi=& \Big\{\mu_n(s,a;h)=\mathbb{P}(S_n^h=s, A_n^h=a)\nonumber\\
&\qquad: \forall n\in[N], s\in\cS, a\in\cA, h\in[H] \Big\}.
\end{align}
It can be easily checked that occupancy measures satisfy the following two properties. First, 
\begin{align}
\sum_{s\in\cS}\sum_{a\in\cA}\mu_n(s,a;h)=1, \forall n\in[N], h\in[H],
\end{align}
with $0\leq \mu_n(s,a;h)\leq 1$.  Hence the occupancy measure $\mu_n$, $\forall n$ is a probability measure. Second, the fluid balance exists in occupancy measure transitions as
\begin{align}
\sum_{a\in\mathcal{A}}\!\!\mu_n(s,a;h)\!=\!\!\sum_{s^\prime}\!\sum_{ a^\prime}\!\mu_n(s^\prime,\!a^\prime; h\!-\!1)P_n(s^\prime,\!a^\prime,\!s).
\end{align}
For ease of presentation, we relegate the details of the equivalent LP of~(\ref{eq:relaxed_constraint}) to the supplementary materials.

\begin{remark}\label{remark:OM}
Occupancy measure has been widely used in adversarial MDP \cite{rosenberg2019online, jin2020learning} and CMDP \cite{qiu2020upper}. Since there exists a stationary policy in these settings, the regret minimization problem in \citet{rosenberg2019online, jin2020learning, qiu2020upper} can be equivalently reduced to an online linear optimization in terms of the stationary occupancy measure. Unlike these works, there is no such stationary policy for our considered finite-horizon RMAB with the instantaneous activation constraint, and hence we cannot reduce our regret~(\ref{eq:regret}) into a linear optimization using stationary occupancy measure. To address this additional challenge, we leverage the time-dependent occupancy measure~(\ref{eq:OM}), and the regret minimization calls for different proof techniques (see Section~\ref{sec:analysis} for details). 
\end{remark}
 
Unfortunately, we cannot solve this LP since we have no knowledge about the true transition functions and adversarial rewards. Similar to the stochastic setting \cite{xiong2022AAAI, xiong2022Neurips} and with the confidence sets defined in Section~\ref{sec:confidence-set}, we can further rewrite this LP as an extended LP by leveraging the \textit{state-action-state occupancy measure} $z_n^t(s, a, s^\prime; h)$ defined as $z_n^t(s, a, s^\prime; h)=P_n(s^\prime|s,a)\mu_n^t(s,a;h)$ to express the confidence intervals of the transition probabilities. Unlike \citet{xiong2022AAAI, xiong2022Neurips}, the adversarial rewards can change arbitrarily between episodes, and hence we also need to guarantee that the updated occupancy measure in episode $t$ does not deviate too much away from the previously chosen occupancy measure in episode $t-1$. Thus, we further incorporate $D(z^t||z^{t-1})$ into the objective function, which is the unnormalized Kullback-Leible (KL) divergence between two occupancy measures, which is defined as
\begin{align}\label{eq:KL_divergence}
\hspace{-0.3cm}D(z^t||z^{t-1})&:=\sum_{h=1}^{H}\sum_{s,a,s^\prime}z^t(s,a,s^\prime;h)\ln\frac{z^t(s,a,s^\prime;h)}{z^{t-1}(s,a,s^\prime;h)}\nonumber\\
&\qquad-z^t(s,a,s^\prime;h)+z^{t-1}(s,a,s^\prime;h).
\end{align}
The DM needs to solve a non-linear problem over $z^t:=\{z_n^t(s, a, s^\prime; h), \forall n\in[N]\}$ for a given parameter $\eta>0$:  
\begin{align}\label{eq:UCB_extended}
\max_{z^t}&\sum_{h=1}^{H}\!\sum_{n=1}^{N}\!\sum_{(s,a,s^\prime)}\!\!\! \eta z_n^t(s,a, s^\prime;h)\hat{r}_n^{t-1}(s,a)\!-\!D(z^t||z^{t-1})\nonumber\allowdisplaybreaks\\
\mbox{ s.t.} &\sum_{n=1}^{N}\sum_{(s,a,s^\prime)} az_n^t(s,a,s^\prime; h) \le B ,\quad~\forall h\in[H], \nonumber\allowdisplaybreaks\\
&{\sum_{a, s^\prime}} z_n^t(s,a,s^\prime;h)\!=\!\!\!\sum_{s^\prime, a^\prime}\!z_n^t(s^\prime, a^\prime, s; h-1),\forall h\!\in\![H],  \nonumber\allowdisplaybreaks\\ 
&~\frac{z_n^t(s,a,s^\prime;h)}{\sum_y z_n^t(s,a,y;h)}-(\hat{P}_n^t(s^\prime|s,a)+\delta_n^t(s,a))\leq 0, \nonumber\allowdisplaybreaks\\ 
&-\frac{z_n^t(s,a,s^\prime;h)}{\sum_y z_n^t(s,a,y;h)}\!+\!(\hat{P}_n^t(s^\prime|s,a)\!-\!\delta_n^t(s,a))\!\leq\! 0, \nonumber\allowdisplaybreaks\\ 
& z_n^t(s,a, s^\prime;h)\!\geq\! 0,  \forall s,s^\prime\!\in\mathcal{S}, a\!\in\!\mathcal{A},\forall n\!\in\![N], 
\end{align}
where $\hat{r}_n^{t-1}(s,a)$ is the estimated adversarial reward due to bandit feedback, and we formally define it in Section~\ref{sec:estimator}. 

This problem has $O(|\mathcal{S}|^2|\mathcal{A}|HN)$ constraints and decision variables. Inspired by \citet{rosenberg2019online}, we solve~(\ref{eq:UCB_extended}) via OMD to choose the occupancy measure for each episode $t$. We first solve a unconstrained problem by setting $\tilde{z}^{t}(s,a,s^\prime; h)=z^{t-1}(s,a,s^\prime;h)e^{\eta \hat{r}^{t-1}(s,a)}$. Then we project the unconstrained maximizer $\tilde{z}^{t}$ into the feasible set $\cZ^t$, which is defined by the constraints in \eqref{eq:UCB_extended}. This can be reduced to a convex optimization problem, and can be efficiently solved using iterative methods \citep{boyd2004convex}. For ease of readability, we relegate the details of solving \eqref{eq:UCB_extended} to the supplementary materials.  Denote the optimal solution to~(\ref{eq:UCB_extended}) as $z^{t,\star}$.

\subsection{Adversarial Reward Estimators}\label{sec:estimator} 
Since we consider a challenging bandit feedback setting, where the adversarial rewards in each decision epoch are revealed to the DM only when the state-action pairs are visited, we need to construct adversarial reward estimators based on observations. Specifically, we build upon the inverse importance-weighted estimators based on the observation of counts for each state-action pairs. Given the trajectory for episode $t$, a straightforward estimator is 
\begin{align}\label{eq:unbias}
\frac{r_n(s,a)}{\max\{c_n^t(s,a),1\}/H}\mathds{1}(\exists h, S_n^{t,h}=s, A_n^{t,h}=a), 
\end{align}
where $c_n^t(s,a):=\sum_{h=1}^{H}\mathds{1}(S_n^{t,h}=s,A_n^{t,h}=a)$ is the number of visits to state-action pair $(s,a)$ in episode $t$. For simplicity, we denote $\bar{r}_n^{t}(s,a):=\frac{r_n(s,a)}{\max\{c_n^t(s,a),1\}/H}$. Clearly, $\bar{r}_n^{t}(s,a)\mathds{1}(\exists h, S_n^{t,h}=s, A_n^{t,h}=a)$ is an unbiased estimator of ${r}_n^{t}(s,a)$ from the above definition. A key difference between the unbiased estimator in \eqref{eq:unbias} and those in previous works on adversarial MDPs  \citep{jin2020learning, rosenberg2019online} lies in the construction of the denominator in \eqref{eq:unbias}. Specifically, \citet{jin2020learning, rosenberg2019online} considered MDPs with a underlying stationary policy, and hence the evolution of the dynamics in the considered MDPs will converge to this stationary policy. This enables the construction of the denominator in the estimator using the occupancy measure for each for each state-action pair under such a policy. In contrast, it is known that the dynamics in RMAB cannot converge to the stationary policy due to the fact that RMABs implement an index policy (see Section~\ref{sec:index}) to deal with the instantaneous activation constraint, and it is only provably in the asymptotic regime \cite{weber1990index,verloop2016asymptotically}. This render the approaches in  \citet{jin2020learning, rosenberg2019online} not applicable to ours, and necessitates different methods to construct the estimator as in \eqref{eq:unbias}.

Since we consider the bandit feedback, we further leverage the idea of implicit exploration as inspired by \citet{neu2015explore,jin2020learning} to further encourage exploration. Specifically, we further increase $\bar{r}_n^{t}(s,a)$ with a bonus term $\delta_n^t(s,a)$ to obtain a biased estimator $(\bar{r}_n^{t}(s,a)+\delta_n^t(s,a))\mathds{1}(\exists h, S_n^{t,h}=s, A_n^{t,h}=a)$. Since $r_n^t(s,a)\in[0,1]$ and to guarantee that this biased estimator is an overestimate, we further add the term $1-\mathds{1}(\exists h, S_n^{t,h}=s, A_n^{t,h}=a)$. Thus, our final adversarial reward estimator is $\hat{r}_n^{t}(s,a)$
\begin{align}\label{eq:reward_estimator}
    =&\min\Big((\bar{r}_n^{t}(s,a)+\delta_n^t(s,a)) \mathds{1}(\exists h, S_n^{t,h}=s, A_n^{t,h}=a), 1\Big) \nonumber\\
    &\qquad\qquad+ 1-\mathds{1}(\exists h, S_n^{t,h}=s, A_n^{t,h}=a).
\end{align}
Given the above constructions, it is clear that $\hat{r}_n^{t}(s,a)$ is a biased estimator upper-bounded by $1$ and is overestimating  $r_n^t(s,a), \forall s, a, n, t$ with high probability. Using overestimates for adversarial learning with bandit feedback can be viewed as an optimism principle to encourage exploration. This is beneficial for the regret characterization as the deterministic overestimation as in \citet{jin2020learning} to guarantee a tighter regret bound.

\begin{remark}
The reward estimator in stochastic RMAB \citep{xiong2022AAAI, xiong2022Neurips} is simply defined as the sample mean using all trajectories up to episode $t$, which cannot be applied to our adversarial RMAB. This is due to the fact that the rewards are assumed to be drawn from an unknown but fixed distribution across all episodes for stochastic RMAB, while rewards can change arbitrarily between episodes for adversarial RMAB. The bandit-feedback setting further differentiates our adversarial RMAB from classical stochastic ones. Finally, we note that \citet{jin2020learning} considered the bandit feedback for adversarial MDP, and also constructed an adversarial loss/reward estimator. Since the regret minimization problem in \citet{jin2020learning} can be reduced to an online linear optimization using stationary occupancy measure (see Remark~\ref{remark:OM}), the estimator can be constructed directly using the stationary occupancy measure.  Since there is no such stationary policy for our finite-horizon adversarial RMAB, this makes the estimator in \citet{jin2020learning} not directly applicable to ours, and necessitates different construction techniques as discussed above. 
\end{remark}

\begin{figure*}
    \centering
    \includegraphics[width=0.95\textwidth]{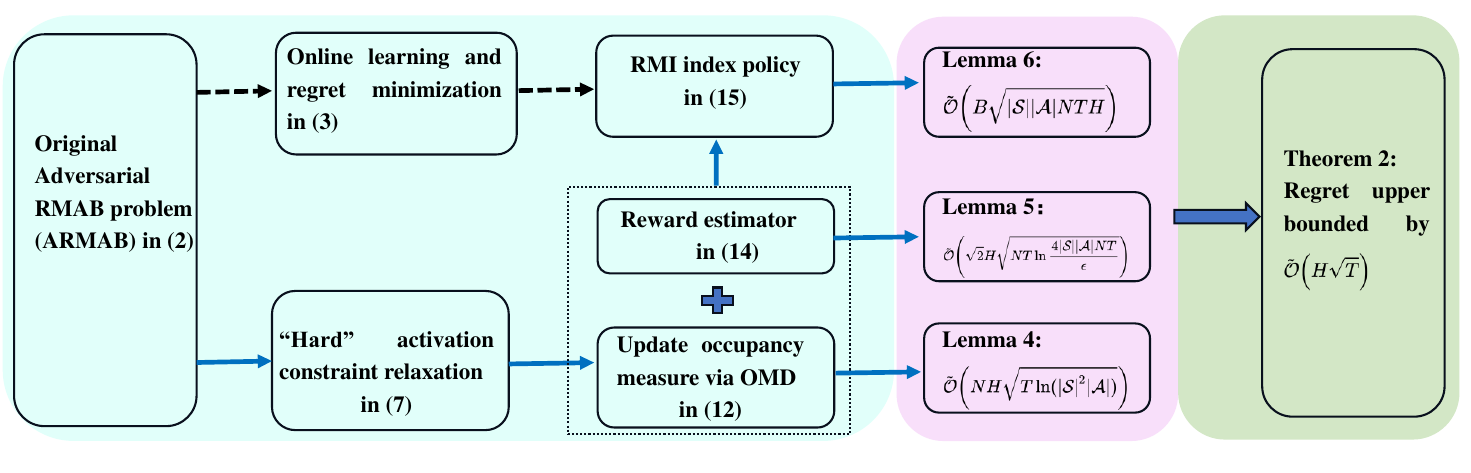}
    \caption{The workflow of \ucarmab and its regret analysis. The dashed arrows present the aimed procedures for solving the original problem in \eqref{eq:orginal_P}, and the solid arrows show the true procedures of \ucarmab. By relaxing the ``hard'' activation constraint as shown in \eqref{eq:relaxed_constraint},  \ucarmab updates occupancy measure via OMD as in \eqref{eq:UCB_extended} (see Section~\ref{sec:OMD}), combined with the adversarial reward estimator in \eqref{eq:reward_estimator} (see Section~\ref{sec:estimator}). Then, it establishes the RMI index policy in \eqref{eq:importance-index} (see Section~\ref{sec:index}). These correspond to the \textbf{three sources of learning regret}, i.e., regret due to (i) OMD online optimization (Lemma \ref{lem:term1}), (ii) bandit-feedback adversarial reward (Lemma \ref{lem:term2}), and (iii) the RMI index policy (Lemma \ref{lem:term3}).}

    \label{fig:Flowchart}
\end{figure*}

\subsection{Index Policy for \armab}\label{sec:index}
Unfortunately, the optimal solution $z^{t,\star}$ to~(\ref{eq:UCB_extended}) is not always feasible for our \armab due to the fact that the instantaneous activation constraint in \armab must be satisfied in each decision epoch rather than on the average sense as in~(\ref{eq:UCB_extended}). Inspired by \citet{xiong2022AAAI,xiong2022Neurips}, we further construct an index policy on top of $z^{t,\star}$ that is feasible for \armab. Specifically, since $\cA:=\{0,1\}$, i.e., an arm can be either active or passive at each decision epoch $h$, we define the index assigned to arm $n$ in state $S_n^{t, h}=s$ at decision epoch $h$ of episode $t$ to be as 
\begin{align}\label{eq:importance-index}
\mathcal{I}_n^t(s;h):=\frac{\sum_{s^\prime}z_n^{t,\star}(s,1,s^\prime;h)}{\sum_{b,s^\prime}z_n^{t,\star}(s,b,s^\prime;h)},\quad \forall n\in[N].
\end{align}
We call this the reward-maximizing index (RMI) since $\mathcal{I}_n^t(s;h)$ indicates the probability of activating arm $n$ in state $s$ at decision epoch $h$ of episode $t$ towards maximizing the total expected adversarial rewards. To this end, we rank all arms according to their indices in a non-increasing order, and activate the set of $B$ highest indexed arms at each decision epoch $h$. All remaining arms are kept passive at decision epoch $h$. We denote the resultant RMI policy as $\pi^{t}:=\{\pi_{n}^{t}, \forall n\}$, and execute it in episode $t$.

\begin{theorem}\label{thm:asympt_opt}
The RMI policy is asymptotically optimal when both the number of arms and instantaneous activation constraint are enlarged by $\rho$ times with $\rho\rightarrow\infty$.
\end{theorem}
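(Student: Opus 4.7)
The plan is to bound the total expected reward of the RMI policy from below by the optimal value of the relaxed problem \eqref{eq:relaxed_constraint} (in its occupancy-measure form), which in turn upper-bounds the hard-constrained \armab~\eqref{eq:orginal_P}. Under the $\rho$-scaling, the relaxed LP value scales exactly linearly in $\rho$: if $V^{\star}_1$ is the per-arm optimal relaxed value at $\rho=1$, then $V^{\star}_\rho=\rho V^{\star}_1$. It therefore suffices to show that the expected reward collected by RMI on the scaled instance is $\rho V^{\star}_1 - o(\rho)$, so that the approximation ratio tends to $1$ as $\rho\to\infty$.

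First, I would reinterpret \eqref{eq:importance-index}: the index $\mathcal{I}_n^t(s;h)$ is exactly the conditional probability, under the relaxed optimizer $z^{t,\star}$, of activating arm $n$ given $S_n^{t,h}=s$. Consider a fictitious ``randomized relaxed'' policy $\tilde\pi$ that, at every epoch, independently activates each arm $n$ in state $s$ with probability $\mathcal{I}_n^t(s;h)$. By construction, $\tilde\pi$ realizes the occupancy measure $\mu^{t,\star}$, so its expected reward on the scaled instance equals $\rho V^{\star}_1$ and its expected number of activations per epoch equals $\rho B$; however, it only satisfies the activation constraint on average, not pathwise, which is why \armab requires the additional ranking step.

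Next, I would couple RMI with $\tilde\pi$ epoch by epoch. Group the $\rho N$ arms by original type $n\in[N]$ and current state $s$; the empirical counts of each group concentrate around their expectations at rate $O(\sqrt{\rho})$ by Hoeffding's inequality. Conditional on these counts, the RMI top-$\rho B$ selection and the independent Bernoulli selection of $\tilde\pi$ differ only on a ``boundary'' set of arms whose index lies near the cutoff, whose cardinality is $O(\sqrt{\rho})$. Propagating this coupling through $H$ epochs via the transition recursion $\mu_n^{t,\star}(\cdot;h+1)=\sum_{s,a}\mu_n^{t,\star}(s,a;h)P_n(\cdot\mid s,a)$ and a one-step Lipschitz bound yields a per-episode reward gap of $O(H\sqrt{\rho})$, hence a total gap of $O(HT\sqrt{\rho})=o(\rho)$ for fixed $H,T$. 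Combining with the upper bound $\rho V^{\star}_1$ on the hard-constrained optimum gives the claimed asymptotic optimality.

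The main obstacle is the time-dependence of the occupancy measure. Classical asymptotic optimality for Whittle-type indices \citep{weber1990index,verloop2016asymptotically} relies on a stationary fluid limit, which is unavailable here because \armab is finite-horizon and the activation constraint binds at every epoch; this is the very reason Remark~\ref{remark:OM} is needed. The fix is to induct over $h=1,\ldots,H$, showing at each step that the empirical state distribution under RMI stays $O(1/\sqrt{\rho})$-close to $\mu^{t,\star}$, and then to translate this distributional closeness into a reward closeness using boundedness of $\hat r_n^t$. A secondary technical point is tie-breaking in the ranking \eqref{eq:importance-index}, which must be handled by a measurable rule; because exact ties occur on an event of probability $O(1/\sqrt{\rho})$, any consistent rule suffices and does not affect the asymptotic rate.
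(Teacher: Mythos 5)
Your overall strategy is the same as the paper's: sandwich the RMI reward between the hard-constrained optimum and the relaxed LP value (Lemma \ref{lem:upperbound}), observe that the LP value scales linearly in $\rho$, and then show by induction over the epochs $h=1,\dots,H$ that the empirical state--action fractions under RMI concentrate around the LP occupancy measure, so that $\frac{1}{\rho}R^{\pi^\star}(\rho B,\rho N)$ converges to $\sum_{n,h,s,a}\mu^\star_n(s,a;h)r_n(s,a)$. The paper asserts the convergence of $B_n(s;h)/\rho$ and $D_n(s,a;h)/\rho$ ``by induction'' with no detail; you supply concentration rates and the epoch-by-epoch propagation, which is more explicit than the paper's own argument, and you correctly identify the time-dependence of the occupancy measure as the obstacle to citing the stationary fluid-limit results directly.

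There is, however, one step that fails as stated: the claim that the top-$\rho B$ ranking and the independent Bernoulli activation with probabilities $\mathcal{I}_n(s;h)$ differ only on an $O(\sqrt{\rho})$ ``boundary'' set of arms near the cutoff. If at some epoch the LP optimizer assigns two distinct groups strictly fractional activation probabilities, say $0.6$ and $0.4$, then every arm of the higher-indexed group outranks every arm of the lower-indexed one; RMI activates essentially all of the first group and essentially none of the second, while the Bernoulli policy activates fractions $0.6$ and $0.4$ of each. The discrepancy is $\Theta(\rho)$ arms, not $O(\sqrt{\rho})$, and since activation changes both the instantaneous reward and the transition kernel, this error does not vanish after normalizing by $\rho$. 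The repair is to invoke the structure of a basic optimal solution of the LP: there is a single budget constraint per epoch, so a vertex solution has at most one $(n,s)$ group with $\xi^\star_n(s,1;h)\in(0,1)$ per epoch, all others being $0$ or $1$; the leftover budget after activating the $\xi^\star=1$ groups then equals $\xi^\star$ times the size of the unique fractional group, the cutoff falls inside that group, and your boundary-set coupling (with uniform tie-breaking within the cutoff group) goes through. The paper's proof also elides this point, so you have at least matched it in rigor, but without this LP-vertex structure your $o(\rho)$ gap bound does not follow.
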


\section{Analysis}\label{sec:analysis}
In this section, we bound the regret of our \ucarmab.

\subsection{Main Results}

\begin{theorem}\label{thm:regret}
With probability at least $1-3\epsilon$, the regret of \ucarmab with $\eta=\sqrt{\frac{\ln(|\cS|^2|\cA|)}{T}}$ satisfies 
\begin{align}\label{eq:regret_thm}
&\Delta(T)={\tilde{\mathcal{O}}}\Bigg(NH\sqrt{T\ln(|\cS|^2|\cA|)}\nonumber\\
&+H\sqrt{2NT\ln\frac{4|\cS||\cA|NT}{\epsilon}}\!+\!B\sqrt{|\cS||\cA|NTH}\Bigg).
\end{align}
\end{theorem}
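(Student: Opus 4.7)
The plan is to carry out the three-way regret decomposition summarised in Figure~\ref{fig:Flowchart}, bound each piece via its own lemma, and then take a union bound over the failure events. Let $\mu^{opt}$ denote the time-dependent occupancy measure associated with $\pi^{opt}$, and let $z^{t,\star}$ be the OMD output from~\eqref{eq:UCB_extended}. I would start from
\begin{align*}
\Delta(T) = \sum_{t=1}^{T} \langle r^t, \mu^{opt} - z^{t,\star}\rangle + \sum_{t=1}^{T} \bigl(\langle r^t, z^{t,\star}\rangle - R_t(\pi^t)\bigr),
\end{align*}
and then insert $\hat{r}^{t-1}$ into the first summand to split it further into the OMD regret against the fixed comparator $\mu^{opt}$ with respect to the biased rewards, plus two bias terms $\sum_t\langle r^t - \hat{r}^{t-1}, \mu^{opt}\rangle$ and $\sum_t\langle \hat{r}^{t-1} - r^t, z^{t,\star}\rangle$. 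These three pieces match sources (i)--(iii) in the flowchart.

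For source (i), I would run a standard OMD analysis using the unnormalised KL divergence~\eqref{eq:KL_divergence} as Bregman regulariser. Because $\cZ^t$ is convex and the projection $z^{t,\star} = \arg\min_{z\in\cZ^t} D(z\|\tilde{z}^t)$ obeys the generalised Pythagorean inequality, the per-episode regret telescopes to $\tfrac{1}{\eta} D(\mu^{opt}\|z^1) + \eta \sum_t \sum_{n,s,a,h} z_n^{t,\star}(s,a;h)\,\hat{r}_n^{t-1}(s,a)^2$. Since $\hat{r}\leq 1$ entrywise, $z^1$ is uniform over $|\cS|^2|\cA|$ entries per layer, and there are $NH$ layers of occupancy variables per episode, balancing with $\eta = \sqrt{\ln(|\cS|^2|\cA|)/T}$ yields the $NH\sqrt{T\ln(|\cS|^2|\cA|)}$ term; this is Lemma~\ref{lem:term1}.

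For source (ii), on the high-probability event $\{P_n \in \mathcal{P}_n^t\}$ from Section~\ref{sec:confidence-set}, the overestimation built into~\eqref{eq:reward_estimator} gives $\hat{r}_n^{t-1}(s,a) \geq r_n^{t-1}(s,a)$, so $\sum_t\langle r^t - \hat{r}^{t-1}, \mu^{opt}\rangle$ is non-positive up to a martingale correction. For the reverse direction I would apply a Freedman-type inequality to $\bar{r}_n^t$, exploiting that $\mathbb{E}[c_n^t(s,a)/H]$ matches $\mu_n^t(s,a;\cdot)$ summed over $h$ conditional on the executed policy, so the inverse-count weighting produces an approximately unbiased, bounded martingale increment. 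A union bound over $|\cS||\cA|N$ state-action-arm triples and $T$ episodes then delivers the $H\sqrt{2NT\ln(4|\cS||\cA|NT/\epsilon)}$ contribution, packaged as Lemma~\ref{lem:term2}.

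The hardest piece will be source (iii), captured by Lemma~\ref{lem:term3}: $z^{t,\star}$ is computed with the empirical transitions $\hat{P}^t$, whereas the realised trajectory under the RMI index policy $\pi^t$ follows the true $P_n$, and the deterministic top-$B$ rule deviates from the fractional activation probabilities $\mathcal{I}_n^t(s;h)$ in~\eqref{eq:importance-index}. My plan is to first bound the transition-model error via a UCRL2-style pigeonhole sum on $\delta_n^t(s,a)$ along realised trajectories, which contributes $\tilde{\cO}(B\sqrt{|\cS||\cA|NTH})$ once the $B$ active arms per epoch are accounted for. The gap between the randomised index rule and the deterministic top-$B$ selection would then be absorbed using the asymptotic optimality guarantee of Theorem~\ref{thm:asympt_opt}, treating the rounding deviation as a lower-order concentration term. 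The delicate issue will be the cross-arm coupling induced by the instantaneous budget: the counts $c_n^t(s,a)$ for different $n$ are not independent, so the pigeonhole sum must be applied arm-by-arm while the activation constraint is handled at the occupancy-measure level, not at the sample-path level. Summing the three lemmas and taking a union bound over the confidence-set failure ($\epsilon$), the reward-concentration failure ($\epsilon$), and the martingale deviation controlling the overestimation step ($\epsilon$) yields the claimed bound with probability at least $1-3\epsilon$.
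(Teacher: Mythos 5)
Your overall architecture matches the paper's: a three-source decomposition (OMD optimization error, reward-estimator bias, index-policy implementation error), a lemma per source, and a union bound over the confidence-set and concentration failures. Your comparator differs slightly — you run the OMD regret directly against $\mu^{opt}$, the occupancy measure of the hard-constrained optimum, whereas the paper compares against the optimum $\mu^*$ of the relaxed LP and invokes Lemma~\ref{lem:upperbound} to get $\sum_t\langle \mu^*, r^t\rangle \geq R(T,\pi^{opt})$ before decomposing. Both routes work, since $\mu^{opt}$ satisfies the relaxed budget constraint and lies in $\cap_t\cZ^t$ on the good event; your version simply folds Lemma~\ref{lem:upperbound} into the choice of comparator. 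Your treatments of sources (i) and (ii) track the paper's Lemmas~\ref{lem:term1} and~\ref{lem:term2} closely (telescoping via the generalized Pythagorean inequality for the unnormalized KL, then overestimation plus a concentration bound and a $\sum_t\langle z^t,\delta^t\rangle$ pigeonhole for the estimator bias).

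The genuine gap is in your plan for source (iii). You propose to absorb the discrepancy between the fractional activation probabilities $\mathcal{I}_n^t(s;h)$ and the deterministic top-$B$ selection ``using the asymptotic optimality guarantee of Theorem~\ref{thm:asympt_opt}, treating the rounding deviation as a lower-order concentration term.'' Theorem~\ref{thm:asympt_opt} is a purely asymptotic statement — the per-arm optimality gap vanishes only as the scaling parameter $\rho\to\infty$ — and carries no finite-$N$ rate, so it cannot be converted into a quantitative lower-order term inside a $\tilde{\cO}(B\sqrt{|\cS||\cA|NTH})$ bound for the actual system. The paper handles the top-$B$ rounding directly and non-asymptotically: Lemma~\ref{lemma:kappa} compares the sets of arms activated by the optimistic selection and by $\tilde{\pi}^t$, uses the confidence widths to show the per-epoch reward loss is at most $2B\delta^t$, and Lemma~\ref{lemma:regret-good-events} then sums $c_n^t(s,a)/\sqrt{C_n^t(s,a)}$ via Lemma~\ref{lemma:coverc} to obtain $\bigl(\sqrt{2\ln(4|\cS||\cA|NTH/\epsilon)}+2B\bigr)\sqrt{|\cS||\cA|NTH}$, with the failure-event contribution bounded separately by $B\sqrt{TH}$. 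You would need an explicit finite-sample argument of this type; as written, this step of your proof does not go through.
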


The regret in \eqref{eq:regret_thm} contains three terms. The first term is the regret due to the OMD online optimization for occupancy measure updates (Section~\ref{sec:OMD}). The second term represents the regret due to bandit-feedback of adversarial rewards (Section~\ref{sec:estimator}). The third term comes from the implementation of our RMI policy for \armab (to satisfy the instantaneous activation constraint,  Section~\ref{sec:index}). Clearly, the regret of \ucarmab is in the order of $\tilde{\mathcal{O}}(H\sqrt{T})$. This is the same as that for adversarial MDP \citep{rosenberg2019online, jin2020learning} and CMDP \citep{qiu2020upper}. However, none of them consider an instantaneous activation constraint as in our \armab, which requires us to design a low-complexity index policy, and explicitly characterize its impact on the regret.  Although our regret bound exhibits a gap (i.e., $\sqrt{H}$ times larger) to that of stochastic RMAB \citep{xiong2022AAAI, xiong2022Neurips}, to the best of our knowledge, our result is the first to achieve $\tilde{\mathcal{O}}(\sqrt{T})$ regret. Recall that comparing to the stochastic RMAB, we are considering a harder problem with a challenging setting, i.e., the rewards can change arbitrarily between episodes rather than following a fixed distribution, and with bandit feedback where only the adversarial reward of visited state-action pairs are revealed to the DM. This challenging setting thus requires us to design a novel adversarial reward estimator coupled with the OMD online optimization procedure.

\subsection{Proof Sketch}
As discussed earlier, \armab~(\ref{eq:orginal_P}) is computationally intractable, and hence we cannot directly solve it and transform the regret minimization in~(\ref{eq:regret}) into an online linear optimization problem. This makes existing regret analysis for adversarial MDP  \citep{rosenberg2019online, jin2020learning} and CMDP \citep{qiu2020upper} not directly applicable to ours, and necessitates different proof techniques.  To address this challenge and inspired by stochastic RMAB \citep{xiong2022AAAI,xiong2022Neurips}, we instead work on the relaxed problem~(\ref{eq:relaxed_constraint}), which achieves a provably upper bound on the adversarial rewards of \armab~(\ref{eq:orginal_P}).  In other words, the occupancy measure-based solutions to~(\ref{eq:relaxed_constraint}) provide an upper bound of the optimal adversarial reward $R( T, \pi^{opt})$ achieved by the offline optimal policy $\pi^{opt}$ of \armab \eqref{eq:orginal_P}. We state this result formally in the following lemma. 

\begin{lemma}\label{lem:upperbound}
There exists a set of occupancy measures $\mu_{\pi}^*:=\{\mu^*_n(s,a;h), \forall n\in[N], s\in\cS, a\in\cA, h\in[h]\}$ under policy $\pi^*$  that optimally solve the equivalent LP of the relaxed problem~(\ref{eq:relaxed_constraint}). In addition, $\sum_{t=1}^T \langle{\mu_\pi^*}, r^t\rangle$ is no less than $R( T, \pi^{opt})$, where $r^t:=\{r_n^t(s,a), \forall n, s, a\}$. 
\end{lemma}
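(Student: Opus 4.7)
The plan is to (i) recast the relaxed problem~\eqref{eq:relaxed_constraint} as a linear program (LP) over per-arm occupancy measures, (ii) observe that the offline optimal $\pi^{opt}$ of \armab is feasible for this LP, and (iii) conclude by LP optimality. For any joint policy $\pi$, define the per-arm marginal occupancy $\mu_n^\pi(s,a;h):=\mathbb{P}_\pi(S_n^h=s,A_n^h=a)$. Because arm $n$'s next state is drawn from $P_n(\cdot\mid S_n^h,A_n^h)$ \emph{independently of all other arms}, the marginals satisfy (a) normalization $\sum_{s,a}\mu_n(s,a;h)=1$, (b) the flow balance $\sum_a\mu_n(s,a;h)=\sum_{s',a'}\mu_n(s',a';h-1)P_n(s\mid s',a')$ with initial condition fixed by $S_n^0$, (c) the reward identity $R_t(\pi)=\langle\mu^\pi,r^t\rangle$, and (d) the relaxed activation constraint $\mathbb{E}_\pi[\sum_n A_n^{t,h}]=\sum_{n,s}\mu_n(s,1;h)\le B$. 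Since $\pi$ is a single fixed policy across episodes and each arm starts from the same $S_n^0$, $\mu^\pi$ does not depend on $t$, so $\sum_t R_t(\pi)=\langle\mu^\pi,\sum_t r^t\rangle$. Collecting (a),(b),(d) defines a nonempty compact polytope $\mathcal{F}$, and the relaxed problem reduces to $\max_{\mu\in\mathcal{F}}\sum_t\langle\mu,r^t\rangle$.

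The converse reduction, needed to exhibit $\mu_\pi^*$ together with a realizing $\pi^*$, proceeds as usual: any $\mu\in\mathcal{F}$ is induced by the per-arm Markov policy $\pi_n^\mu(a\mid s,h):=\mu_n(s,a;h)/\sum_{a'}\mu_n(s,a';h)$ (arbitrary tie-breaking when the denominator vanishes), as an induction on $h$ using flow balance shows. Hence the LP attains an optimizer $\mu^*\in\mathcal{F}$, and setting $\mu_\pi^*:=\mu^*$ and $\pi^*:=\pi^{\mu^*}$ gives a policy that optimally solves the relaxed problem. Since $\pi^{opt}$ satisfies $\sum_n A_n^{t,h}\le B$ almost surely in~\eqref{eq:orginal_P}, the same holds in expectation and therefore $\mu^{\pi^{opt}}\in\mathcal{F}$; LP optimality then yields
\[R(T,\pi^{opt})=\sum_t\langle\mu^{\pi^{opt}},r^t\rangle\le\sum_t\langle\mu^*,r^t\rangle=\sum_t\langle\mu_\pi^*,r^t\rangle,\]
which is the claimed inequality.

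The main subtlety is that the joint policy $\pi$ may couple the arms through the activation constraint, yet the occupancy measure is defined marginally per arm. Identifying the per-arm marginal dynamics with a stand-alone single-arm MDP driven by $\tilde\pi_n(a\mid s,h):=\mathbb{P}_\pi(A_n^h=a\mid S_n^h=s)$ is what legitimizes the occupancy-measure LP; this follows from the cross-arm conditional independence of transitions and is the standard Whittle relaxation. Spelling it out explicitly matters here because the rest of the regret analysis (Lemmas~\ref{lem:term1}--\ref{lem:term3}) takes $\sum_t\langle\mu_\pi^*,r^t\rangle$ as the benchmark rather than $R(T,\pi^{opt})$ itself, so the two must be linked by exactly this inequality.
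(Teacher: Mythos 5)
Your proposal is correct and follows essentially the same route as the paper's proof: reformulate the relaxed problem~\eqref{eq:relaxed_constraint} as an occupancy-measure LP (the paper's Lemma~\ref{prop:lp}), note that the feasible set of the original problem~\eqref{eq:orginal_P} is contained in that of the relaxation since the hard constraint implies the expected one, and conclude by LP optimality. Your write-up is somewhat more careful than the paper's on the per-arm marginalization and on realizing an LP optimizer by an induced Markov policy, but these are elaborations of the same argument rather than a different approach.
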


The proof of Theorem~\ref{thm:regret} then starts with a regret decomposition. Unlike stochastic RMAB \citep{xiong2022AAAI,xiong2022Neurips}, we cannot simply decompose the regret over episodes. This is because in our \armab, the adversarial rewards can change arbitrarily between episodes, and the offline optimal policy $\pi^{opt}$ is only optimal when it is defined over all $T$ episodes and it is not guaranteed to be optimal in each episode (see Section~\ref{sec:model}). As a result, the regret analysis for stochastic RMAB \citep{xiong2022AAAI,xiong2022Neurips} is not applicable to ours. To address this challenge, we instead decompose the regret in terms of its coming sources.  Specifically, we first need to characterize the regret due to solving the relaxed problem with OMD in terms of occupancy measure (Section~\ref{sec:OMD}). We then bound the regret due to the biased overestimated reward estimator (Section~\ref{sec:estimator}). Finally, we bound the regret of implementing our RMI policy (Section~\ref{sec:index}). We visualize these three steps in Figure \ref{fig:Flowchart} and provide a proof sketch herein. Combining them together gives rise to our main theoretical results in Theorem~\ref{thm:regret}.

\textbf{Regret decomposition.} We formally state our regret decomposition in the following lemma.  
\begin{lemma}\label{lemma:regret_decomp}
{Denote $\pi^{*}$ as the optimal policy  to equivalent LP of the relaxed problem~(\ref{eq:relaxed_constraint}) and $\{\tilde{\pi}^t, \forall t\}$ are polices executed on the MDPs at each episode with transition kernels selected from the confidence set defined in \eqref{eq:confidence_ball}.}
Let $R(T, \pi^{*}, \{\hat{r}_n^t, \forall n, t\})$ be the total adversarial rewards achieved by policy $\pi^*$ with overestimated rewards $\{\hat{r}_n^t, \forall n, t\}$. Denote 
$R(T, \{\tilde{\pi}^t, \forall t\}, \{\hat{r}_n^t, \forall n, t\})$ and  $R( T, \{\tilde{\pi}^t, \forall t\}, \{r_n^t, \forall n, t\})$ as the total adversarial rewards achieved by policy $\{\tilde{\pi}^t, \forall t\}$ with overestimated rewards $\{\hat{r}_n^t, \forall n, t\}$ and true reward $\{{r}_n^t, \forall n, t\}$, respectively.
The regret in \eqref{eq:regret} can be upper bounded as
\begin{align}\label{eq:lemma2}
    &\Delta(T)\!\leq {\underset{Term_0\leq 0}{\underbrace{R(T, \pi^{opt})-R( T, \pi^{*}, \{\hat{r}_n^t, \forall n, t\}), \forall n, t\})}}}\nonumber\allowdisplaybreaks\\
    &+\underset{Term_1}{\underbrace{R( T, \pi^{*}, \{\hat{r}_n^t, \forall n, t\})\!-\!R( T, \{\tilde{\pi}^t, \forall t\}, \{\hat{r}_n^t, \forall n, t\})}}\nonumber\allowdisplaybreaks\\
    &+\!\underset{Term_2}{\underbrace{R( T, \{\tilde{\pi}^t, \forall t\}, \{\hat{r}_n^t, \forall n,t\})\!-\!R( T, \{\tilde{\pi}^t, \forall t\}, \{{r}_n^t, \forall n,t\})}}\nonumber\allowdisplaybreaks\\
    \vspace{-0.2cm}
    &+\underset{Term_3}{\underbrace{R( T, \{\tilde{\pi}^t, \forall t\}, \{{r}_n^t, \forall n,t\})-\sum_{t=1}^T R_t(\pi^t)}}. \allowdisplaybreaks
\end{align}
\vspace{-0.5cm}
\end{lemma}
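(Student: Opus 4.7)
The plan is to establish the decomposition by a telescoping identity, and then to argue the sign claim $Term_0 \le 0$ as the only non-trivial step. Concretely, starting from the definition $\Delta(T) = R(T, \pi^{opt}) - \sum_{t=1}^T R_t(\pi^t)$, I would add and subtract, in order, the three intermediate quantities $R(T, \pi^{*}, \{\hat{r}_n^t\})$, $R(T, \{\tilde{\pi}^t\}, \{\hat{r}_n^t\})$, and $R(T, \{\tilde{\pi}^t\}, \{r_n^t\})$. The four consecutive differences are exactly $Term_0$, $Term_1$, $Term_2$, and $Term_3$, so the equality in \eqref{eq:lemma2} is a pure identity once these substitutions are made; no inequality is involved for this step.

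The substantive work is to verify $Term_0 \le 0$, which I would do on the high-probability event that the confidence sets contain the true transitions (the event of Lemma 4.1 in Section~\ref{sec:confidence-set}) intersected with the event that $\hat r_n^t$ dominates $r_n^t$ pointwise. I would chain two inequalities. First, Lemma~\ref{lem:upperbound} gives $R(T, \pi^{opt}) \le \sum_{t=1}^T \langle \mu_\pi^*, r^t\rangle = R(T, \pi^{*}, \{r_n^t\})$, because the feasible set of \eqref{eq:relaxed_constraint} contains that of \armab~\eqref{eq:orginal_P} (the hard instantaneous activation constraint implies its expectation version), so $\pi^{opt}$ is feasible for the relaxed problem and the occupancy measure of $\pi^{*}$ is its optimal solution. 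Second, by the construction in \eqref{eq:reward_estimator}, $\hat{r}_n^t(s,a) \ge r_n^t(s,a)$ pointwise on the good event: when $(s,a)$ is visited at some $h$ in episode $t$, the implicit-exploration bonus $\delta_n^t(s,a)$ together with the clip at $1$ dominates $r_n^t(s,a)\in[0,1]$; when $(s,a)$ is not visited, the added term $1-\mathds{1}(\cdot)=1$ alone dominates $r_n^t(s,a)$. Since the occupancy measures induced by $\pi^{*}$ are nonnegative, integrating the pointwise domination yields $R(T, \pi^{*}, \{r_n^t\}) \le R(T, \pi^{*}, \{\hat r_n^t\})$. Combining the two inequalities gives $Term_0 \le 0$.

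The main obstacle will be the second inequality; the telescoping is pure bookkeeping and the LP-relaxation inequality is immediate from Lemma~\ref{lem:upperbound}. For the overestimate claim, the case when $(s,a)$ is visited is the delicate one, because the denominator $c_n^t(s,a)/H$ in $\bar{r}_n^t(s,a)$ is a random quantity that can be small and the clip at $1$ could in principle cut off above $r_n^t(s,a)$ only when $r_n^t(s,a)=1$ is itself extremal; so one must carefully check that the $\min(\cdot,1)$ combined with the $+1-\mathds{1}(\cdot)$ correction never produces a value strictly below $r_n^t(s,a)$. I would invoke the overestimate property argued at the end of Section~\ref{sec:estimator} rather than re-derive it, and simply note that this is the one place where the high-probability qualifier in Theorem~\ref{thm:regret} enters this lemma. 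The remaining pieces $Term_1$, $Term_2$, $Term_3$ are then left to Lemmas~\ref{lem:term1}, \ref{lem:term2}, \ref{lem:term3} as advertised in Figure~\ref{fig:Flowchart}.
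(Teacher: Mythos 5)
Your proof is correct and follows essentially the same route as the paper: a telescoping identity through the three intermediate quantities, with $Term_0\le 0$ obtained from the LP-relaxation upper bound (Lemma~\ref{lem:upperbound}) combined with the pointwise overestimation property of $\hat{r}_n^t$ from \eqref{eq:reward_estimator}. The only (immaterial) difference is the order of the two inequalities inside $Term_0$: the paper first replaces $r$ by $\hat{r}$ under $\pi^{opt}$ and then invokes Lemma~\ref{lem:upperbound} under the estimated rewards, whereas you first pass to $\pi^{*}$ under the true rewards --- which matches the literal statement of Lemma~\ref{lem:upperbound} more closely --- and then integrate the domination $\hat{r}_n^t\ge r_n^t$ against the nonnegative occupancy measure of $\pi^{*}$.
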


 Specifically, $Term_0\leq 0$ holds  due to Lemma \ref{lem:upperbound} and the overestimation of adversarial rewards.  $Term_1$ is the performance gap between the optimal policy $\pi^*$ of the relaxed problem~\eqref{eq:relaxed_constraint} and the OMD updated policy $\{\tilde{\pi}^{t}, \forall t\}$ under the plausible MDP selected from the confidence set in \eqref{eq:confidence_ball}. Since we leverage the OMD to update the occupancy measures, to bound $Term_1$, the key is to connect items in $Term_1$ with the occupancy measure, which leverages the result in Lemma \ref{lem:upperbound}. $Term_2$ is the regret due to the overestimation of the adversarial reward estimator, i.e., $\hat{r}_n^t\geq r_n^t, \forall n, t$.  $Term_3$ is the performance gap between the policy $\{\tilde{\pi}^t, \forall t\}$ in the optimistic plausible MDP and the learned RMI index policy $\{\pi^t, \forall t\}$ for the true MDP. We bound it based on the count of visits of each state-action pair. 

\textbf{Bounding $Term_1$.} We first bound $Term_1$, i.e., the regret due to OMD online optimization.

\begin{lemma}\label{lem:term1}
With probability $1-2\epsilon$, 
we have $Term_1\leq \frac{NH\ln(|\cS|^2|\cA|)}{\eta}+\eta{NH(T+1)}.$
\end{lemma}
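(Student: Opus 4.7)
The plan is to interpret $Term_1$ as the regret of an online linear optimization on the state-action-state occupancy polytope, and then invoke the classical mirror-descent guarantee for the KL-Bregman divergence. First I would use Lemma~\ref{lem:upperbound} and the definition of the state-action-state occupancy measure to linearize both sides of $Term_1$, obtaining
\begin{align*}
Term_1 = \sum_{t=1}^T \sum_{n=1}^N \langle z_n^{*} - z_n^{t,\star},\, \hat{r}_n^t \rangle,
\end{align*}
where $z_n^{*}$ is the occupancy measure of the comparator policy $\pi^{*}$ evaluated on the true transitions, and $z_n^{t,\star}$ is the OMD iterate produced by \eqref{eq:UCB_extended}. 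Before applying any OMD bound, I would condition on the good event $\{P_n \in \cP_n^t : \forall n, t\}$, which holds with probability at least $1-2\epsilon$ by the confidence-set lemma, so that $z_n^{*}$ is simultaneously feasible in each time-varying set $\cZ^t$ and therefore a legitimate comparator throughout.

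Next I would recognize that \eqref{eq:UCB_extended} is precisely the exponential-weights update $\tilde{z}_n^{t+1} = z_n^{t,\star}\odot\exp(\eta \hat{r}_n^t)$ followed by a KL-Bregman projection onto $\cZ^{t+1}$. Combining the generalized Pythagorean theorem $D(z_n^{*}\|z_n^{t+1,\star}) \leq D(z_n^{*}\|\tilde{z}_n^{t+1})$ with the elementary inequality $1 - e^x \geq -x - x^2$ for $x \in [0,1]$ (applicable since $\eta \hat{r}_n^t \leq 1$) would yield the per-step relation
\begin{align*}
\eta \langle z_n^{*} - z_n^{t,\star},\, \hat{r}_n^t \rangle \leq D(z_n^{*}\|z_n^{t,\star}) - D(z_n^{*}\|z_n^{t+1,\star}) + \eta^2 \sum_{h,s,a,s'} z_n^{t,\star}(s,a,s';h)\,\hat{r}_n^t(s,a)^2.
\end{align*}
Telescoping over $t \in [T]$ would then isolate $D(z_n^{*}\|z_n^{1,\star})/\eta$ plus an accumulated stability term.

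Finally, I would bound the two pieces separately. Initializing $z_n^{1,\star}(s,a,s';h) = 1/(|\cS|^2|\cA|)$ uniformly, and using $\sum_{s,a,s'} z_n^{*}(s,a,s';h) = 1$ for every layer (hence total mass $H$), one obtains $D(z_n^{*}\|z_n^{1,\star}) \leq H \ln(|\cS|^2|\cA|)$. For the stability term, the truncation $\hat{r}_n^t \leq 1$ in \eqref{eq:reward_estimator} gives $(\hat{r}_n^t)^2 \leq \hat{r}_n^t$, so $\sum_{h,s,a,s'} z_n^{t,\star}(s,a,s';h)\,\hat{r}_n^t(s,a)^2 \leq H$. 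Dividing by $\eta$, summing over $n \in [N]$ and $t \in [T]$, and collecting a boundary term from the telescoping at $t=T+1$ delivers the claimed $\tfrac{NH\ln(|\cS|^2|\cA|)}{\eta} + \eta NH(T+1)$ bound.

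The main obstacle I anticipate is in the mirror-descent step itself: because the feasible set $\cZ^t$ shrinks over time through the confidence sets, a textbook OMD guarantee (which assumes a fixed constraint set) does not directly apply. The resolution hinges on the high-probability event from Step~2 to ensure $z^{*}$ stays feasible in every $\cZ^t$, after which the Bregman Pythagorean theorem absorbs the cost of each projection. A secondary but essential subtlety is that the stability constant $\eta^2$ in the per-step inequality is only available because \eqref{eq:reward_estimator} explicitly truncates $\hat{r}_n^t$ at $1$ and the chosen $\eta$ satisfies $\eta \leq 1$; without either, the quadratic Taylor bound $e^x \leq 1 + x + x^2$ on $[0,1]$ would fail and the accumulated term would blow up.
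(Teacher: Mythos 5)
Your proposal is correct and follows essentially the same route as the paper: linearize $Term_1$ as $\sum_{t,n}\langle \mu_n^* - z_n^t,\hat r_n^t\rangle$, condition on the confidence-set event (probability $1-2\epsilon$) so the comparator is feasible in every $\cZ^t$, and then run the standard OMD analysis (exponential-weights step, KL-Bregman projection with the generalized Pythagorean theorem, telescoping, $D(\mu^*\|z^1)\le H\ln(|\cS|^2|\cA|)$ from the uniform initialization, and the $e^x-1-x\le x^2$ stability bound enabled by $\eta\hat r_n^t\le 1$). This matches the paper's Lemma on the OMD update and its application almost step for step, so no further commentary is needed.
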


Bounding $Term_1$ is equivalent to bound the inner product $\sum_{t=1}^T\sum_{n=1}^N\langle \mu_n^* - z_n^t, \hat{r}_n^t \rangle$, with $z_n^t, \forall n \in[N]$ being controlled by the OMD updates, a proper $\eta$ (as given in Theorem~\ref{thm:regret}) is required to guarantee $\tilde{\mathcal{O}}(H\sqrt{T})$ regret.

\textbf{Bounding $Term_2$.} We then bound $Term_2$, i.e., the regret due to bandit-feedback adversarial reward.

\begin{lemma}\label{lem:term2}
With probability $1-3\epsilon$, we have $Term_2\leq H\sqrt{2NT\ln\frac{4|\cS||\cA|NT}{\epsilon}}+HN\sqrt{NT\ln \frac{1}{\epsilon}}.$
\end{lemma}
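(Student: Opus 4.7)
The plan is to rewrite
$$Term_2 = \sum_{t=1}^T \sum_{n=1}^N \sum_{h=1}^H \sum_{s,a} \mu_n^{\tilde{\pi}^t}(s,a;h)\,\bigl(\hat{r}_n^t(s,a) - r_n^t(s,a)\bigr),$$
where $\mu_n^{\tilde{\pi}^t}$ is the occupancy measure induced by $\tilde{\pi}^t$ on the optimistic MDP drawn from $\mathcal{P}_n^t$. Following the three-part structure of the estimator in \eqref{eq:reward_estimator}, I would decompose the bias $\hat{r}_n^t(s,a)-r_n^t(s,a)$ into (i) the inverse-importance-weighted deviation $\bar{r}_n^t(s,a)\mathds{1}(\text{visited})-r_n^t(s,a)$, (ii) the exploration bonus $\delta_n^t(s,a)\mathds{1}(\text{visited})$, and (iii) the overestimation correction $1-\mathds{1}(\text{visited})$, treating the outer $\min$-clipping as a harmless upper bound throughout.

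For piece (i), I would form the per-episode random variable $X_t := \sum_{n,h,s,a}\mu_n^{\tilde{\pi}^t}(s,a;h)(\bar{r}_n^t(s,a)\mathds{1}(\text{visited})-r_n^t(s,a))$, noting that $\mu_n^{\tilde{\pi}^t}$ is $\mathcal{F}_{t-1}$-measurable (it is fixed before the episode is rolled out), while the conditional mean of $\bar{r}_n^t(s,a)\mathds{1}(\text{visited})$ matches $r_n^t(s,a)$ by the (approximate) unbiasedness noted in Section~\ref{sec:estimator}. Each $X_t$ is bounded by $O(H)$ uniformly, so Azuma--Hoeffding applied to the martingale $\{X_t\}_{t=1}^T$, together with a union bound over the $N|\mathcal{S}||\mathcal{A}|$ state-action-arm triples, yields the first summand $H\sqrt{2NT\ln(4|\mathcal{S}||\mathcal{A}|NT/\epsilon)}$, which consumes one of the $3\epsilon$ failure budgets; the other two come from the transition confidence sets of Section~\ref{sec:confidence-set} and from the visitation concentration used below.

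For piece (ii), I use $\delta_n^t(s,a)=O\bigl(\sqrt{\log(\cdot)/C_n^{t-1}(s,a)}\bigr)$ and a Cauchy--Schwarz argument together with the standard bound $\sum_{k=1}^K 1/\sqrt{k}\le 2\sqrt{K}$, first converting $\sum_t \mu_n^{\tilde{\pi}^t}(s,a;h)$ into realized visit counts (up to a controlled concentration correction) and then summing across arms and state-action pairs; this contributes the second summand $HN\sqrt{NT\ln(1/\epsilon)}$. Piece (iii) is non-negative and bounded by $\sum_{t,n,h,s,a}\mu_n^{\tilde{\pi}^t}(s,a;h)(1-\mathds{1}(\text{visited}))$, which I would absorb into the same order using a Hoeffding bound on the visitation indicator against its expected count, exploiting the fact that pairs with appreciable occupancy are visited with high probability over $H$ steps.

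The main obstacle is the decoupling between the two sides of the inner product: $\mu_n^{\tilde{\pi}^t}$ is computed under an optimistic kernel selected from $\mathcal{P}_n^t$, whereas the trajectory used to construct $\hat{r}_n^t$ is generated by the actual RMI policy $\pi^t$ on the true kernel $P_n$. Verifying that this policy/kernel mismatch does not corrupt the martingale structure required for piece (i), and carefully handling the $\min$-clip in \eqref{eq:reward_estimator} so that the conditional-mean computation produces only a small (and non-positive, by the overestimation design) residual, are the two delicate technical steps; the remainder reduces to standard adversarial-MDP concentration bookkeeping.
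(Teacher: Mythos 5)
Your proposal follows essentially the same route as the paper's proof: split $\hat{r}_n^t - r_n^t$ into a mean-zero fluctuation controlled by a Hoeffding/Azuma martingale bound (which in the paper's accounting produces the $HN\sqrt{NT\ln(1/\epsilon)}$ summand) and the exploration-bonus bias $\delta_n^t$ controlled by Cauchy--Schwarz plus the visit-count pigeonhole $\sum_{n,(s,a)}C_n^T(s,a)\le NT$ (which produces the $H\sqrt{2NT\ln\frac{4|\cS||\cA|NT}{\epsilon}}$ summand) --- you merely attribute the two summands to the opposite pieces, which is pure bookkeeping. Your explicit isolation of the $1-\mathds{1}(\text{visited})$ correction and of the optimistic-kernel/true-trajectory mismatch is in fact more careful than the paper, which absorbs both into an asserted unbiasedness of $\bar{r}_n^t(s,a)\mathds{1}(\cdot)$ and the step $\mathbb{E}[\hat{r}_n^t]-r_n^t\le\delta_n^t$ without further justification.
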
\vspace{-0.2cm}
Bounding $Term_2$ requires us to bound the inner product
$\sum_{t=1}^T\sum_{n=1}^N\langle  z_n^t, \hat{r}_n^t-r_n^t \rangle$, which is dominated by the gap of $\hat{r}_n^t-r_n^t$. An overestimation can guarantee that $\langle  z_n^t, \hat{r}_n^t-r_n^t \rangle\geq 0, \forall n\in[N], t\in[T].$

\textbf{Bounding $Term_3$.} Finally, we bound $Term_3$, i.e., the regret due to RMI index policy. 
 
\begin{lemma}\label{lem:term3}
With probability $1-2\epsilon$,  we have $Term_3\leq\Big(\sqrt{2\ln\frac{4|\cS||\cA|NTH}{\epsilon}}+2B\Big)\sqrt{|\cS||\cA|NTH}.$
\end{lemma}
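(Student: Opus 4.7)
The plan is to decompose $Term_3$ into two separate contributions. The first captures the \emph{model estimation error}: both $\tilde{\pi}^t$ and the RMI policy $\pi^t$ are derived from the plausible MDP whose kernel lies in the confidence set $\cP_n^t$ of~\eqref{eq:confidence_ball}, but $\pi^t$ is rolled out on the true MDP. The second captures the \emph{relaxation cost}: $\tilde{\pi}^t$ satisfies the activation constraint only in expectation (as in~\eqref{eq:relaxed_constraint}), whereas $\pi^t$ enforces it in every decision epoch by ranking the activation indices $\cI_n^t(s;h)$ defined in~\eqref{eq:importance-index}. Concretely, introducing the intermediate value of $\pi^t$ \emph{evaluated on the optimistic transitions} implicitly chosen by the extended LP~\eqref{eq:UCB_extended}, I would write $Term_3 = [\text{reward of }\tilde{\pi}^t\text{ on plausible MDP} - \text{reward of }\pi^t\text{ on plausible MDP}] + [\text{reward of }\pi^t\text{ on plausible MDP} - \text{reward of }\pi^t\text{ on true MDP}]$, giving a policy-gap sum plus a model-mismatch sum to be bounded separately.

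For the model-mismatch sum, I would follow a standard UCRL-style simulation-lemma argument: the per-epoch difference in expected reward between the plausible and the true MDP unrolls through dynamic programming into a telescoping sum of $\ell_1$ kernel distances $\|\tilde{P}_n^t(\cdot|s,a) - P_n(\cdot|s,a)\|_1 \le 2\delta_n^t(s,a)$ evaluated along the sampled trajectory. Summing $\delta_n^t(S_n^{t,h},A_n^{t,h})$ across all $NTH$ arm-epoch pairs and invoking the pigeonhole inequality $\sum 1/\sqrt{\max\{C_n^{t-1}(s,a),1\}} \le O(\sqrt{|\cS||\cA|NTH})$ produces the $\sqrt{2\ln(4|\cS||\cA|NTH/\epsilon)}\cdot\sqrt{|\cS||\cA|NTH}$ contribution, on the high-probability event that the true kernel lies in each $\cP_n^t$ (from the confidence-set lemma).

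For the policy-gap sum, the key observation from~\eqref{eq:importance-index} is that $\cI_n^t(s;h)$ is precisely the conditional probability that a randomized policy consistent with $z^{t,\star}$ activates arm $n$ in state $s$ at epoch $h$, and the first constraint of~\eqref{eq:UCB_extended} enforces $\sum_n \cI_n^t(S_n^{t,h};h) \le B$ in expectation. The RMI rule instead deterministically activates the top-$B$ arms under this ranking, so at each epoch at most $2B$ arms can disagree with the randomized policy in their activation (up to $B$ ``promoted'' into the top-$B$ set and up to $B$ ``demoted''). Since $r_n^t\in[0,1]$, the instantaneous reward gap per epoch is at most $2B$; coupling this with an Azuma--Hoeffding concentration on the empirical state-action counts relative to their $z^{t,\star}$-expectations, together with the same pigeonhole sum of confidence widths already used above, delivers the $2B\sqrt{|\cS||\cA|NTH}$ contribution.

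The hardest part will be the policy-gap argument: the RMI policy is a deterministic function of the random OMD iterate $z^{t,\star}$, which in turn depends on the entire past trajectory through the counts $C_n^{t-1}(s,a)$, so the ``expected activations equal $B$'' identity must be formulated carefully as a conditional expectation given $\cF_{\tau_t-1}$ before the martingale concentration can be applied. A related subtlety is that the optimistic transition used to evaluate $\tilde{\pi}^t$ is itself history-dependent and chosen by~\eqref{eq:UCB_extended}, so the simulation lemma must be invoked on the good event $\{P_n\in\cP_n^t\ \forall n,t\}$ of the earlier confidence-set lemma to guarantee that the true kernel remains feasible for that LP and that the plausible-MDP value genuinely upper-bounds the true-MDP value of the same policy.
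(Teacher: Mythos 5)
Your overall architecture (split on the good event $\{P_n\in\cP_n^t\}$, optimism, and a pigeonhole sum over the counts $C_n^t(s,a)$) matches the paper's, but the step that actually makes $Term_3$ scale as $\sqrt{T}$ is missing from your policy-gap argument. You bound the per-epoch disagreement between the randomized policy induced by $z^{t,\star}$ and the top-$B$ RMI selection by ``at most $2B$ arms disagree, each reward is in $[0,1]$, so the instantaneous gap is at most $2B$.'' That bound is deterministic and sums to $2BTH$, which is linear in $T$; neither Azuma--Hoeffding (which only controls fluctuations around a possibly order-$2B$ conditional mean) nor the pigeonhole sum of confidence widths can convert it into $2B\sqrt{|\cS||\cA|NTH}$, because nothing in your argument ties the \emph{size} of the reward loss per swapped arm to a quantity that shrinks with the visit counts. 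The mechanism the paper uses (its Lemma~\ref{lemma:kappa}) is an exchange argument driven by optimism: the index policy activates the arms with the highest \emph{optimistic} values, so for any arm demoted out of the top-$B$ set and any arm promoted into it, the chain $r+2\delta \ge \tilde r \ge \tilde r' \ge r'$ shows the true-reward loss per swapped arm is at most $2\delta_n^t(s,a)$, not $1$. Only then does the per-epoch gap become $2B\delta^t = \mathcal{O}(B/\sqrt{tH})$, and only then does the pigeonhole bound $\sum_{t,n,(s,a)} c_n^t(s,a)/\sqrt{C_n^t(s,a)} \le (\sqrt2+1)\sqrt{|\cS||\cA|NTH}$ deliver the $2B\sqrt{|\cS||\cA|NTH}$ term. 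Without this link between the swap count $2B$ and the confidence width, the bound does not follow.

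A secondary issue is the model-mismatch half. Telescoping a standard simulation lemma through $H$ steps multiplies the accumulated $\ell_1$ kernel errors by the scale of the value function (up to $H$, or $NH$ across arms), and typically also picks up a $\sqrt{|\cS|}$ from converting entrywise confidence widths to $\ell_1$ distance; your sketch would therefore produce something like $H\sqrt{|\cS|}\cdot\sqrt{2\ln(4|\cS||\cA|NTH/\epsilon)}\sqrt{|\cS||\cA|NTH}$ rather than the stated $\sqrt{2\ln(4|\cS||\cA|NTH/\epsilon)}\sqrt{|\cS||\cA|NTH}$. The paper avoids this by never invoking a transition simulation lemma here: it folds the transition uncertainty into a reward shift $\tilde r_n^t(s,a) = r_n^t(s,a) + 2\delta_n^t(s,a)$ and bounds $\sum_{t,n,(s,a)} c_n^t(s,a)\bigl(\tilde r_n^t(s,a)-r_n^t(s,a)\bigr)$ directly with the same pigeonhole lemma. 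You should either adopt that route or account for the extra $H$ and $\sqrt{|\cS|}$ factors your route introduces.
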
\vspace{-0.2cm}
Since the adversarial rewards (i.e., $\hat{r}_n^t$) do not impact $Term_3$, we decompose  $Term_3$ into each episode.  The key  is to characterize the number of visits of each state-action pair, which is related to the instantaneous activation constraint $B$ and has a $\tilde{\mathcal{O}}(B\sqrt{TH})$ regret.

\begin{figure}[t]
\centering
\begin{minipage}{.24\textwidth}
\centering
\includegraphics[width=1\columnwidth]{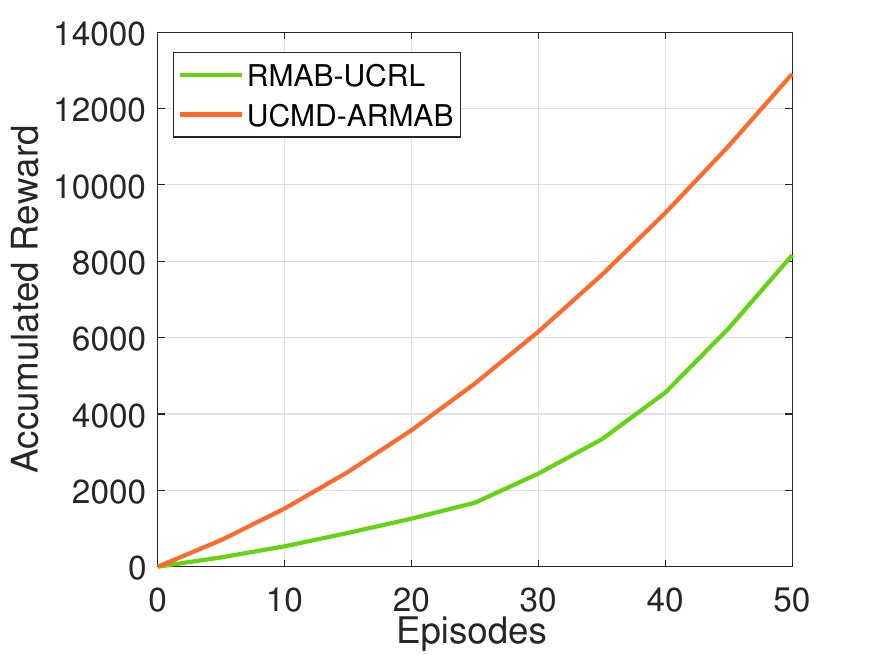}
\subcaption{Accumulated Reward.}
 \label{fig:reward}
\end{minipage}\hfill
\begin{minipage}{.24\textwidth}
\centering
\includegraphics[width=1\columnwidth]{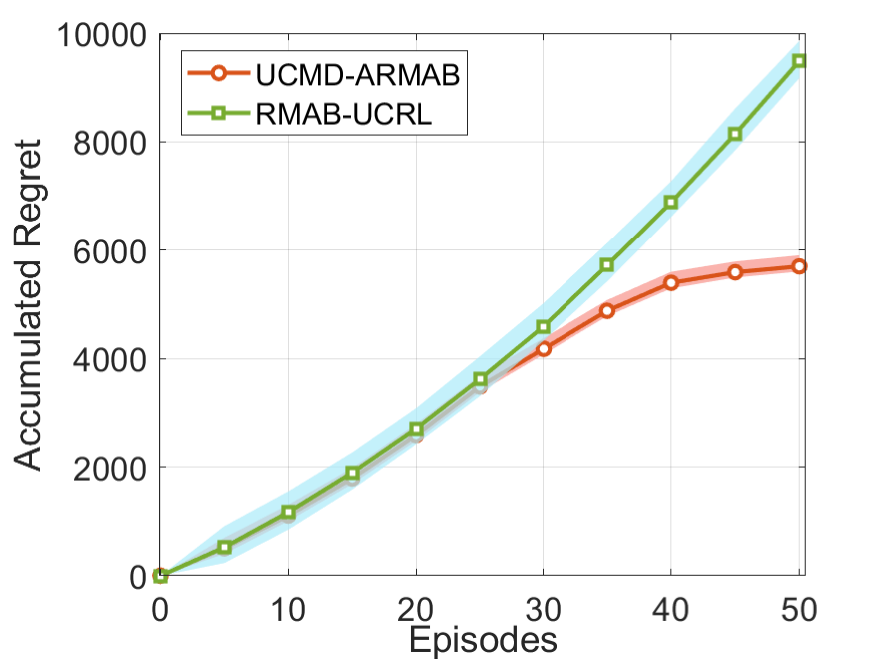}
\subcaption{Accumulated Regret.}
 \label{fig:regret}
\end{minipage}
\caption{The learning performance comparison for case study-CPAP.}
        \label{fig:examples}
\end{figure}

\begin{figure}[t]
\centering
\begin{minipage}{.24\textwidth}
\centering
\includegraphics[width=1\columnwidth]{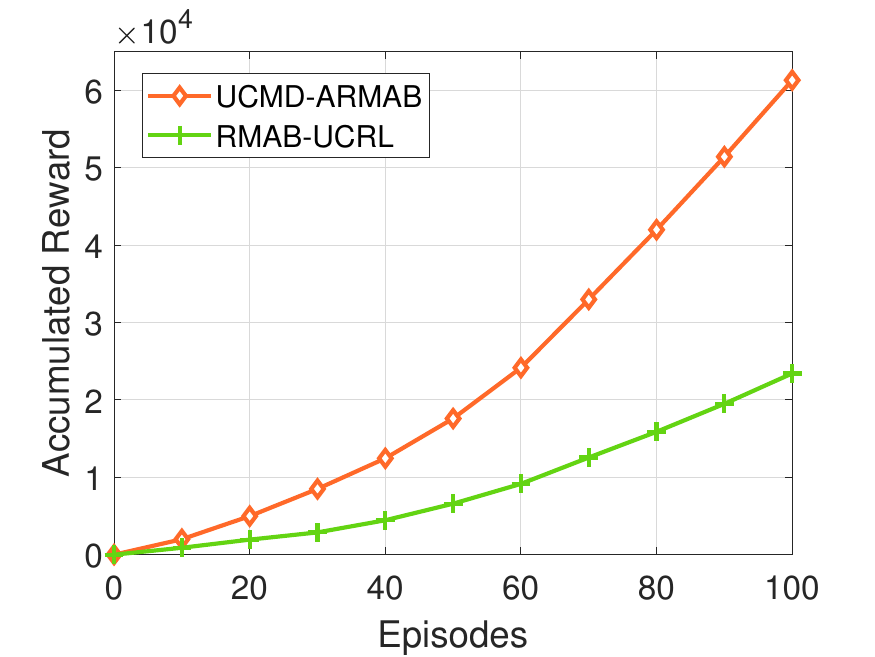}
\subcaption{Accumulated reward.}
 \label{fig:reward2}
\end{minipage}\hfill
\begin{minipage}{.24\textwidth}
\centering
\includegraphics[width=1\columnwidth]{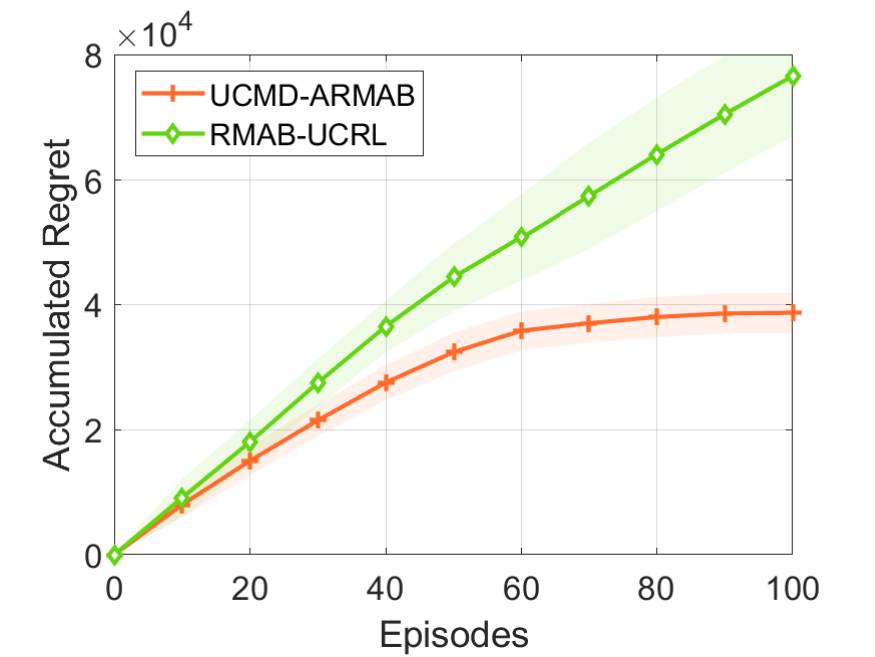}
\subcaption{Accumulated regret.}
 \label{fig:regret2}
\end{minipage}
\caption{The learning performance comparison for case study-A Deadline Scheduling Problem.}
        \label{fig:examples2}
\end{figure}

\section{Numerical Study}
In this section, we demonstrate the utility of \ucarmab by evaluating it under two real-world applications of RMAB in the presence of adversarial rewards, i.e., the continuous positive airway pressure therapy (CPAP)  \citep{kang2013markov,herlihy2023planning,li2022towards, wang2024online} and a deadline scheduling problem \cite{xiong2022AAAI}. The detailed setup of these two problems are provided in Appendix~\ref{sec:experiments}.

We compare our \ucarmab with a benchmark named RMAB-UCRL \cite{xiong2022Neurips}, which was developed for stochastic RMAB, in terms of accumulated rewards and accumulated regret. As observed from Figure~\ref{fig:examples} and Figure~\ref{fig:examples2}, our \ucarmab significantly outperforms RMAB-UCRL in adversarial settings. In particular, our \ucarmab achieves a significant improvement over the accumulated reward as shown in in Figure~\ref{fig:reward} and Figure~\ref{fig:reward2},
and exhibits a provably sublinear regret as shown in Figure~\ref{fig:regret} and Figure~\ref{fig:regret2}, while the regret of RMAB-UCRL increases exponentially under adversarial settings.
These verify the effectiveness of the proposed \ucarmab on handling adversarial RMABs.

\section*{Acknowledgements} 

This work was supported in part by the National Science Foundation (NSF) grants 2148309 and 2315614, and was supported in part by funds from OUSD R\&E, NIST, and industry partners as specified in the Resilient \& Intelligent NextG Systems (RINGS) program. This work was also supported in part by the U.S. Army Research Office (ARO) grant W911NF-23-1-0072, and the U.S. Department of Energy (DOE) grant DE-EE0009341. Any opinions, findings, and conclusions or recommendations expressed in this material are those of the authors and do not necessarily reflect the views of the funding agencies.

\section*{Impact Statements}

This paper presents work whose goal is to advance the field of Machine Learning. There are many potential societal consequences of our work, none which we feel must be specifically highlighted here.

\nocite{langley00}


\bibliography{refs,refs2}
\bibliographystyle{icml2024}

\newpage
\appendix
\onecolumn

\section{Details and Proofs in Section \ref{sec:learning}}
We provide all omitted details and proofs of the key lemmas of the main paper in this appendix.

\subsection{Efficient Solver of Updating Occupancy Measure \eqref{eq:UCB_extended}}
In this subsection, we provide details on how to efficiently solve the updating occupancy measure problem in \eqref{eq:UCB_extended}.
Similar to \citet{rosenberg2019online}, the solution of \eqref{eq:UCB_extended} can be yielded by decomposing the original problem into two subproblems. Specifically, the first subproblem is to solve the objective in \eqref{eq:UCB_extended} as an unconstrainted problem as 
\begin{align}
\tilde{z}^t = \arg\max_{z^t}&~\sum_{h=1}^{H}\sum_{n=1}^{N}\sum_{(s,a,s^\prime)} \eta z_n^t(s,a, s^\prime,h)\hat{r}_n^{t-1}(s,a) -D(z^t||z^{t-1}),  
\end{align}
which can be easily solved and has a closed-form solution as
\begin{align}\label{eq:solution_1st}
    \tilde{z}_n^{t}(s,a,s^\prime; h)=z_n^{t-1}(s,a,s^\prime;h)e^{\eta \hat{r}_n^{t-1}(s,a)}, \forall s\in\cS, a\in\cA, h\in[H], n\in[N].
\end{align}
The second subproblem is then to project the solution in \eqref{eq:solution_1st} to the feasible set of $z^t$ in episode $t$ by minimizing the KL-divergence of $z^t$ and $\tilde{z}^t$. To simplify the notations, we denote the feasible set of $z^t$ that satisfies the constraints in \eqref{eq:UCB_extended} as $\cZ^t$, and thus the subproblem is expresses as follows
\begin{align}\label{eq:solution2}
z^t = \arg\min_{z\in\cZ^t}&~D(z||\tilde{z}^t).  
\end{align}

Defining some Lagrangian parameters as
$\beta:\cS\mapsto \mathbb{R}$ and $\mu=(\mu^+, \mu^-)$ with $\mu^+, \mu^-:\cS\times\cA\times\cS\mapsto \mathbb{R}_{\geq 0}$, and $B_{n,t}^{\mu,\beta}(s,a,s^\prime,h)$ as
\begin{align*}
B_{n,t}^{\mu,\beta}(s,a,s^\prime,h)&= \mu^-(s,a,s^\prime,h)-\mu^+(s,a,s^\prime,h)+(\mu^+(s,a,s^\prime,h)+\mu^-(s,a,s^\prime,h))\delta_n^t(s,a)\\
&+\beta(s^\prime,h)-\beta(s,h)+\eta\hat{r}_n^t(s,a)+\sum_{s^{\prime\prime}}\hat{P}_n^t(s^{\prime\prime}|s,a)(\mu^+(s,a,s^{\prime\prime},h)-\mu^-(s,a,s^{\prime\prime},h)).
\end{align*}
Hence, the following theorem from \citet{rosenberg2019online} provides the solution of \eqref{eq:solution2} and the detailed proof can be found in \citet{jin2020learning}. 
\begin{theorem}[Theorem 4.2 \citep{rosenberg2019online}]
Let $t>1$ and define the function
\begin{align*}
    Z_n^{t,h}(\mu,\beta)=\sum_{s,a,s^\prime}z_n^t(s, a, s^\prime, h)e^{B_{n,t}^{\mu,\beta}(s, a, s^\prime, h)},
\end{align*}
the solution to \eqref{eq:solution2} is \begin{align*}
    z_n^{t+1}(s,a,s^\prime,h)=\frac{z_n^t(s,a,s^\prime,h)e^{B_{n,t}^{\mu^t,\beta^t}(s, a, s^\prime,h)}}{Z_n^{t,h}(\mu^t,\beta^t)},
\end{align*}
where $\mu^t,\beta^t$ are obtained by solving a convex optimization with non-negativity constraints as
\begin{align*}
    \mu^t,\beta^t=\arg\min_{\mu,\beta>0}\sum_{h=1}^H \ln Z_n^{t,h}(\mu,\beta).
\end{align*}

\end{theorem}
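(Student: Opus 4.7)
The plan is to derive the stated formula by treating~\eqref{eq:solution2} as a convex Bregman-projection problem and applying Lagrangian duality. The objective $D(z\|\tilde z^t)$ is strictly convex in $z$ on the positive orthant (since the unnormalized KL includes a $z\log z$ summand), and the feasible set $\cZ^t$ is a polyhedron once the confidence-set ratio constraints in~\eqref{eq:UCB_extended} are cleared of denominators and rewritten as linear inequalities of the form $z_n(s,a,s';h)-(\hat P_n^t(s'|s,a)+\delta_n^t(s,a))\sum_y z_n(s,a,y;h)\le 0$ and $(\hat P_n^t(s'|s,a)-\delta_n^t(s,a))\sum_y z_n(s,a,y;h)-z_n(s,a,s';h)\le 0$. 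Slater's condition is witnessed by any strictly interior feasible occupancy (e.g., the one induced by $\hat P_n^t$ itself), so strong duality applies and the primal optimum can be read off from the KKT stationarity condition.

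\textbf{Dual variables and stationarity.} For each arm $n$ I would attach $\beta(s,h)\in\bR$ to each flow-conservation equality, nonnegative $\mu^+(s,a,s',h)$ and $\mu^-(s,a,s',h)$ to the two confidence inequalities, and a single per-layer multiplier to enforce the normalization of $\mu_n(\cdot,\cdot;h)$; non-negativity $z\ge 0$ requires no multiplier because $z\log z$ is $+\infty$ on the boundary. Collecting the coefficient of $z_n(s,a,s';h)$ after expansion, the contributions from the upper/lower confidence constraints split into a per-pair piece $\mu^-(s,a,s',h)-\mu^+(s,a,s',h)+(\mu^++\mu^-)\delta_n^t(s,a)$ together with an aggregate piece $\sum_{s''}\hat P_n^t(s''|s,a)\bigl(\mu^+(s,a,s'',h)-\mu^-(s,a,s'',h)\bigr)$; the flow multipliers contribute $\beta(s',h)-\beta(s,h)$; and recalling that the OMD update composes the gradient step $\tilde z^t=z^{t-1}e^{\eta\hat r^{t-1}}$ with the projection, the term $\eta\hat r_n^t(s,a)$ is absorbed through the identification $\log\tilde z^t=\log z^{t-1}+\eta\hat r^{t-1}$. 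The sum of these is exactly $B_{n,t}^{\mu,\beta}(s,a,s',h)$ as defined.

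\textbf{Closed-form primal and dual reduction.} Setting the gradient of the Lagrangian with respect to $z_n(s,a,s';h)$ to zero then gives $\log\bigl(z_n(s,a,s';h)/z_n^t(s,a,s';h)\bigr)=B_{n,t}^{\mu,\beta}(s,a,s',h)-\log Z_n^{t,h}(\mu,\beta)$, where $\log Z_n^{t,h}$ absorbs the per-layer normalization multiplier. Exponentiating yields $z_n^{t+1}(s,a,s';h)=z_n^t(s,a,s';h)\,e^{B_{n,t}^{\mu^t,\beta^t}(s,a,s',h)}/Z_n^{t,h}(\mu^t,\beta^t)$, which matches the theorem. Substituting this primal optimum back into the Lagrangian and invoking complementary slackness collapses all explicit $z$-dependence: standard Fenchel-conjugate computations for the entropy on the probability simplex (applied layer by layer) reduce the dual to $\min_{\mu\ge 0,\,\beta}\sum_{h=1}^H\ln Z_n^{t,h}(\mu,\beta)$, up to constants that do not depend on $(\mu,\beta)$. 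This recovers the second display in the theorem.

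\textbf{Main obstacle.} The step I expect to be most delicate is the bookkeeping for the confidence-set constraints. Because they are originally stated as ratios $z(s,a,s';h)/\sum_y z(s,a,y;h)$ compared against $\hat P\pm\delta$, clearing denominators couples $z(s,a,s';h)$ across all target states $y$ in the same $(s,a,h)$-block; differentiating with respect to a single variable $z(s,a,s'_0;h)$ then produces contributions from every $\mu^\pm(s,a,s',h)$, which must be reorganized to simultaneously produce both the per-pair terms and the aggregate $\hat P$-weighted sum appearing in $B_{n,t}^{\mu,\beta}$. Once this algebraic reorganization and the sign conventions on the equality multipliers $\beta$ are handled cleanly, the remainder of the argument is a routine application of convex duality on a polyhedral domain.
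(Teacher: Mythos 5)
Your approach is the right one and is essentially the same as the proof the paper relies on: the paper does not prove this theorem itself but imports it from \citet{rosenberg2019online} (with the proof deferred to \citet{jin2020learning}), and that proof is exactly the Lagrangian/KKT derivation you describe --- clear the denominators of the confidence constraints to get a polyhedron, attach multipliers $\mu^{\pm}$ and $\beta$ plus a per-layer normalization multiplier, read the exponential form off stationarity of the unnormalized-KL objective, and substitute back to reduce the dual to $\min_{\mu,\beta}\sum_h\ln Z_n^{t,h}(\mu,\beta)$.

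One concrete point where your claim that ``the sum of these is exactly $B_{n,t}^{\mu,\beta}$ as defined'' does not quite land: differentiating $\mu^{+}(s,a,s',h)\bigl[z(s,a,s';h)-(\hat P_n^t(s'|s,a)+\delta_n^t(s,a))\sum_y z(s,a,y;h)\bigr]$ (and its $\mu^{-}$ counterpart) with respect to $z(s,a,s'_0;h)$ and summing over $s'$ produces the $\delta$-weighted contribution in \emph{aggregated} form, $\delta_n^t(s,a)\sum_{s''}\bigl(\mu^{+}(s,a,s'',h)+\mu^{-}(s,a,s'',h)\bigr)$, sitting alongside the $\hat P$-weighted aggregate --- not the per-pair term $(\mu^{+}(s,a,s',h)+\mu^{-}(s,a,s',h))\delta_n^t(s,a)$ that appears in the paper's definition of $B_{n,t}^{\mu,\beta}$ (the original statement in \citet{rosenberg2019online} keeps this term inside the sum over $s''$). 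So the KKT computation you outline yields the theorem with the aggregated $\delta$ term; as written, the paper's $B_{n,t}^{\mu,\beta}$ appears to be a transcription slip, and your proof should either derive the corrected form or explain why the two coincide (they do not in general, since $\mu^{\pm}$ varies with $s'$). Relatedly, the budget constraint $\sum_n\sum_{(s,a,s')}a\,z_n^t\le B$ couples the arms and contributes no multiplier to the stated per-arm $B_{n,t}^{\mu,\beta}$; like the paper, you leave it unaccounted for, which is fine only if the projection is understood to ignore that constraint. Neither issue affects the structure of your argument, which is otherwise sound.
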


\subsection{Index Policy Design and proof of Theorem \ref{thm:asympt_opt}}\label{sec:indexpolicy}
Following the method of the celebrated Whittle index \cite{whittle1988restless} for conventional RMAB problems, we first relax
 the ``hard" activation constraint in \eqref{eq:orginal_P}  as an averaged constraint, which gives rise to the following relaxed problem. 
 \begin{align}
  \max_\pi \sum_{t=1}^T R_t~~ ~~\text{s.t.}~ \mathbb{E}_{\pi}\left[\sum_{n=1}^N A_n^{t,h}\right]\leq B ,h\in[H], t\in[T].
    \label{eq:relaxed_P}
\end{align}

Provided the definition of occupancy measure in \eqref{eq:OM}, the relaxed problem \eqref{eq:relaxed_P} can be reformulated as a linear programming (LP) \cite{altman1999constrained} expressed in the following lemma.

\begin{lemma}\label{prop:lp}
The relaxed problem \eqref{eq:relaxed_P} is equivalent to the following LP 
\begin{align}
    \max_{\mu}~ &\sum_{t=1}^T\sum_{n=1}^{N}\sum_{h=1}^{H}\sum\limits_{(s,a)} \mu_n(s,a;h)r_n^t(s,a) \label{eq:pmc_rel_lp}\displaybreak[0]\\
\text{s.t.}~& \sum_{n=1}^{N}\sum_{(s,a)} a\mu_n(s,a;h)\leq B,\quad~\forall h\in[H], \label{eq:pmc_rel_lp-1}\displaybreak[1]\\
&\sum_{a\in\mathcal{A}}\mu_n(s,a;h)\!=\!\sum_{(s^\prime, a^\prime)}\!\!\mu_n(s^\prime, a^\prime; h-1)P_n(s^\prime, a^\prime,s),\label{eq:pmc_rel_lp-2}\displaybreak[2]\\
& \mu_n(s,a;h)\geq 0, \quad\forall s\in\cS, a\in\mathcal{A}, h\in[H], n\in[N], \label{eq:pmc_rel_lp-3}
\end{align}
where~(\ref{eq:pmc_rel_lp-1}) is a restatement of the constraint in \eqref{eq:relaxed_P}; ~(\ref{eq:pmc_rel_lp-2}) indicates the transition of the occupancy measure from time slot $h-1$ to time slot $h$; and~(\ref{eq:pmc_rel_lp-3}) guarantees that the occupancy measures are non-negative. 
\end{lemma}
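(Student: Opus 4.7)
My plan is to establish the equivalence in both directions: every feasible policy of \eqref{eq:relaxed_P} induces a feasible point of the LP with matching objective, and every feasible point of the LP induces a policy achieving the same objective. This is the standard occupancy-measure transformation, but extended to the $N$-arm relaxed setting, so I need to be careful that the coupling constraint \eqref{eq:pmc_rel_lp-1} correctly captures the averaged activation budget.

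\emph{Policy $\Rightarrow$ LP feasible point.} Given any policy $\pi$ feasible for \eqref{eq:relaxed_P}, I would define per-arm occupancy measures $\mu_n^\pi(s,a;h) := \mathbb{P}_\pi(S_n^h = s,\, A_n^h = a)$ for every $n,s,a,h$. Non-negativity \eqref{eq:pmc_rel_lp-3} and the normalization $\sum_{s,a}\mu_n^\pi(s,a;h)=1$ are immediate. The flow-balance identity \eqref{eq:pmc_rel_lp-2} follows from the Chapman--Kolmogorov equation applied to each arm, using that arm $n$'s next state depends only on $(S_n^{h-1},A_n^{h-1})$ through its own kernel $P_n$. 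The averaged activation bound rewrites as $\mathbb{E}_\pi[\sum_n A_n^h] = \sum_n \sum_{s,a} a\,\mu_n^\pi(s,a;h) \le B$, matching \eqref{eq:pmc_rel_lp-1}. Finally, linearity of expectation yields the equality of objectives $\sum_t R_t(\pi) = \sum_t \sum_n \sum_h \sum_{s,a} \mu_n^\pi(s,a;h) r_n^t(s,a)$.

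\emph{LP feasible point $\Rightarrow$ policy.} In the reverse direction, given $\mu$ feasible for the LP I would construct a non-stationary randomized policy $\pi^\mu = (\pi_1^\mu,\dots,\pi_N^\mu)$ by setting
\[
\pi_n^\mu(a \mid s;h) := \frac{\mu_n(s,a;h)}{\sum_{a'} \mu_n(s,a';h)}
\]
whenever the denominator is positive, and arbitrarily otherwise. I would then prove by induction on $h$ that the occupancy measure induced by $\pi^\mu$ coincides with $\mu$: the base case $h=1$ uses the fixed initial state $S_n^0$, and the inductive step pushes the distribution at epoch $h-1$ to epoch $h$ via the true kernel $P_n$, which is exactly the content of \eqref{eq:pmc_rel_lp-2}. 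Once the induced measure equals $\mu$, the objective values match and the averaged activation constraint \eqref{eq:pmc_rel_lp-1} translates directly into $\mathbb{E}_{\pi^\mu}[\sum_n A_n^h] \le B$.

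\emph{Main obstacle.} The only genuine subtlety I anticipate is the treatment of ``unreachable'' triples $(n,s,h)$ with $\sum_a \mu_n(s,a;h) = 0$, where the conditional distribution $\pi_n^\mu(\cdot \mid s;h)$ is undefined. Such states are never visited under $\pi^\mu$ (this is exactly what the induction proves), so any arbitrary completion of the policy there leaves the induced occupancy measure, the reward, and the activation expectations unchanged. A secondary conceptual point worth highlighting explicitly is that the policy $\pi^\mu$ randomizes independently across arms, which is compatible with \eqref{eq:pmc_rel_lp-1} (an average constraint involving only marginals) but need not satisfy the pathwise ``hard'' constraint $\sum_n A_n^{t,h} \le B$---consistent with the fact that the lemma concerns the relaxed problem and explains exactly why the index policy of Section~\ref{sec:index} is subsequently needed.
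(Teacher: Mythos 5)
Your proposal is correct, and it is in fact more complete than what the paper provides: the paper does not prove this lemma at all, instead asserting the reformulation with a citation to Altman's constrained-MDP book and the brief annotations attached to each constraint. Your two-directional argument (policy $\Rightarrow$ feasible occupancy measure via Chapman--Kolmogorov and linearity of expectation; feasible occupancy measure $\Rightarrow$ randomized Markov policy via the conditional ratio $\mu_n(s,a;h)/\sum_{a'}\mu_n(s,a';h)$, followed by induction on $h$) is exactly the standard proof that the citation gestures at, and your handling of the two subtleties --- unreachable $(n,s,h)$ triples where the ratio is $0/0$, and the fact that the induced product policy satisfies only the averaged budget rather than the pathwise one --- is correct and worth stating. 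One point you should make explicit: the LP as displayed in the lemma contains no normalization or initial-condition constraint, so the base case of your reverse-direction induction (that $\mu_n(\cdot,\cdot;1)$ is consistent with the fixed initial state $S_n^0$) has nothing in the stated feasible set to anchor it; you need to either read the flow-balance constraint at $h=1$ as referencing a degenerate ``$h=0$'' measure concentrated on $S_n^0$, or add the constraint $\sum_{a}\mu_n(s,a;1)=\mathds{1}(s=S_n^0)$ (equivalently, the normalization property stated in the main text after equation~\eqref{eq:OM}). This is a defect of the lemma statement rather than of your argument, but without it the equivalence as literally written does not hold, so your proof should name the missing constraint it relies on. A second, smaller remark: since the objective sums the same $\mu$ against $\sum_t r_n^t$, your forward direction implicitly uses that the comparator in \eqref{eq:relaxed_P} is a single policy reused across episodes, which matches the paper's definition of $R(T,\pi)$; it is worth one sentence to say so.
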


The RMI is designed upon the optimal occupancy measure
 $\mu^\star = \left\{\mu^\star_{n}(s,a;h): n\in[N], h\in[1,\ldots,H]\right\}$ to the above LP. The first step is to construct a Markovian randomized policy 
$
\xi^\star_n(s,a;h):=\frac{\mu^\star_n(s,a;h)}{\sum_{a\in \cA}\mu^\star_n(s,a; h)},
$ 
and then set $\xi^\star_n(s,1;h)$ as the RMI.   
Thereafter, we prioritize the users according to a decreasing order of their RMIs, and then activate the $B$ arms with the highest indices.  Note that our proposed RMI policy is well-defined even when the problem is not indexable~\cite{whittle1988restless, weber1990index}.

\textbf{Proof of Theorem \ref{thm:asympt_opt}.}  
We now show that our RMI policy is asymptotically optimal when both the number of arms $N$ and the activation constraint $B$ go to infinity while holding $B/N$ constant as that in Whittle \cite{whittle1988restless} and others \cite{weber1990index}.  %
For abuse of notation, let the number of arms be $\rho N$ and the resource constraint be $\rho W$ in the asymptotic regime with $\rho\rightarrow\infty.$  In other words, we consider $N$ different classes of arms with each class containing $\rho$ arms. 
Let $R^{\pi}(\rho B, \rho N)$ denote the expected total reward of the original problem~(\ref{eq:orginal_P}) under an arbitrary policy $\pi$ for such a system.
To show the asymptotical optimality of RMI, according to \citet{xiong2022AAAI}, it is sufficient to show that
\begin{align}\label{eq:Asym_opt}
\lim_{\eta\rightarrow \infty} \frac{1}{\rho}\Big(R^{\pi^{opt}}(\eta W, \eta N)-R^{\pi^\star}(\eta W, \eta N)\Big)=0,
\end{align}
with $\pi^*$ being the RMI index policy and $\pi^{opt}$ is the optimal one. 
The equation in
 \eqref{eq:Asym_opt} indicates that as the number of per-class arms goes to infinity, the average gap between the performance achieved by our RMI index policy $ \pi^\star$ and the optimal policy $\pi^{opt}$ of the original problem in \eqref{eq:orginal_P} tends to be zero.

The proof comes from both side of \eqref{eq:Asym_opt}. First, we have that for any policy $\pi^\star$, the left-hand side of~(\ref{eq:Asym_opt}) is non-negative since any policy cannot achieve a higher reward compared with that of the optimal policy $\pi^{opt}$, i.e.,
$$\lim_{\rho\rightarrow \infty} \frac{1}{\rho}\Big(R^{\pi^{opt}}( \rho W, \rho N)-R^{\pi^\star}( \rho B, \rho N)\Big)\geq 0.$$ 
Hence we need to show the other direction of \eqref{eq:Asym_opt} that for induced RMI index policy $\pi^\star$, the following holds $$\lim_{\rho\rightarrow \infty} \frac{1}{\rho}\Big(R^{\pi^{opt}}( \rho W, \rho N)-R^{\pi^\star}( \rho B, \rho N)\Big)\leq 0.$$ 
 Let $B_n(s; h)$ be the number of class $n$ arms in state $s$ at time  $h$ and $D_n(s,a;h)$ be the number of class $n$ arms in state $s$ at time $h$ that are being served with action $a\in\mathcal{A}\setminus\{0\}$. 
Similar to \citet{xiong2022AAAI}, by using induction, we show that $\rho\rightarrow\infty$, the following event occurs almost surely,
\begin{align*}
    B_n(s; h)/\rho&\rightarrow P_n(s;h),\\
    D_n(s,a; h)/\rho&\rightarrow P_n(s;h)\xi_n^\star(s,a;h)
\end{align*}
respectively. This leads to the fact that 
\begin{align*}
    \lim_{\rho\rightarrow \infty} \frac{1}{\rho}R^{\pi^\star}(\rho B, \rho N)=\sum_{n=1}^{N}\sum_{h=1}^H\sum\limits_{(s,a)} \mu_n^\star(s,a;h)r_n(s,a), 
\end{align*}
which converges the optimal solution to the LP in Lemma \ref{prop:lp}, 
  which is an upper bound of $\lim_{\rho\rightarrow \infty} \frac{1}{\rho}R^{\pi^{opt}}(\rho B, \rho N)$ due to Lemma \ref{lem:upperbound}.
This completes the proof.

\section{Proofs in Section \ref{sec:analysis} }
In this section, we provide proofs for Lemmas in Section \ref{sec:analysis}.

\subsection{Proof of Lemma \ref{lem:upperbound}}

There exists a set of occupancy measures $\mu_{\pi}^*:=\{\mu^*_n(s,a;h), \forall n\in[N], s\in\cS, a\in\cA, h\in[h]\}$ under policy $\pi$  that optimally solve \eqref{eq:orginal_P} with a relaxed constraint such that  $\mathbb{E}_{\pi}\left[\sum_{n=1}^N A_n^{t,h}\right]\leq B ,h\in[H], t\in[T]$. In addition, $\sum_{t=1}^T \langle{\mu_\pi^*}, r^t\rangle$ is no less than $R(\bs_1, T, \pi^{opt})$, where $\mu_\pi^*$ is the stacked vector of all occupancy measures $\mu_n^*(s,a;h)$ and $r^t$ is the stacked vector of all rewards $r_n^t(s,a)$ for all $n, s, a$.
According to Lemma \ref{prop:lp}, if the "hard" activation constraint in \eqref{eq:orginal_P} is relaxed to an averaged one as  $\mathbb{E}_{\pi}\left[\sum_{n=1}^N A_n^{t,h}\right]\leq B ,h\in[H], t\in[T]$, there is an equivalent LP expressed as in \eqref{eq:pmc_rel_lp}-\eqref{eq:pmc_rel_lp-3}.
To prove Lemme \ref{lem:upperbound},
it is sufficient to show that the relaxed problem achieves no less average reward than the original problem in \eqref{eq:orginal_P}.
The proof is straightforward since the constraints in the relaxed problem  expand the feasible region of the original problem in \eqref{eq:orginal_P}. Denote the feasible region of the original problem  as
\begin{align*}
    \Gamma:=\Bigg\{A_n^{t,h}, \forall h, t\Bigg\vert\sum_{n=1}^{N}A_n^{t,h}\leq B, \forall h\Bigg\},
\end{align*}
and the feasible region of the relaxed problem as 
\begin{align*}
\Gamma^\prime:=\left\{A_n^{t,h}, \forall h,t\Bigg\vert\mathbb{E}_{\pi}\left[\sum_{n=1}^{N}A_n^{t,h}\leq B, \forall h\right]\right\}.
\end{align*}
{It is clear that the relaxed problem expands the feasible region of the original problem in \eqref{eq:orginal_P}, i.e., $\Gamma\subseteq\Gamma^\prime.$  Therefore, the relaxed problem achieves an objective value no less than that of the original problem in \eqref{eq:orginal_P} because the original optimal solution is also inside the relaxed feasibility set \citep{altman1999constrained}, i.e., $\Gamma^\prime$. Denote the optimal occupancy measures of LP in \eqref{eq:pmc_rel_lp}-\eqref{eq:pmc_rel_lp-3} as $\mu_{\pi}^*:=\{\mu^*_n(s,a;h), \forall n\in[N], s\in\cS, a\in\cA, h\in[h]\}$ under a stationary policy $\pi$ induced by $\{\mu^*_n(s,a;h), \forall n\in[N], s\in\cS, a\in\cA, h\in[h]\}$, and hence the maximum reward achieved for the LP in \eqref{eq:pmc_rel_lp}-\eqref{eq:pmc_rel_lp-3} is equal to $\sum_{t=1}^T\langle\mu_{\pi}, r^t\rangle$. Therefore, it follows that $\sum_{t=1}^T\langle\mu_{\pi}, r^t\rangle\geq R(\bs_1, T, \pi^{opt})$, which completes the proof.

\subsection{Proof of Lemma \ref{lemma:regret_decomp}}
According to the definition of regret in \eqref{eq:regret}, we have the following inequality
\begin{align*}
    \Delta(T)&=R(\bs_1, T, \pi^{opt})-R(\bs_1, T, \{\pi^{t}, \forall t\})\allowdisplaybreaks\\
    &= \underset{Term_0\leq 0}{\underbrace{R(\bs_1, T, \pi^{opt})-R(\bs_1, T, \pi^{opt}, \{\hat{r}_n^t, \forall n, t\})}}\allowdisplaybreaks\\
    &\quad+\underset{Term_1}{\underbrace{R(\bs_1, T, \pi^{opt}, \{\hat{r}_n^t, \forall n, t\})-R(\bs_1, T, \{\tilde{\pi}^t, \forall t\}, \{\hat{r}_n^t, \forall n\in[N], t\in[T]\})}}\allowdisplaybreaks\\
    &\quad+\underset{Term_2}{\underbrace{R(\bs_1, T, \{\tilde{\pi}^t, \forall t\}, \{\hat{r}_n^t, \forall n\in[N], t\in[T]\})-R(\bs_1, T, \{\tilde{\pi}^t, \forall t\}, \{{r}_n^t, \forall n\in[N], t\in[T]\})}}\allowdisplaybreaks\\
    &\quad+\underset{Term_3}{\underbrace{R(\bs_1, T, \{\tilde{\pi}^t, \forall t\}, \{{r}_n^t, \forall n\in[N], t\in[T]\})-R(\bs_1, T,\{\pi^t, \forall t\})}}\allowdisplaybreaks\\
    &\leq \underset{Term_1}{\underbrace{R(\bs_1, T, \pi^{\mu^*}, \{\hat{r}_n^t, \forall n, t\})-R(\bs_1, T, \{\tilde{\pi}^t, \forall t\}, \{\hat{r}_n^t, \forall n\in[N], t\in[T]\})}}\allowdisplaybreaks\\
    &\quad+\underset{Term_2}{\underbrace{R(\bs_1, T, \{\tilde{\pi}^t, \forall t\}, \{\hat{r}_n^t, \forall n\in[N], t\in[T]\})-R(\bs_1, T, \{\tilde{\pi}^t, \forall t\}, \{{r}_n^t, \forall n\in[N], t\in[T]\})}}\allowdisplaybreaks\\
    &\quad+\underset{Term_3}{\underbrace{R(\bs_1, T, \{\tilde{\pi}^t, \forall t\}, \{{r}_n^t, \forall n\in[N], t\in[T]\})-R(\bs_1, T,\{\pi^t, \forall t\})}}.
\end{align*}
The first inequality holds due to two reasons. First, $Term_0$ is non-positive as $\{\hat{r}_n^t, \forall n, t\}$ is an  overestimation of the true $r_n^t, \forall n, t$. Second, $R(\bs_1, T, \pi^{\mu^*}, \{\hat{r}_n^t, \forall n, t\})$ is no less than $R(\bs_1, T, \pi^{opt}, \{\hat{r}_n^t, \forall n, t\})$ according to Lemma \ref{lem:upperbound}.

\subsection{Proof of Lemma \ref{lem:term1}}
 Denote the true MDP as ${M}:=\{M_n, \forall n\}$. Hence, we have the following event occurs with high probability when the true transition kernel is inside the confidence ball defined in \eqref{eq:confidence_ball}, i.e.,
\begin{align}\label{eq:ep_set}
 \cE_p^t:&=\{\exists (s, a), n, |P_n^t(s^\prime|s,a)-\hat{P}_n^{t}(s^\prime|s,a)|< \delta_n^{t}(s,a) \}.  
\end{align}
The cumulative probability that the failure events occur is bounded as follows.

\begin{lemma}\label{lemma:failure_sets}
With probability at least $1-2\epsilon$, we have the true transition $P$ is within the confidence set $\cP^t:=\{\cP_n^t, \forall n\in[N]\}$, i.e., event $\cE_p^t$ occurs, 
when $\delta_n^{t}(s,a)\!=\!\sqrt{\frac{1}{2C_n^{t-1}(s,a)}\!\log\!\Big(\frac{4|\cS||\cA|N(t\!-\!1)H}{\epsilon}\Big)}$. 
\end{lemma}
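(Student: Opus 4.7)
The plan is to apply Hoeffding's inequality to each empirical transition estimate $\hat{P}_n^t(s'|s,a)$ separately, and then take a union bound over the tuples $(n,s,a,s')$ together with all possible realizations of the random visit count $C_n^{t-1}(s,a)$. The form of the confidence radius, $\delta_n^t(s,a)=\sqrt{\tfrac{1}{2C_n^{t-1}(s,a)}\log(4|\cS||\cA|N(t-1)H/\epsilon)}$, already betrays the structure: the denominator $2C_n^{t-1}(s,a)$ inside the square root is exactly what Hoeffding produces for a Bernoulli average, and the logarithm factor is calibrated to absorb the union bound.

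First I would fix a tuple $(n,s,a,s')$ and a positive integer $k$, and condition on the event $\{C_n^{t-1}(s,a)=k\}$. By the Markov property and the fact that each arm evolves independently with a stationary transition kernel, the $k$ recorded next-state samples drawn from $(s,a)$ are i.i.d.\ Bernoulli trials with success probability $P_n(s'|s,a)$. Hoeffding's inequality then gives
\[
\bP\!\left(|\hat{P}_n^t(s'|s,a)-P_n(s'|s,a)|>x \,\Big|\, C_n^{t-1}(s,a)=k\right)\le 2e^{-2kx^2}.
\]
Plugging in $x=\sqrt{\tfrac{1}{2k}\log(4|\cS||\cA|N(t-1)H/\epsilon)}$ makes the right-hand side at most $\epsilon/(2|\cS||\cA|N(t-1)H)$.

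Next I would union bound over (i) all tuples $(n,s,a,s')$ and (ii) all feasible values $k\in\{1,\ldots,(t-1)H\}$ that $C_n^{t-1}(s,a)$ could take. Since $\delta_n^t(s,a)$ uses the realized value of $C_n^{t-1}(s,a)$, the union bound over $k$ is what allows us to deduce a high-probability bound with the \emph{random} sample size inside $\delta_n^t(s,a)$. Summing the per-slice failure probabilities yields a total failure probability of at most $2\epsilon$ (with the $|\cS|$ from the union over $s'$ and any remaining constants absorbed into the logarithmic argument). The complementary event is precisely $\cE_p^t$, which establishes $P_n\in\cP_n^t$ for all $n\in[N]$ with probability at least $1-2\epsilon$.

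\textbf{Main obstacle.} The only real subtlety is that $C_n^{t-1}(s,a)$ is a data-dependent, random sample size, so one cannot naively apply Hoeffding to $\hat{P}_n^t(s'|s,a)$ as if it were an average of i.i.d.\ variables with a fixed $k$. The clean workaround, as above, is the conditioning-plus-union trick: condition on $\{C_n^{t-1}(s,a)=k\}$ to recover conditional i.i.d.\ structure, then union bound over the finitely many possible values of $k$. An alternative would be a martingale (Azuma) argument over the filtration generated by successive visits to $(s,a)$, but this is not needed here and would not tighten the constants. Once this is handled, the rest is bookkeeping on the union bound to match the prescribed logarithmic factor.
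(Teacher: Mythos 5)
Your proposal is correct and follows essentially the same route as the paper: Hoeffding's inequality per transition estimate followed by a union bound, with the $(t-1)H$ factor in the logarithm absorbing the union over possible values of the visit count. In fact you are more explicit than the paper about the one genuine subtlety (the random, data-dependent sample size $C_n^{t-1}(s,a)$), which the paper's proof glosses over entirely.
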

\begin{proof}
By Chernoff-Hoeffding inequality \citep{hoeffding1994probability}, we have
\begin{align*}
    \mathbb{P}\big(|{P}_n(s^\prime|s,a)-\hat{P}_n^t(s^\prime|s,a)|> \delta_n^t(s,a) \big)
\leq \frac{2\epsilon}{|\cS||\cA| N(t-1)H}.
\end{align*}
Using union bound on all states, actions and users, we have 
\begin{align*}
    \mathbb{P}(\mathds{\cE_p^t})&\leq\sum_{n=1}^N\sum_{(s,a)}\mathbb{P}\big(|{P}_n(s^\prime|s,a)-\hat{P}_n(s^\prime|s,a)|> \delta_n^t(s,a)\big)\\
    &\leq \frac{2\epsilon}{(t-1)H}.
\end{align*}
\end{proof}

Next, we have the inequality related with OMD update in Algorithm \ref{alg:importance-policy},  similar to \citet{jin2020learning} as follows. 
\begin{lemma}\label{lem:OMD}
The OMD update with $z_n^1(s,a,s') = \frac{1}{|\cS|^2|\cA|}$ for all $(s,a,s') \in \cS \times \cA \times \cS$, and
$z_n^{t+1} = \arg\max_{z_n\in\Delta^t}\; \eta \langle z_n, \hat{r}_t \rangle - D(z_n \;\|\; z_n^{t})$
 ensures
\begin{align*}
  \sum_{t=1}^T\sum_{n=1}^N\langle z_n - z_n^t, \hat{r}_n^t \rangle\leq \frac{NH\ln(|\cS|^2|\cA|)}{\eta}+\eta{NH(T+1)}. 
\end{align*}
for any $z \in \cap_t \  \Delta^t$, as long as $0\leq\eta\hat{r}_n^t(s,a) \leq 1$ for all $t,n, s,a$.
\end{lemma}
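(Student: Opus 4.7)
The plan is to follow the standard Online Mirror Descent regret analysis specialized to the unnormalized KL divergence in \eqref{eq:KL_divergence}, adapted to the per-arm decomposable structure. The argument proceeds in three conceptual stages: establishing a single-step OMD inequality, telescoping, and plugging in problem-specific bounds on the two resulting terms.

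First, I would establish the per-step inequality. Writing $\tilde z_n^{t+1}(s,a,s';h) = z_n^t(s,a,s';h)\exp(\eta \hat r_n^t(s,a))$ for the unconstrained maximizer of the OMD objective, a direct calculation from the definition of the unnormalized KL divergence gives the three-point identity
\begin{align*}
D(z_n \| z_n^t) - D(z_n \| \tilde z_n^{t+1}) = \eta \langle z_n, \hat r_n^t\rangle + \langle z_n^t, 1 - e^{\eta \hat r_n^t}\rangle.
\end{align*}
Applying the elementary inequality $e^x \leq 1 + x + x^2$ on $[0,1]$ to $x = \eta \hat r_n^t(s,a)$ (permissible by the hypothesis $0\le \eta \hat r_n^t\le 1$) and rearranging yields
\begin{align*}
\eta \langle z_n - z_n^t, \hat r_n^t\rangle \leq D(z_n \| z_n^t) - D(z_n \| \tilde z_n^{t+1}) + \eta^2 \langle z_n^t, (\hat r_n^t)^2\rangle.
\end{align*}
Next, since $z_n^{t+1}$ is the Bregman projection of $\tilde z_n^{t+1}$ onto the convex feasible set $\Delta^t$ (defined by the fluid-balance equalities and the confidence-ball inequalities from \eqref{eq:UCB_extended}), the generalized Pythagorean theorem for Bregman divergences gives $D(z_n \| \tilde z_n^{t+1}) \geq D(z_n \| z_n^{t+1})$ for any $z_n \in \cap_t \Delta^t$. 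Substituting this in produces the single-step bound with $z_n^{t+1}$ in place of $\tilde z_n^{t+1}$.

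Second, I would telescope over $t=1,\dots,T$ and bound each resulting term. Telescoping gives
\begin{align*}
\eta \sum_{t=1}^T \langle z_n - z_n^t, \hat r_n^t\rangle \leq D(z_n \| z_n^1) - D(z_n \| z_n^{T+1}) + \eta^2 \sum_{t=1}^T \langle z_n^t, (\hat r_n^t)^2\rangle.
\end{align*}
The second KL term is non-negative and can be dropped. For the initial KL term, since $z_n^1$ is uniform with mass $1/(|\cS|^2|\cA|)$ on each cell and the occupancy measure $z_n$ sums to $1$ in each of the $H$ layers, the layer-wise computation bounds $D(z_n \| z_n^1) \leq H\ln(|\cS|^2|\cA|)$. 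For the quadratic term, the overestimation construction \eqref{eq:reward_estimator} ensures $\hat r_n^t(s,a)\in[0,1]$, so $(\hat r_n^t)^2 \leq \hat r_n^t \leq 1$, giving $\langle z_n^t, (\hat r_n^t)^2\rangle \leq H$. Dividing by $\eta$ and summing over $n\in[N]$ then produces $\frac{NH\ln(|\cS|^2|\cA|)}{\eta}+\eta NH(T+1)$, matching the stated bound.

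The main obstacle I anticipate is the careful handling of the generalized Pythagorean inequality in this setting. Unlike a standard simplex projection, $\Delta^t$ is time-dependent and carved out by both equality constraints (the fluid balance) and the confidence-ball inequalities from \eqref{eq:confidence_ball}; I need the comparator $z_n$ to lie in $\cap_t \Delta^t$ so that the projection inequality applies at every step, which is exactly the hypothesis in the lemma. A secondary subtlety is that the KL in \eqref{eq:KL_divergence} is the \emph{unnormalized} KL divergence, so the three-point identity picks up the extra linear correction $\langle z_n^t, 1 - e^{\eta \hat r_n^t}\rangle$ (rather than $-\eta \langle z_n^t, \hat r_n^t\rangle$ as in the normalized case); verifying that the Taylor bound $e^x\le 1+x+x^2$ still cleanly yields the quadratic term $\eta^2 \langle z_n^t, (\hat r_n^t)^2\rangle$ is the one step that merits explicit calculation.
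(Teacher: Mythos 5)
Your proposal is correct and follows essentially the same route as the paper's proof: the same exponential-update form of the unconstrained maximizer, the same generalized Pythagorean inequality for the Bregman projection onto $\Delta^t$, the same telescoping, and the same Taylor bound $e^x\le 1+x+x^2$ (the paper writes it as $e^x-1-x\le x^2$ and keeps $D(z_n^t\|\tilde z_n^{t+1})$ as a separate term rather than absorbing it into the per-step inequality, but the algebra is identical). Your handling of the initial term $D(z_n\|z_n^1)$ is in fact marginally cleaner than the paper's, since you note the linear corrections of the unnormalized KL cancel, which yields a bound that is at most the stated one.
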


\begin{proof}
Based on \eqref{eq:solution_1st}, we have $\tilde{z}^{t}$ as
\begin{align}\nonumber
\tilde{z}_n^{t}(s, a, s';h) = z_n^{t-1}(s,a,s';h) \exp\left(\eta \hat{r}_n^t(s,a)\right), 
\end{align}
and  $z_n^{t} = \arg\min_{z\in\Delta^t} D(z \;\|\; \tilde{z}_n^{t})$.
According to \eqref{eq:KL_divergence}, we have
\begin{align}\nonumber
  &D(z_n \;\|\; z_n^t) - D(z_n \;\|\; \tilde{z}_n^{t+1}) + D(z_n^t \;\|\;  \tilde{z}_n^{t+1})\allowdisplaybreaks\\
  &=\sum_{s,a,s^\prime,h}z_n(s,a,s^\prime,h)\ln\frac{z_n(s,a,s^\prime,h)}{z_n^{t}(s,a,s^\prime,h)}-z_n(s,a,s^\prime,h)+z_n^{t}(s,a,s^\prime,h)\nonumber\allowdisplaybreaks\\
  &\quad-\sum_{s,a,s^\prime,h}z_n(s,a,s^\prime,h)\ln\frac{z_n(s,a,s^\prime,h)}{\tilde{z}_n^{t+1}(s,a,s^\prime,h)}+z_n(s,a,s^\prime,h)-\tilde{z}_n^{t+1}(s,a,s^\prime,h)\nonumber\allowdisplaybreaks\\
  &\quad+\sum_{s,a,s^\prime,h}z_n^t(s,a,s^\prime,h)\ln\frac{z_n^t(s,a,s^\prime,h)}{\tilde{z}_n^{t+1}(s,a,s^\prime,h)}-z_n^t(s,a,s^\prime,h)+\tilde{z}_n^{t+1}(s,a,s^\prime,h)\nonumber\allowdisplaybreaks\\
  &=\sum_{s,a,s^\prime,h}z_n(s,a,s^\prime,h)\ln\frac{\tilde{z}_n^{t+1}(s,a,s^\prime,h)}{z_n^{t}(s,a,s^\prime,h)}-\sum_{s,a,s^\prime,h}z_n^t(s,a,s^\prime,h)\ln\frac{\tilde{z}_n^{t+1}(s,a,s^\prime,h)}{z_n^{t}(s,a,s^\prime,h)}\nonumber\allowdisplaybreaks\\
  &=\eta\langle z_n - z_n^t, \hat{r}_n^t \rangle,
\end{align}
where the last inequality is due to \eqref{eq:solution_1st}. Furthermore, according to \eqref{eq:solution2} generalized Pythagorean theorem,  we have $ D(z_n \;\|\; {z}_n^{t+1})\leq D(z_n \;\|\; \tilde{z}_n^{t+1})$. Hence, the following inequality holds
\begin{align}\label{eq: lem8} \eta\sum_{t=1}^T\sum_{n=1}^N\langle z_n - z_n^t, \hat{r}_n^t \rangle&\leq \sum_{t=1}^T\sum_{n=1}^N D(z_n \;\|\; z_n^t) - D(z_n \;\|\; {z}_n^{t+1}) + D(z_n^t \;\|\;  \tilde{z}_n^{t+1})\allowdisplaybreaks\nonumber\\
&=\sum_{n=1}^N D(z_n \;\|\; z_n^1) - D(z_n \;\|\; {z}_n^{T+1})+\sum_{t=1}^T\sum_{n=1}^N D(z_n^t \;\|\;  \tilde{z}_n^{t+1})\nonumber\allowdisplaybreaks\\
&= \underset{(a_1)}{\underbrace{\sum_{n=1}^N \sum_{s,a,s^\prime,h}z_n(s,a,s^\prime,h)\ln\frac{z_n^{T+1}(s,a,s^\prime,h)}{z_n^{1}(s,a,s^\prime,h)}}}\nonumber\allowdisplaybreaks\\
&+\underset{(a_2)}{\underbrace{\sum_{n=1}^N \sum_{s,a,s^\prime,h}z_n^1(s,a,s^\prime,h)-z_n^{T+1}(s,a,s^\prime,h)}}+\underset{(a_3)}{\underbrace{\sum_{t=1}^T\sum_{n=1}^N D(z_n^t \;\|\;  \tilde{z}_n^{t+1})}}.
\end{align}
Next, we bound each term in \eqref{eq: lem8}. $(a_1)$ is bounded as
\begin{align*}
    (a_1)\leq \sum_{n=1}^N \sum_{s,a,s^\prime,h} z_n(s,a,s^\prime,h)\ln(|\cS|^2|\cA|)\leq NH\ln(|\cS|^2|\cA|),
\end{align*}
due to the initialization of $z_n^1(s,a,s^\prime,h), \forall s,a,h$. Similarly, $(a_2)$ is bounded as $(a_2)\leq NH$.
$(a_3)$ is bounded as follows,
\begin{align*}
D(z_n^t \;\|\; \tilde{z}_n^{t+1})
&= \sum_{s,a,s^\prime,h}z_n^t(s,a,s^\prime,h)\ln\frac{z_n(s,a,s^\prime,h)}{\tilde{z}_n^{t+1}(s,a,s^\prime,h)}-z_n(s,a,s^\prime,h)+\tilde{z}_n^{t+1}(s,a,s^\prime,h) \\
&= \sum_{s,a,s^\prime, h}-\eta\hat{r}_n^t(s,a)z_n^t(s,a,s^\prime,h)-z_n^t(s,a,s^\prime,h)+z_n^t(s,a,s^\prime,h)\exp(\eta\hat{r}_n^t(s,a))\\
&\leq \eta^2\sum_{s,a,s^\prime, h} z_n^t(s,a,s^\prime,h)\leq H\eta^2,
\end{align*}
where the first inequality is due to the fact $e^{z} - 1- z\leq z^2$ for all $z \in [0, 1]$, and the second inequality is due to the property of occupancy measure.
Substituting $(a_1)-(a_3)$ back to \eqref{eq: lem8}, we have
\begin{align*}
\sum_{t=1}^T\sum_{n=1}^N\langle z_n - z_n^t, \hat{r}_n^t \rangle\leq \frac{NH\ln(|\cS|^2|\cA|)}{\eta}+\eta{NH(T+1)}. 
\end{align*}
This completes the proof. 
\end{proof}

According to the definition of $\pi^{\mu^*}$ and $\tilde{\pi}^t$ and the fact that they are working on the problem with relaxed activation constraint, we can transform $Term_1$ into the following form:
\begin{align*}
    Term_1&=R(\bs_1, T, \pi^{\mu^*}, \{\hat{r}_n^t, \forall n, t\})-R(\bs_1, T, \{\tilde{\pi}^t, \forall t\}, \{\hat{r}_n^t, \forall n\in[N], t\in[T]\})\\
    &=\sum_{t=1}^T\sum_{n=1}^N\langle \mu_n^*, \hat{r}_n^t\rangle-\sum_{t=1}^T\sum_{n=1}^N\langle z_n^t, \hat{r}_n^t\rangle\\
    &=\sum_{t=1}^T\sum_{n=1}^N\langle \mu_n^* - z_n^t, \hat{r}_n^t \rangle.
\end{align*}
Since with probability $1-2\epsilon$, we have $\mu_n^*\in \cap_t \cZ^t$ according to Lemma \ref{lemma:failure_sets}. Hence, Lemma \ref{lem:term1} directly follows Lemma \ref{lem:OMD}.

\subsection{Proof of Lemma \ref{lem:term2}}

According to the definition, we can rewrite $Term_2$ as
\begin{align*}
    Term_2&=R(\bs_1, T, \{\tilde{\pi}^t, \forall t\}, \{\hat{r}_n^t, \forall n\in[N], t\in[T]\})-R(\bs_1, T, \{\tilde{\pi}^t, \forall t\}, \{{r}_n^t, \forall n\in[N], t\in[T]\})\\
    &=\sum_{t=1}^T\sum_{n=1}^N\langle  z_n^t, \hat{r}_n^t-r_n^t \rangle.
\end{align*}
The key is to characterize the gap between $\hat{r}_n^t-r_n^t$. We first present the widely used Hoeffding inequality in the following lemma.
\begin{lemma}[Hoeffding inequality \citep{bercu2015concentration}]
Let $X_1, X_2,\ldots$ be independent random variables with $b\leq|X_i|\leq c$ for all $i$. Define $S_n=X_1+\ldots+X_n$. Then for all $\epsilon>0$
\begin{align*}
    Pr\left(S_n-\mathbb{E}[S_n]>\epsilon\right)\leq \exp\left(-\frac{\epsilon^2}{2n(c-b)^2}\right).
\end{align*}
\label{lem:hoeffding}
\end{lemma}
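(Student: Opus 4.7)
I would follow the classical Chernoff-bound route that Hoeffding originally used. For any free parameter $s>0$, apply Markov's inequality to the nonnegative random variable $e^{s(S_n-\mathbb{E}[S_n])}$ to obtain
\begin{align*}
\Pr(S_n-\mathbb{E}[S_n]>\epsilon)\leq e^{-s\epsilon}\,\mathbb{E}\bigl[e^{s(S_n-\mathbb{E}[S_n])}\bigr].
\end{align*}
Because the $X_i$ are independent, so are the centred summands $Y_i:=X_i-\mathbb{E}[X_i]$, and the moment generating function factorises as $\mathbb{E}[e^{s(S_n-\mathbb{E}[S_n])}]=\prod_{i=1}^{n}\mathbb{E}[e^{sY_i}]$. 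This reduces the joint tail probability to the task of bounding the MGF of each individual centred summand.

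The next step is to control each factor $\mathbb{E}[e^{sY_i}]$. Under the stated hypothesis $b\leq|X_i|\leq c$, the centred variable $Y_i$ is mean zero and confined to a bounded interval of length at most $2(c-b)$, so Hoeffding's lemma yields a sub-Gaussian MGF bound of the form $\mathbb{E}[e^{sY_i}]\leq \exp\bigl(s^{2}(c-b)^{2}/2\bigr)$. Taking the product across $i$ gives $\mathbb{E}[e^{s(S_n-\mathbb{E}[S_n])}]\leq\exp\bigl(n s^{2}(c-b)^{2}/2\bigr)$, and therefore
\begin{align*}
\Pr(S_n-\mathbb{E}[S_n]>\epsilon)\leq \exp\Bigl(-s\epsilon+\tfrac{n s^{2}(c-b)^{2}}{2}\Bigr).
\end{align*}
The last step is a one-line calculus exercise: the quadratic in $s$ appearing in the exponent is minimised at $s^{\star}=\epsilon/(n(c-b)^{2})$, which yields the advertised bound $\exp\bigl(-\epsilon^{2}/(2n(c-b)^{2})\bigr)$.

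The only step that takes real work is Hoeffding's lemma itself, namely showing that a mean-zero random variable supported on an interval $[\alpha,\beta]$ admits a sub-Gaussian MGF bound with parameter proportional to $(\beta-\alpha)^{2}$. The standard derivation uses convexity of $y\mapsto e^{sy}$ to upper bound the MGF by that of the two-point distribution supported on the endpoints $\{\alpha,\beta\}$, then differentiates the log-MGF $\psi(s)$ twice, controls $\psi''(s)$ by $(\beta-\alpha)^{2}/4$, and integrates twice starting from $\psi(0)=\psi'(0)=0$ to recover the required quadratic upper bound. Everything else---Markov's inequality, factorisation via independence, and the scalar optimisation over $s$---is entirely routine.
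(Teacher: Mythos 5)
The paper does not prove this lemma at all --- it is imported verbatim as a classical result with a citation to Bercu et al., so there is no in-paper argument to compare against. Your proof is the standard and correct Chernoff--Cram\'er route: Markov's inequality on $e^{s(S_n-\mathbb{E}[S_n])}$, factorisation of the MGF by independence, Hoeffding's lemma for each centred summand, and optimisation over $s$. One pleasant feature of your bookkeeping is that it reproduces the exact (non-sharp) constant in the statement: you place $Y_i$ in the symmetric interval $[-(c-b),\,c-b]$ of length $2(c-b)$ and invoke Hoeffding's lemma with parameter $(2(c-b))^2/8=(c-b)^2/2$, which after optimising at $s^\star=\epsilon/(n(c-b)^2)$ gives precisely $\exp(-\epsilon^2/(2n(c-b)^2))$. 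The sharper bookkeeping (noting $Y_i$ lies in an interval of length exactly $c-b$) would give the familiar stronger bound $\exp(-2\epsilon^2/(n(c-b)^2))$; the lemma as stated is the weaker version, and your derivation matches it.

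One caveat worth flagging, though it is a defect of the statement rather than of your proof: the hypothesis is written as $b\leq|X_i|\leq c$, with absolute values. Read literally, this only confines $X_i$ to $[-c,-b]\cup[b,c]$, an interval of length up to $2c$, and the claimed bound is then false (take $X_i=\pm c$ equiprobably with $b=c$, so the right-hand side degenerates to $0$ while the left-hand side is positive). Your step ``$Y_i$ is confined to an interval of length at most $2(c-b)$'' is only valid under the intended reading $b\leq X_i\leq c$, which is clearly what the authors mean and what the cited reference states. With that reading your proof is complete and correct.
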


Notice that $\langle z_n^t, \hat{r}_n^t\rangle\leq HN$ due to the overestimation of $\hat{r}_n^t$ in  \eqref{eq:reward_estimator}, according the Hoeffding inequality in Lemma \ref{lem:hoeffding}, we have that with probablity at least $1-\epsilon$, such taht 
\begin{align*}
    \sum_{t=1}^T\sum_{n=1}^N \langle z_n^t, \hat{r}_n^t-\mathbb{E}[\hat{r}_n^t]\rangle\leq HN\sqrt{TN\ln1/\epsilon}.
\end{align*}
Thus, we have at least probability $1-3\epsilon$ that 
\begin{align}\label{eq:term2}
    Term_2&\leq \sum_{t=1}^T\sum_{n=1}^N\langle  z_n^t, \mathbb{E}[\hat{r}_n^t]-r_n^t \rangle+HN\sqrt{TN\ln 1/\epsilon}\leq \sum_{t=1}^T\sum_{n=1}^N\langle  z_n^t, \delta_n^t \rangle+HN\sqrt{TN\ln 1/\epsilon},
\end{align}
where the inequality holds due to the definition of $\hat{r}_n^t(s,a)$ and the fact that
$\frac{r_n(s,a)\mathds{1}(\exists h, S_n^{t}(h)=\!s, A_n^{t}(h)=\!a)}{\max\{c_n^{t}(s,a),1\}/H}$ is an unbiased estimator of $r_n^t(s,a)$.

Next, we bound $\sum_{t=1}^T\sum_{n=1}^N\langle  z_n^t, \delta_n^t \rangle$ in the following lemma.
\begin{lemma}\label{lem:10}
The following inequality holds
\begin{align*}
   \sum_{t=1}^T\sum_{n=1}^N\langle  z_n^t, \delta_n^t \rangle \leq
\sqrt{2}H\sqrt{\ln\frac{4SANT}{\epsilon}}\cdot \sqrt{NT}.
\end{align*}
\end{lemma}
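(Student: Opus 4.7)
The plan is to first collapse the inner product into a simpler form. Since $\delta_n^t(s,a)$ has no $s'$ or $h$ dependence while $z_n^t(s,a,s';h) = \mu_n^t(s,a;h)\tilde{P}_n^t(s'|s,a)$, marginalizing $s'$ (using $\sum_{s'}\tilde{P}_n^t(s'|s,a)=1$) and summing over $h$ yields
\begin{align*}
\langle z_n^t,\delta_n^t\rangle \;=\; \sum_{h=1}^H\sum_{s,a}\mu_n^t(s,a;h)\,\delta_n^t(s,a)\;=\;\sum_{s,a}m_n^t(s,a)\,\delta_n^t(s,a),
\end{align*}
where $m_n^t(s,a):=\sum_h \mu_n^t(s,a;h)$ is the expected number of visits of arm $n$ to $(s,a)$ during episode $t$ under the policy $\tilde{\pi}^t$.

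Next I would apply the Cauchy--Schwarz inequality to peel apart the ``mass'' $m_n^t$ from the bonus $\delta_n^t$:
\begin{align*}
\sum_{s,a}m_n^t(s,a)\,\delta_n^t(s,a)\;\leq\;\sqrt{\sum_{s,a}m_n^t(s,a)}\cdot\sqrt{\sum_{s,a}m_n^t(s,a)\,\delta_n^t(s,a)^2}.
\end{align*}
Since $\sum_{s,a}\mu_n^t(s,a;h)=1$ for each $h$, the first factor equals $\sqrt{H}$. Plugging in $\delta_n^t(s,a)^2=\tfrac{1}{2C_n^{t-1}(s,a)}\ln\!\tfrac{4|\cS||\cA|N(t-1)H}{\epsilon}$ and then a second Cauchy--Schwarz over the indices $(t,n)$ reduces the task to bounding the scalar quantity $\sum_{t,n,s,a} m_n^t(s,a)/C_n^{t-1}(s,a)$.

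The main technical step is to control this last sum. The plan is to first replace the expected count $m_n^t(s,a)$ by the realized count $c_n^t(s,a)$ via an Azuma--Hoeffding martingale concentration: the per-episode martingale differences $c_n^t(s,a)-m_n^t(s,a)$ are bounded by $H$, so a union bound over $(n,s,a)$ produces an $O(H\sqrt{T\ln(|\cS||\cA|N/\epsilon)})$ slack term, which aggregates into the $HN\sqrt{NT\ln(1/\epsilon)}$ additive piece already present in Lemma~\ref{lem:term2}. After that substitution, the standard pigeonhole/integral-comparison identity $\sum_{t=1}^T c_n^t(s,a)/\max\{1,C_n^{t-1}(s,a)\}\leq 1+\ln C_n^T(s,a)$ applies, and one may then use the global budget $\sum_{s,a}C_n^T(s,a)=TH$ per arm together with the concavity of $\sqrt{\cdot}$ to finish.

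The hardest part will be the martingale concentration step, since the expected counts $m_n^t(s,a)$ are not directly observed and the OMD update makes $\mu_n^t$ data-dependent; care is needed to set up the filtration so that $c_n^t(s,a)-m_n^t(s,a)$ really is a martingale difference, and to ensure the resulting high-probability event intersects consistently with the transition-confidence event $\cE_p^t$ used throughout the analysis. Once that concentration is in place, the remaining algebra is routine Cauchy--Schwarz bookkeeping and absorption of $\ln$ factors into $\ln\tfrac{4|\cS||\cA|NT}{\epsilon}$, yielding the stated bound.
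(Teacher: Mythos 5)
Your opening reduction (marginalizing $s'$ out of $z_n^t$ and noting $\sum_{s,a}m_n^t(s,a)=H$) is fine, but from there you take a genuinely different route from the paper, and as planned it does not land on the stated inequality. The paper's own proof is far cruder: it uses $m_n^t(s,a)\le H$ pointwise to discard the occupancy weights entirely, giving $\langle z_n^t,\delta_n^t\rangle\le H\sum_{s,a}\delta_n^t(s,a)$, then collapses the resulting sum of $\sqrt{1/C_n^t(s,a)}$ over $t$ into $\sqrt{C_n^T(s,a)}$, applies Cauchy--Schwarz over $(s,a)$, and invokes the count budget $\sum_{n,s,a}C_n^T(s,a)\le NT$. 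No martingale concentration appears anywhere, which is consistent with the lemma being stated as a deterministic inequality. Your route, by contrast, requires a high-probability event for the $m_n^t(s,a)\to c_n^t(s,a)$ substitution; absorbing its slack ``into the additive piece already present in Lemma~\ref{lem:term2}'' silently changes what Lemma~\ref{lem:10} asserts, and it double-counts a concentration budget that the paper already spends on $\hat{r}_n^t-\mathbb{E}[\hat{r}_n^t]$ in the proof of Lemma~\ref{lem:term2}.

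The more concrete gap is quantitative: track the parameters your chain produces. After the weighted Cauchy--Schwarz you have $\sqrt{H}\cdot\sqrt{(L/2)\sum_{s,a}m_n^t(s,a)/C_n^{t-1}(s,a)}$ with $L$ the logarithmic factor; the second Cauchy--Schwarz over $(t,n)$ contributes $\sqrt{NT}$; and the pigeonhole bound $\sum_t c_n^t(s,a)/\max\{1,C_n^{t-1}(s,a)\}=O(\log C_n^T(s,a))$ holds \emph{per} $(n,s,a)$ pair, so $\sum_{t,n,s,a}c_n^t(s,a)/C_n^{t-1}(s,a)=\tilde{\mathcal{O}}(N|\cS||\cA|)$. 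Multiplying out gives $\tilde{\mathcal{O}}\big(\sqrt{H}\,N\sqrt{|\cS||\cA|T}\big)$, which carries an extra factor of order $\sqrt{N|\cS||\cA|/H}$ relative to the claimed $\sqrt{2}H\sqrt{NT\ln(4|\cS||\cA|NT/\epsilon)}$: better in $H$, strictly worse in $N$ and $|\cS||\cA|$, hence not implying the lemma as stated. (For completeness, the paper's own write-up is also loose---the telescoping of $\sum_t 1/\sqrt{C_n^t(s,a)}$ without the visit counts in the numerator and the $\sqrt{2}$ in place of $\sqrt{|\cS||\cA|}$ in its Cauchy--Schwarz step are not justified as written---but the intended argument is the weight-dropping one, not yours.) To recover the stated bound you would need to avoid the per-pair pigeonhole and instead exploit $\sum_{s,a}m_n^t(s,a)=H$ in a way that keeps $|\cS||\cA|$ inside the logarithm only.
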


\begin{proof}
The proof goes as follows.
\begin{align*}
\sum_{t=1}^T\sum_{n=1}^N\langle  z_n^t, \delta_n^t \rangle 
&\overset{(a)}{\leq} H\sum_{t=1}^T\sum_{n=1}^N \sum_{s,a}\delta_n^t(s,a)\displaybreak[3]\\
&\leq H\sum_{t=1}^{T}\sum_{n=1}^N\sum_{s,a} \sqrt{\frac{1}{2C_n^{t}(s,a)}\ln\frac{4SANT}{\epsilon}}\allowdisplaybreaks\\
&{\leq}H\sqrt{\ln\frac{4SANT}{\epsilon}}\sum_{n=1}^N\sum_{(s,a)} \sqrt{{C_n^{T}(s,a)}}\allowdisplaybreaks\\
&\overset{(b)}{\leq} \sqrt{2}H\sqrt{\ln\frac{4SANT}{\epsilon}}\sum_{n=1}^N \sqrt{\sum_{(s,a)}{C_n^{T}(s,a)}}\allowdisplaybreaks\\
&\overset{(c)}{\leq} \sqrt{2}H\sqrt{\ln\frac{4SANT}{\epsilon}}\cdot \sqrt{NT},
\end{align*}
where (a) follows since $z_n^t$ is a probability measure, (b) follows Cauchy-Schwartz inequality and (c) uses the fact that $\sum_{n=1}^N\sum_{(s,a)}C_n^T(s,a)\leq NT.$ This completes the proof.
\end{proof}

\textbf{Bound on $Term_2$.} Combining the results in \eqref{eq:term2} and Lemma \ref{lem:10}, we can bound $Term_2$ as
\begin{align*}
    Term_2\leq \sqrt{2}H\sqrt{\ln\frac{4SANT}{\epsilon}}\cdot \sqrt{NT}+HN\sqrt{TN\ln 1/\epsilon}.
\end{align*}

\subsection{Proof of Lemma \ref{lem:term3}}\label{sec:thm_regret_proof}

According to Lemma \ref{lemma:failure_sets}, with probability at least $1-2\epsilon$,  the true transition kernel $\cP$ is inside the confidence ball as defined in \eqref{eq:ep_set}. Due to the optimism of the confidence ball, in each episode, we have that $\tilde{\pi}^t$ under an optimisitic MDP with transition $\tilde{\cP}$ achieves no worse performance than $\pi^t$ under the true MDP with transition $\cP$. Hence, we decompose the regret in $Term_3$ into episodic regrets.  
For simplicity, we denote 
$c_n^t(s,a):=\sum_{h=1}^{H}\mathds{1}(S_n^{t,h}=s,A_n^{t,h}=a)$  as the state-action counts for $(s,a)$ in episode $t$, and $\gamma^{t,*}$ as the average reward achieved per decision epoch by policy $\tilde{\pi}^t$. Then, we define the regret during episode $t$ as follows,
\begin{align}\label{eq:reg_epis}
\Delta_t:=H\gamma^{t,*}-\sum_{(s,a)}\sum_n c_n^t(s,a)r_n^t(s,a).
\end{align}
The relation between the total regret in $Term_3$ and the episodic regrets $\Delta_k$ is as follows.  
\begin{lemma}
The regret in $Term_3$ is upper-bounded by
\begin{align}\label{eq:regret-upperbound}
Term_3\leq \sum\limits_{t=1}^{T} \Delta_t+ \sqrt{\frac{1}{4}T\log\Big(\frac{|\cS||\cA|NT}{\epsilon}\Big)}, 
\end{align}
with probability at least $1-\epsilon$. 
\end{lemma}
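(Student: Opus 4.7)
The plan is to split $Term_3$ cleanly into a deterministic per-episode regret sum plus a martingale concentration term, mirroring the decomposition already used for UCRL-style arguments in stochastic RMAB. Since $\gamma^{t,*}$ is defined as the expected per-epoch reward earned by $\tilde{\pi}^t$ on the plausible MDP drawn from the OMD step, the optimistic reward across episodes collapses to $R(\bs_1, T, \{\tilde{\pi}^t\}, \{r_n^t\}) = \sum_{t=1}^T H\gamma^{t,*}$. For the other half, I would set
$$X_t := \sum_{(s,a)}\sum_{n} c_n^t(s,a)\,r_n^t(s,a),$$
which is exactly the total adversarial reward realized by the RMI policy $\pi^t$ during episode $t$, so that by the definition of $R_t(\pi^t)$ in \eqref{eq:reward-ept} one has $R_t(\pi^t) = \mathbb{E}[X_t]$ with the expectation taken over trajectories induced by $\pi^t$ and the true $\{P_n\}$.

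The next step is a simple add-and-subtract:
$$Term_3 \;=\; \sum_{t=1}^T\!\bigl(H\gamma^{t,*}-X_t\bigr) \;+\; \sum_{t=1}^T\!\bigl(X_t-\mathbb{E}[X_t]\bigr) \;=\; \sum_{t=1}^T \Delta_t \;+\; M_T,$$
where the first sum matches $\sum_t \Delta_t$ verbatim via \eqref{eq:reg_epis}, and $M_T := \sum_{t=1}^T (X_t - \mathbb{E}[X_t\mid \mathcal{F}_{t-1}])$ is a martingale difference sum relative to the natural filtration $\mathcal{F}_t$ generated by the trajectories up to the end of episode $t$. Note that the optimism provided by Lemma~\ref{lemma:failure_sets} (event $\cE_p^t$) justifies that $H\gamma^{t,*}$ is a legitimate upper proxy for episodic performance; on this high-probability event the decomposition is valid.

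Controlling $M_T$ is the concentration step. Since $r_n^t(s,a)\in[0,1]$ and the counts $c_n^t(s,a)$ are bounded, the martingale increments $X_t - \mathbb{E}[X_t\mid\mathcal{F}_{t-1}]$ are bounded random variables, so Azuma–Hoeffding applies directly. To match the advertised concentration radius $\sqrt{\tfrac{1}{4}T\log(|\cS||\cA|NT/\epsilon)}$, I would apply Azuma–Hoeffding at the per-$(s,a,n)$ level to the rescaled increments $r_n^t(s,a)\bigl(c_n^t(s,a)-\mathbb{E}[c_n^t(s,a)\mid\mathcal{F}_{t-1}]\bigr)$ and take a union bound over the $|\cS||\cA|N$ triples (and, if needed, over episodes $t\in[T]$). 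The rewards being in $[0,1]$ and the normalization by $H$ carried implicitly in $c_n^t(s,a)/H$ are what pin the square-root factor at $\sqrt{T}$ rather than the cruder $NH\sqrt{T}$ one gets from a blind one-shot Azuma bound on $X_t$.

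The main obstacle is precisely this calibration of the Azuma step. A naive application that treats $X_t$ as a single $[0,NH]$-valued random variable will give a concentration radius scaling with $NH\sqrt{T\log(1/\epsilon)}$, which is too loose to yield the clean form in the lemma. The delicate part is therefore to exploit the fine-grained $(s,a,n)$ structure so that the union bound contributes only a $\log(|\cS||\cA|NT)$ term while the square-root retains only the episode count $T$. Once $|M_T|$ is bounded in this way on an event of probability at least $1-\epsilon$, combining with the identity $Term_3 = \sum_t \Delta_t + M_T$ produces the stated inequality \eqref{eq:regret-upperbound}.
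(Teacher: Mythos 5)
Your proposal is correct and follows essentially the same route as the paper: decompose $Term_3$ into $\sum_t \Delta_t$ plus the deviation of the realized episodic rewards $\sum_{(s,a)}\sum_n c_n^t(s,a)r_n^t(s,a)$ from their conditional expectations $R_t(\pi^t)$, then control that martingale sum with Azuma--Hoeffding. Your remark about calibrating the Azuma radius (a one-shot bound on the $[0,BH]$-valued episodic increments would not yield the stated $\sqrt{\tfrac14 T\log(\cdot)}$ form) is a fair observation that the paper's own one-line invocation of Azuma--Hoeffding does not address either, but it does not change the fact that the two arguments are structurally identical.
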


\begin{proof}
Using Azuma-Hoeffding's inequality, we have
\begin{align}\nonumber
\mathbb{P}&\Bigg(R(\bs_1, T,\{\pi^t, \forall t\})\leq \sum_{t=1}^T\sum\limits_n\sum\limits_{(s,a)} c_n^t(s,a)r^t_n(s,a)
-\sqrt{\frac{1}{4}T\log\Big(\frac{|\cS||\cA|NT}{\epsilon}\Big)}\Bigg)\\
&\qquad\leq \left(\frac{\epsilon}{|\cS||\cA|NT}\right)^{1/2}. 
\end{align}
Therefore,
\begin{align*}
\Delta(T)&=\sum_{t=1}^TH\gamma^{t,*}-R(\bs_1, T,\{\pi^t, \forall t\})\displaybreak[0]\\
&\leq \sum_{t=1}^TH\gamma^{t,*}-\sum_{t=1}^T\sum\limits_n\sum\limits_{(s,a)} c_n^t(s,a)r^t_n(s,a)
+ \sqrt{\frac{1}{4}T\log\Big(\frac{|\cS||\cA|NT}{\epsilon}\Big)}\displaybreak[1]\\
&=\sum\limits_{k=1}^{K} \Delta_k+ \sqrt{\frac{1}{4}T\log(\frac{|\cS||\cA|NT}{\epsilon})}.
\end{align*}
This completes the proof.
\end{proof}

 The following lemma characterizes the regret under scenario where the true transition $\cP$ is outside of the confidence ball, which occurs at a small probability at most $2\epsilon$. 
\begin{lemma}\label{lemma:regret-failure-events}
We have $$\sum_{t=1}^{T}\Delta_t\mathds{1}(\cP\notin \cE_p^t)\leq B\sqrt{TH}.$$ 
  
\end{lemma}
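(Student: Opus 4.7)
The plan is to combine a deterministic per-episode bound on $\Delta_t$ with the failure-probability estimate from the proof of Lemma~\ref{lemma:failure_sets}, and then apply a Markov-type tail argument.

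First, I would observe that $\Delta_t \leq BH$ holds deterministically for every episode $t$. Indeed, both $H\gamma^{t,*}$ and $\sum_{(s,a),n} c_n^t(s,a) r_n^t(s,a)$ appearing in the definition of $\Delta_t$ are nonnegative and bounded above by $BH$: at each of the $H$ decision epochs the instantaneous activation constraint allows at most $B$ arms to be active, and every active arm contributes a reward in $[0,1]$ (passive arms contribute zero).

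Second, I would use the per-episode failure bound $\mathbb{P}(\cP \notin \cE_p^t) \leq \frac{2\epsilon}{(t-1)H}$ derived inside the proof of Lemma~\ref{lemma:failure_sets} to control the number of failure episodes. Letting $N_T := \sum_{t=1}^T \mathds{1}(\cP \notin \cE_p^t)$, the expected count is $\mathbb{E}[N_T] \leq \sum_{t=2}^T \frac{2\epsilon}{(t-1)H} \leq \frac{2\epsilon \log T}{H}$, and Markov's inequality then yields $N_T \leq \sqrt{T/H}$ with probability at least $1 - \frac{2\epsilon \log T}{\sqrt{TH}}$, which absorbs into the overall $1 - O(\epsilon)$ confidence level of Theorem~\ref{thm:regret}. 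Combining these two steps yields $\sum_{t=1}^T \Delta_t \mathds{1}(\cP \notin \cE_p^t) \leq BH \cdot N_T \leq BH \cdot \sqrt{T/H} = B\sqrt{TH}$, as claimed.

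The main obstacle I anticipate is the probabilistic bookkeeping rather than any single hard inequality: one has to verify that the Markov tail event used here, together with the failure events already handled in Lemmas~\ref{lem:term1}~and~\ref{lem:term2} and the confidence-set failures in Lemma~\ref{lemma:failure_sets}, can be combined (via a union bound) into the global $1 - 3\epsilon$ guarantee stated in Theorem~\ref{thm:regret} without inflating the confidence budget. In fact the bound $B\sqrt{TH}$ is intentionally slack (the expected contribution is only $O(B\log T)$); it is phrased this way so that it cleanly absorbs into the dominant $\tilde{\mathcal{O}}(B\sqrt{|\cS||\cA|NTH})$ term of Lemma~\ref{lem:term3}, so one could equivalently argue by union bound that with probability $\geq 1-2\epsilon$ no failure episode occurs at all and the sum is identically zero.
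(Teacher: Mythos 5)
Your proof is correct, and its first half is exactly the paper's: the paper likewise reduces to the deterministic bound $\Delta_t\le HB$ (via $H\gamma^{t,*}\le HB$ and nonnegativity of the realized reward, equivalently by noting at most $B$ arms with rewards in $[0,1]$ are active per epoch). Where you diverge is in controlling the failure episodes. The paper writes $\sum_{t=1}^{T} HB\,\mathds{1}(\cP\notin \cE_p^t)\le B\sum_{i=1}^{TH} i\,\mathds{1}(\cP\notin\cE_p^t)\le B\sqrt{TH}$, which is a (rather compressed) invocation of the standard UCRL2-style argument: index failures by elapsed time, note that episodes starting before time $\sqrt{TH}$ contribute at most $B\sqrt{TH}$ deterministically, and that later failures occur with negligible total probability because $\mathbb{P}(\cP\notin\cE_p^t)$ decays in $t$. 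You instead bound $\mathbb{E}[N_T]$ for $N_T=\sum_t\mathds{1}(\cP\notin\cE_p^t)$ using the per-episode failure probability $2\epsilon/((t-1)H)$ and apply Markov's inequality to get $N_T\le\sqrt{T/H}$ with high probability. Both yield $B\sqrt{TH}$ and both implicitly turn the lemma into a high-probability statement (the paper omits the qualifier); your route is more self-contained and makes the probabilistic bookkeeping explicit, at the cost of a confidence level of the form $1-O(\epsilon\log T/\sqrt{TH})$ rather than a clean $1-c\epsilon$, which, as you note, is harmless since it is $o(\epsilon)$ and one can alternatively union-bound to conclude that no failure episode occurs at all with probability $1-2\epsilon$.
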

\begin{proof}
 By Lemma \ref{lemma:failure_sets}, we have
 \begin{align*}
     \sum_{t=1}^{T}\Delta_t\mathds{1}(\cP\notin \cE_p^t)&\leq \sum_{t=1}^{T}\sum_n\sum_{(s,a)}c_n^t(s,a)\mathds{1}(\cP\notin \cE_p^t)\displaybreak[0]\\
&\leq \sum_{t=1}^{T} HB \mathds{1}(\cP\notin \cE_p^k)\displaybreak[1]\\
&\leq B\sum_{i=1}^{TH}i\mathds{1}(\cP\notin \cE_p^t)\leq B\sqrt{TH}.
 \end{align*}

\end{proof}

We next bound the regrets $\Delta_k$ when the true transition lies in the confidence ball, i.e., event $\cE_p^t$ occurs.

\begin{lemma}\label{lemma_good_events}
Under event $\cE_p^t$, we have 
\begin{align}\nonumber
\Delta_t\mathds{1}(\cP\in\cE_p^t) &\leq 
\sum_{(s,a)}\sum_nc_n^t(s,a)(\gamma^{t,*}/B-r_n^t(s,a))\displaybreak[0]\\
&\qquad+\sum_{(s,a)}\sum_nc_n^t(s,a)\kappa_1,
\end{align}
where $\kappa_1>0$ is as in Lemma~\ref{lemma:kappa} and $\tilde{\gamma}^t$ is the average reward achieved by an optimistic policy induced from the confidence ball (i.e., $\cE_p^t$) when the rewards is shifted by $2\delta_n^t(s,a), \forall s,a,n$, i.e.,  $\tilde{r}^t_n(s,a):=r^t_n(s,a)+2\delta_n^t(s,a)$.
\end{lemma}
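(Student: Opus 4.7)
The plan is to bound $\Delta_t$ in the good event $\cE_p^t$ by rewriting it in terms of the executed visitation counts $c_n^t(s,a)$ and then absorbing the slack from working on an optimistic MDP into a per-visit constant $\kappa_1$.

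First, I would exploit the structure of the RMI index policy: at every decision epoch exactly $B$ arms receive action $a=1$, so $\sum_{n,s} c_n^t(s,1) = BH$. Combined with the modeling convention $r_n^t(s,0)=0$, this produces the bookkeeping identity
\begin{align*}
H\gamma^{t,*} \;=\; \frac{\gamma^{t,*}}{B}\sum_{n,s} c_n^t(s,1) \;=\; \sum_{n,(s,a)} c_n^t(s,a)\,\frac{\gamma^{t,*}}{B},
\end{align*}
where in the last expression we include $a=0$ freely because the corresponding reward terms vanish. Substituting into the definition \eqref{eq:reg_epis} of $\Delta_t$ turns it into the first summation on the right-hand side of the claim; all that remains is to justify the extra $\sum_{n,(s,a)} c_n^t(s,a)\,\kappa_1$ slack.

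Second, that slack should come from the optimism. Because we are restricted to $\cE_p^t$, the true kernel $P_n$ lies in the confidence set $\cP_n^t$, and $\gamma^{t,*}$ is achieved by $\tilde{\pi}^t$ on a plausible MDP $\tilde{M}^t$ drawn from that set. I would introduce the shifted-reward quantity $\tilde{\gamma}^t$ defined in the lemma's preamble and use the standard ``optimism dominates true'' argument: under $\cE_p^t$, $\gamma^{t,*}\le \tilde{\gamma}^t$, and an average-reward Bellman-type equation on $\tilde M^t$ lets me write $\tilde\gamma^t/B - \tilde r_n^t(s,a)$ (per activated pair) as a transition-uncertainty residual that is bounded uniformly by some $\kappa_1>0$. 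Converting back from $\tilde r_n^t$ to $r_n^t$ contributes the $2\delta_n^t(s,a)$ shift, which is also absorbed into $\kappa_1$ (as stated in the forthcoming Lemma~\ref{lemma:kappa}).

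The main technical obstacle is precisely this last step: unlike adversarial MDP/CMDP, our system has no single stationary benchmark policy, so I cannot invoke a ready-made average-reward Bellman equation. I would instead express $\gamma^{t,*}$ through the OMD-induced occupancy measure $z^{t,\star}$ from \eqref{eq:UCB_extended}, and convert the extended-LP constraints (which already encode the confidence ball on transitions and the relaxed activation constraint) into a bound of the form $\gamma^{t,*}/B - r_n^t(s,a) \le \kappa_1$ uniformly over visited $(n,s,a)$. Summing over the actual visits then gives the claimed $\sum_{(s,a),n} c_n^t(s,a)\,\kappa_1$ correction, completing the bound and setting up the per-visit summation that will be controlled in subsequent lemmas via $\sum_{t,n,(s,a)} c_n^t(s,a) \le NT$.
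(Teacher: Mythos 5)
Your core argument is essentially the paper's: the paper's own proof is a one-line appeal to the definition \eqref{eq:reg_epis} together with the fact that at most $HB$ activations occur per episode, which is exactly your bookkeeping step of spreading $H\gamma^{t,*}$ over the visitation counts at rate $\gamma^{t,*}/B$ and deferring the optimism slack to Lemma~\ref{lemma:kappa}. Two small corrections. First, your displayed ``identity'' is not an identity: $\sum_{n,(s,a)}c_n^t(s,a)\,\gamma^{t,*}/B = NH\gamma^{t,*}/B$, not $H\gamma^{t,*}$, and the $a=0$ terms you add are $c_n^t(s,0)\gamma^{t,*}/B$, which do not vanish (only the \emph{reward} terms $r_n^t(s,0)$ do). What is true, and all that is needed, is the inequality $H\gamma^{t,*}=\frac{\gamma^{t,*}}{B}\sum_{n,s}c_n^t(s,1)\le \frac{\gamma^{t,*}}{B}\sum_{n,(s,a)}c_n^t(s,a)$, valid since $\gamma^{t,*}\ge 0$; this already yields the claimed bound (the $\kappa_1$ term is additional nonnegative slack reserved for the next step). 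Second, your final paragraph overcomplicates the task: no Bellman-type equation or uniform bound $\gamma^{t,*}/B-r_n^t(s,a)\le\kappa_1$ is required for \emph{this} lemma, which is pure bookkeeping; the optimism argument lives entirely in Lemma~\ref{lemma:kappa} and in the subsequent Lemma~\ref{lemma:regret-good-events}, where the first summand is compared against $\tilde r_n^t$ and shown non-positive.
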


\begin{proof}
This result follows directly from the definition of episodic regret in \eqref{eq:reg_epis} and the fact that the total activated arms count should be less than $HB$ per episode.
\end{proof}

\begin{lemma}\label{lemma:kappa}
With probability at least $1-2\epsilon$, we have 
$\tilde{\gamma}^t\geq \gamma^{t,*}-\kappa_1,$
where $\kappa_1=\cO(\frac{c_1}{\sqrt{tH}})$ with $0<c_1<1$.
\end{lemma}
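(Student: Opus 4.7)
The plan is to compare the two average-reward quantities via a Bellman/simulation-lemma argument under the confidence-ball event, and then bound the residual concentration error by Hoeffding over the $tH$ samples accumulated by the start of episode $t$.

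First I would invoke Lemma~\ref{lemma:failure_sets}: with probability at least $1-2\epsilon$, every plausible transition inside $\cE_p^t$ satisfies $|\tilde P_n(s'|s,a) - \hat P_n^t(s'|s,a)| \le \delta_n^t(s,a)$ for all $n, s, a, s'$. On this event I would write the average-reward Bellman equations for $\tilde{\gamma}^t$ on the empirical-plus-bonus MDP with $(\hat P^t, r^t + 2\delta^t)$ and for $\gamma^{t,*}$ on the plausible MDP with $(\tilde P, r^t)$ induced by $\tilde{\pi}^t$, and subtract them. Because $\tilde{\pi}^t$ is greedy (through the OMD update in \eqref{eq:UCB_extended}) on the bonused empirical MDP, a standard extended-value-iteration argument shows that the $+2\delta^t$ reward bonus absorbs the transition perturbation $|\tilde P - \hat P^t| \le \delta^t$ after propagation through a bounded bias function, giving $\tilde{\gamma}^t \ge \gamma^{t,*}$ modulo a term arising purely from finite-sample fluctuations.

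Second, I would bound that residual fluctuation via the Hoeffding inequality (Lemma~\ref{lem:hoeffding}). Writing the residual as a martingale-difference sum indexed by the at most $tH$ decision epochs elapsed before episode $t$, with each increment uniformly bounded by $\mathcal{O}(1)$ (since true rewards lie in $[0,1]$ and $\hat r_n^t$ is capped at $1$ via \eqref{eq:reward_estimator}), Azuma--Hoeffding yields a $c_1/\sqrt{tH}$ deviation with probability at least $1-\epsilon$, where the constant $c_1 \in (0,1)$ depends on $|\cS|, |\cA|, N$ and $\epsilon$ through union bounds already folded into the definition of $\delta^t$. A final union bound combining this concentration with the transition-confidence event preserves the $1-2\epsilon$ failure probability in the statement.

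The principal technical obstacle is to ensure that $\kappa_1$ scales as $1/\sqrt{tH}$ and not $H/\sqrt{tH}$. A naive simulation-lemma bound compounds the transition perturbation with the $\mathcal{O}(H)$ span of the finite-horizon bias function, and combined with $\delta_n^t(s,a) \sim 1/\sqrt{C_n^{t-1}(s,a)}$ this would forfeit the desired rate. The resolution is precisely the $+2\delta^t$ shift in the optimistic reward: it is engineered to pointwise dominate the bias-weighted transition deviation at every state-action, so that after cancellation only a pure sampling-noise term remains, which concentrates at the target rate.
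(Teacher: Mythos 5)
Your route is genuinely different from the paper's, and it contains a gap at exactly the step you yourself flag as the principal obstacle. You propose to subtract the average-reward Bellman equations of the two MDPs and claim that the $+2\delta_n^t(s,a)$ reward bonus ``pointwise dominates the bias-weighted transition deviation at every state-action,'' so that the $\mathcal{O}(H)$ span of the bias function never multiplies the confidence width. But the bonus is only $2\delta_n^t(s,a)$, whereas the transition perturbation term in a simulation-lemma decomposition is of order $\|\tilde P_n(\cdot|s,a)-\hat P_n^t(\cdot|s,a)\|_1\cdot \mathrm{span}(\text{bias})$, i.e.\ up to $|\cS|H\,\delta_n^t(s,a)$. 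A reward shift of $2\delta$ cannot absorb a term that is $\Omega(H\delta)$ unless you separately establish that the relevant bias/value span is $O(1)$, which you do not. In UCRL2-style analyses this transition term is precisely what survives as a separate contribution to the regret; it is not cancelled by the reward optimism. Your concluding Azuma--Hoeffding step is also not attached to a well-defined martingale: $\gamma^{t,*}$ and $\tilde\gamma^t$ are expected per-epoch rewards of two policies in episode $t$, not empirical averages over the $tH$ elapsed decision epochs, so summing $tH$ bounded increments does not produce a $1/\sqrt{tH}$ bound on their difference.

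For contrast, the paper never invokes a Bellman decomposition or a bias function here. It compares, at each decision epoch $h$, the set $S_h^t$ of $B$ arms activated by the optimistic policy with the set $S_h^{t,*}$ activated by the comparator: since the optimistic policy selects the top-$B$ arms under the shifted rewards $\tilde r_n^t = r_n^t + 2\delta_n^t$, its shifted-reward total dominates, and on the confidence event each shifted reward differs from the true reward by at most $2\delta_n^t$; because the two sets differ in at most $B$ arms, the per-epoch true-reward gap is at most $2B\delta^t$, giving $\gamma^{t,*}-\tilde\gamma^t\le 2B\delta^t=\mathcal{O}(B/\sqrt{tH})$. The transition error simply does not enter this exchange argument (the comparison is made state-wise on the index values), which is how the paper sidesteps the $H$-span compounding that your approach has to confront head-on. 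If you want to pursue your Bellman route, you would need either a bonus of order $H\delta$ (which would then degrade the final rate) or an explicit bound on the span of the optimistic bias function; as written, the key domination claim does not hold.
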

\begin{proof}
We denote by $S_h^t$ and 
$S_h^{t,*}$ the arms that are activated by an optimistic policy and  $\tilde{\pi}^t$ under true reward and at the $h$-th decision epoch of episode $t$. For abuse of notation, we use $\tilde{r}_n^t(h)$ to denote the reward of a specific state-action pair $(s,a)$ visited at $h$ for arm $n$. Due to optimism, we have 
\begin{align}\nonumber
\sum_{n\in S_h^t\setminus S_h^{t,*}}\tilde{r}^t_n(h)\geq \sum_{n\in S_h^{t,\star}\setminus S_h^t}\tilde{r}^t_n(h).
\end{align}
When conditioned on the event $\cE_p^t$ simultaneously, we have
\begin{align*}
\sum_{n\in S_h^k\setminus S_h^{k,*}}r_n^t(h)+2\delta_n^k&\geq\sum_{n\in S_h^k\setminus S_h^{k,*}}\tilde{r}^t_n(h)\geq \sum_{n\in S_h^{k,*}\setminus S_h^k}\tilde{r}^t_n(h)\\ &\geq\sum_{n\in S_h^{k,*}\setminus S_h^k}r_n^t(h).
\end{align*}
Hence, we have
\begin{align}\nonumber
\sum_{n\in S_h^{t,(*)}\setminus S_h^t}r_n^t(h)-\sum_{n\in S_h^t\setminus S_h^{t,*}}r^t_n(h)\leq \sum_{n\in S_h^t\setminus S_h^{t,*}}2\delta_n^t\leq 2B\delta^t,
\end{align}
where $\delta_n^t=\delta^t, \forall n$ and is given in Lemma~\ref{lemma:failure_sets}. Therefore, we have
$\gamma^{t,*}-\tilde{\gamma}^t\leq {2B\delta}=\cO\Big(\frac{2B}{\sqrt{tH}}\Big).$
\end{proof}

Upon combining the results obtained in Lemma~\ref{lemma_good_events} and Lemma~\ref{lemma:kappa}, we obtain the following.
\begin{lemma}\label{lemma:regret-good-events}
We have $$\sum_{t=1}^{T}\Delta_t\mathds{1}(\cP\in \cE_p^t)\leq \Big(\sqrt{2\log\Big(\frac{4|\cS||\cA|NTH}{\epsilon}\Big)}+2B\Big)\sqrt{|\cS||\cA|NTH}.$$
\end{lemma}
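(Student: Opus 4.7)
The plan is to sum the per-episode bound in Lemma \ref{lemma_good_events} across $t\in[T]$ and then control the two resulting cumulative sums by combining a UCRL-style pigeonhole counting argument with an Azuma--Hoeffding concentration bound. Summing Lemma \ref{lemma_good_events} gives
\begin{align*}
\sum_{t=1}^T \Delta_t\, \mathds{1}(\cP\in\cE_p^t)
\leq \underbrace{\sum_{t=1}^T \sum_{n,s,a} c_n^t(s,a)\bigl(\gamma^{t,*}/B - r_n^t(s,a)\bigr)}_{(I)}
+ \underbrace{\sum_{t=1}^T \sum_{n,s,a} c_n^t(s,a)\, \kappa_1}_{(II)},
\end{align*}
where (I) is a reward-deviation term and (II) absorbs the optimism gap already quantified in Lemma \ref{lemma:kappa}.

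For (II), I would substitute $\kappa_1 \leq 2B\delta_n^t(s,a)$ (Lemma \ref{lemma:kappa}) with $\delta_n^t(s,a)$ from \eqref{eq:confidence_ball} and reuse the pigeonhole bookkeeping from the proof of Lemma \ref{lem:10}: monotonicity of visit counts yields $\sum_t c_n^t(s,a)/\sqrt{C_n^{t-1}(s,a)} = O\bigl(\sqrt{C_n^T(s,a)}\bigr)$, and a Cauchy--Schwarz step using the identity $\sum_{n,s,a} C_n^T(s,a) \leq NTH$ produces $(II) \leq 2B\sqrt{|\cS||\cA|NTH}$. For (I), I would interpret $c_n^t(s,a)(\gamma^{t,*}/B - r_n^t(s,a))$ as the deviation between realized rewards and their one-step conditional expectations along the trajectory executed by $\tilde{\pi}^t$; since per-step rewards lie in $[0,1]$, the at most $NTH$ increments form a bounded martingale difference sequence, so Azuma--Hoeffding combined with a union bound over $(s,a,n)$ at confidence level $\epsilon/(|\cS||\cA|NTH)$ gives $(I) \leq \sqrt{2\log(4|\cS||\cA|NTH/\epsilon)}\cdot \sqrt{|\cS||\cA|NTH}$. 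Adding the two bounds under the good event supplied by Lemma \ref{lemma:failure_sets} yields the claim.

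The most delicate part will be justifying the martingale structure in (I). Since $\gamma^{t,*}$ is the value of the relaxed LP rather than the expected return of the policy actually executed on the true MDP, a direct Azuma decomposition is not immediately available. I expect to first shift the comparator from $\gamma^{t,*}$ to the optimistic value realized by $\tilde{\pi}^t$ (absorbing the resulting slack into (II) through Lemma \ref{lemma:kappa}), then decouple the random visit counts $c_n^t(s,a)$ from the reward fluctuations so that Azuma applies to an honest bounded martingale, and finally re-express the resulting bound in visitation-count form via Cauchy--Schwarz to recover the $\sqrt{|\cS||\cA|NTH}$ scaling that appears in the claim.
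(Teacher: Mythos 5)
There is a genuine gap in your handling of term (I). The quantity $\sum_{t,n,s,a} c_n^t(s,a)\bigl(\gamma^{t,*}/B - r_n^t(s,a)\bigr)$ is not a mean-zero martingale: $\gamma^{t,*}$ is the value of the \emph{optimistic} relaxed problem, so this difference carries a systematic positive drift coming from the optimistic reward shift $\tilde r_n^t(s,a)=r_n^t(s,a)+2\delta_n^t(s,a)$ (and from the optimistic kernel), and Azuma--Hoeffding cannot absorb that drift. You sense this in your last paragraph, but your proposed fix does not close the accounting: Lemma~\ref{lemma:kappa} only controls the gap $\gamma^{t,*}-\tilde\gamma^t\le 2B\delta^t$, which is what produces the $2B$ term, and it says nothing about the reward-shift gap $\tilde r_n^t-r_n^t=2\delta_n^t$, which is precisely the source of the $\sqrt{2\log(4|\cS||\cA|NTH/\epsilon)}$ factor in the claimed bound. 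Moreover, the bound you assert for (I) cannot come out of the mechanism you describe: Azuma over at most $NTH$ bounded increments with a union bound over $(s,a,n)$ yields something of order $\sqrt{NTH\log(|\cS||\cA|NTH/\epsilon)}$, with $|\cS||\cA|$ \emph{inside} the logarithm; the extra $\sqrt{|\cS||\cA|}$ outside the square root in the lemma is the signature of a pigeonhole sum of confidence widths over state--action pairs, not of a concentration inequality.

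The paper's proof avoids any martingale argument here. It inserts the shifted reward $\tilde r_n^t$ into the decomposition, so the first term becomes $\sum_{t,n,s,a} c_n^t(s,a)\bigl(\gamma^{t,*}/B-\tilde r_n^t(s,a)\bigr)$, which is non-positive deterministically on the good event by the optimism/greedy-selection argument underlying Lemma~\ref{lemma:kappa}. The residual $\sum_{t,n,s,a} c_n^t(s,a)\bigl(\tilde r_n^t(s,a)-r_n^t(s,a)\bigr)=\sum_{t,n,s,a} c_n^t(s,a)\cdot 2\delta_n^t(s,a)$ is then bounded by exactly the pigeonhole argument (Lemma~\ref{lemma:coverc} plus Cauchy--Schwarz) that you correctly deploy for the $\kappa_1$ part, and this is what yields $\sqrt{2\log(4|\cS||\cA|NTH/\epsilon)}\cdot\sqrt{|\cS||\cA|NTH}$. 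Your treatment of (II) matches the paper; to repair (I), replace the Azuma step with the optimistic-reward insertion and route the resulting $2\delta_n^t$ slack through the same visit-count pigeonhole.
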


\begin{proof} 
From Lemma~\ref{lemma_good_events} and Lemma~\ref{lemma:kappa}, we can rewrite the summation over $\Delta_t$ as follows:
\begin{align*}
&\sum_{t=1}^{T}\Delta_k\mathds{1}(\cP\!\in\!\cE_p^t)\!\leq\! \underset{\text(I)}{\underbrace{\sum_{t=1}^{T}\sum_{(s,a)}\sum_nc_n^t(s,a)(\gamma^{t,*}/B\!-\!\tilde{r}_n^t(s,a))}}\displaybreak[0]\\
&\qquad+\underset{\text{(II)}}{\underbrace{\sum_{t=1}^{T}\sum_{(s,a)}\sum_nc_n^t(s,a)\Big(\tilde{r}_n^t(s,a)-r_n^t(s,a)+\frac{1}{\sqrt{tH}}\Big)}}.
\label{eq:sum_regret}
\end{align*}
It is clear that $\text{(I)}$ is non-positive. Conditioned on good events, it is clear that 
$\tilde{r}_n^t(s,a)-{r}_n^t(s,a)\leq 2\delta_n^t(s,a).$
Therefore, we have
\begin{align*}
\sum_{t=1}^{T}\Delta_t\mathds{1}(\cP\in \cE_p^k)
&\leq \Bigg(\sqrt{2\log\Big(\frac{4|\cS||\cA|NTH}{\epsilon}\Big)}+2B\Bigg)\sum_{t=1}^T\sum_{n=1}^N\sum_{(s,a)}\frac{c_n^t(s,a)}{\sqrt{C_n^t(s,a)}}\displaybreak[1]\\
    &\leq \Bigg(\sqrt{2\log\Big(\frac{4|\cS||\cA|NTH}{\epsilon}\Big)}+2B\Bigg)\sqrt{|\cS||\cA|NTH},
\end{align*}
where last inequality is due to Jensen's inequality and Lemma \ref{lemma:coverc}.

\begin{lemma}\label{lemma:coverc}
    For any sequence of numbers $w_1,w_2,...,w_T$ with $0\leq w_k$, define $W_{k}:=\sum_{i=1}^k w_i$,
    \begin{align*}
        \sum_{k=1}^T \frac{w_k}{\sqrt{W_{k}}} \leq (\sqrt{2} +1)\sqrt{W_T}.
    \end{align*}
\end{lemma}
\begin{proof}
The proof follows by induction. 
When $t=1$, it is true as $1\leq\sqrt{2}+1$.
Assume for all $k\leq t-1$, the inequality holds, then we have the following:
\begin{align*}
    &\sum_{k=1}^T \frac{w_k}{\sqrt{W_{k}}}= \sum_{k=1}^{T-1} \frac{w_k}{\sqrt{W_{k}}} + \frac{w_T}{\sqrt{W_{T}}} \displaybreak[0]\\
    &\leq (\sqrt{2} +1) \sqrt{W_{T-1}} + \frac{w_T}{\sqrt{W_{T}}} \displaybreak[0]\\
    &=\sqrt{{(\sqrt{2}+1)}^2 W_{T-1} + 2(\sqrt{2}+1)w_T \sqrt{\frac{W_{T-1}}{W_T}} + \frac{{w_T}^2}{W_T}} \displaybreak[0]\\
    &\leq \sqrt{{(\sqrt{2}+1)}^2 W_{T-1} + 2(\sqrt{2}+1)w_T \sqrt{\frac{W_{T-1}}{W_{T-1}}} + \frac{{w_T}W_T}{W_T}} \displaybreak[0]\\
    & = \sqrt{{(\sqrt{2}+1)}^2 W_{T-1} + (2(\sqrt{2}+1)+1)w_T} \displaybreak[0]\\
    & = (\sqrt{2}+1)\sqrt{(W_{T-1}+ w_T)}\displaybreak[0]\\
    & = (\sqrt{2}+1)\sqrt{W_T}.
\end{align*}
\end{proof}

\end{proof}

\section{Details of the Numerical Case-Study}\label{sec:experiments}

\textbf{Continuous Positive Airway Pressure Therapy (CPAP).} 
The CPAP \citep{kang2013markov,herlihy2023planning,li2022towards, wang2024online} is a highly effective treatment when it is used
consistently during sleeping for adults with obstructive
sleep apnea. Similar non-adherence to CPAP in patients hinders the effectiveness, we adapt the Markov model of CPAP
adherence behavior to a two-state system with the clinical adherence criteria. To elaborate, three distinct states are defined to characterize adherence levels: low, intermediate, and acceptable. Patients are categorized into two clusters: ``Adherence" and ``Non-Adherence." Those within the ``Adherence" cluster exhibit a greater likelihood of maintaining an acceptable level of adherence. In particular,  
the state space is ${1,2,3}$, which represents low, intermediate and acceptable adherence level respectively.  The difference between these two groups reflects on their transition probabilities, as in Figures.  \ref{Fig:CPAPtransition1}- \ref{Fig:CPAPtransition2}. Generally speaking, the first group has a higher probability of staying in a good adherence level. From each group, we construct 10 arms, whose transition probability matrices are generated by adding a small noise to the original one. Actions such as text patients/ making a call/ visit in person will cause a 5\% to 50\% increase in adherence level. The budget is set to $B = 10$.  The objective is to maximize the total adherence level.

In standard CPAP, which is used for stochastic RMAB setting, the reward is set as the $1$ for state ``low adherence", $2$ for state ``intermediate adherence", and $3$ for state ``high adherence". Based on this, we randomly generate a sequence of coefficients to increase or decrease the reward for each episode.   In our setting, we consider an MDP with $50$ episode, and each episode contains $50$ time steps. The control coefficient for episode $i$ is $0.5+(i-1)/49$. The whole experiment runs in $1000$ Monte Carlo independent rounds.  
\vspace{-0.5cm}
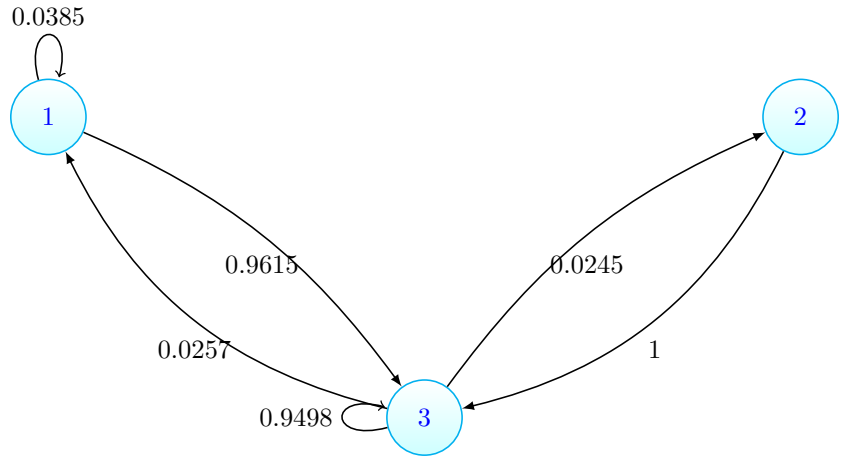
\begin{figure}[ht]
\begin {center}
\begin {tikzpicture}[-latex ,auto ,node distance =4 cm and 5cm ,on grid ,
semithick ,
state/.style ={ circle ,top color =white , bottom color = processblue!20 ,
draw,processblue , text=blue , minimum width =1 cm}] x
\node[state] (C)
{$3$};
\node[state] (A) [above left=of C] {$1$};
\node[state] (B) [above right =of C] {$2$};
\path (A) edge [loop above] node[above] {$0.0385$} (A);
\path (C) edge [bend left =25] node[below =0.15 cm] {$0.0257$} (A);
\path (A) edge [bend right = -15] node[below =0.15 cm] {$0.9615$} (C);
\path (C) edge [loop left] node[left] {$0.9498$} (C);
\path (C) edge [bend left =15] node[below =0.15 cm] {$0.0245$} (B);
\path (B) edge [bend right = -25] node[below =0.15 cm] {$1$} (C);
\end{tikzpicture}
\caption{Transition diagram for CPAP Cluster 1}
\label{Fig:CPAPtransition1}
\end{center}
\end{figure}
\vspace{-0.5cm}
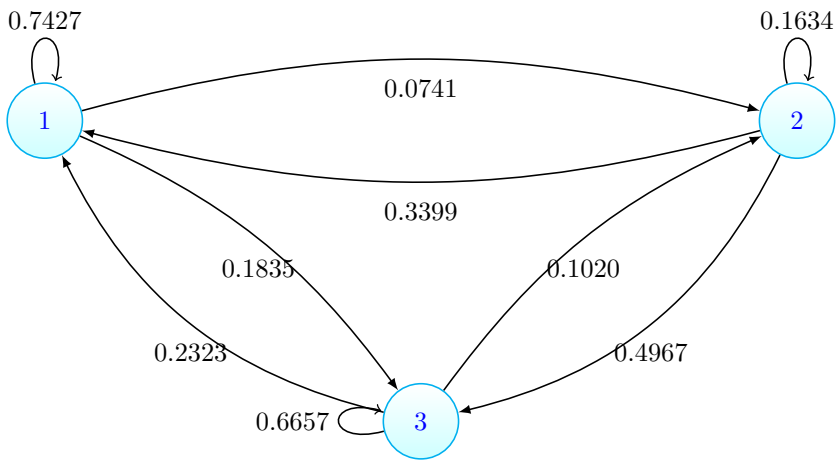
\begin{figure}[ht]
\begin {center}
\begin {tikzpicture}[-latex ,auto ,node distance =4 cm and 5cm ,on grid ,
semithick ,
state/.style ={ circle ,top color =white , bottom color = processblue!20 ,
draw,processblue , text=blue , minimum width =1 cm}] x
\node[state] (C)
{$3$};
\node[state] (A) [above left=of C] {$1$};
\node[state] (B) [above right =of C] {$2$};
\path (A) edge [loop above] node[above] {$0.7427$} (A);
\path (C) edge [bend left =25] node[below =0.15 cm] {$0.2323$} (A);
\path (A) edge [bend right = -15] node[below =0.15 cm] {$0.1835$} (C);
\path (A) edge [bend right = -15] node[below =0.15 cm] {$0.0741$} (B);
\path (C) edge [loop left] node[left] {$0.6657$} (C);
\path (C) edge [bend left =15] node[below =0.15 cm] {$0.1020$} (B);
\path (B) edge [bend right = -25] node[below =0.15 cm] {$0.4967$} (C);
\path (B) edge [bend left = 15] node[below =0.15 cm] {$0.3399$} (A);
\path (B) edge [loop above] node[above] {$0.1634$} (B);
\end{tikzpicture}
\caption{Transition diagram for CPAP Cluster 2}
\label{Fig:CPAPtransition2}
\end{center}
\end{figure}

\textbf{A Deadline Scheduling Problem.} 
We consider the  deadline scheduling problem for the scheduling of electrical
vehicle charging stations.  A charging station (agent) has total $N$ charging spots (arms) and can charge $M$ vehicles in each round. The charging station
obtains a reward for each unit of electricity that it provides to a vehicle and receives a penalty (negative reward) when a vehicle is not fully charged. The
goal of the station is to maximize its net reward. We use exactly the same
setting as in (Xiong et al. 2022a) for our experiment. 
More specifically, the state of an arm is denoted by a pair of integers $(D; B)$, where $B$ is the amount of
electricity that the vehicle still needs and $D$ is the time until the vehicle leaves the station.  When a
charging spot is available, its state is $(0; 0)$. $B$ and $D$ are upper-bounded by $9$ and 
$12$, respectively.  Hence, the size of state space is $109$ for each arm.
The state transition is given by
\begin{align}\nonumber
    S_i(t+1)=\begin{cases}
    (D_i(t)-1, B_i(t)-a_i(t)), \quad\text{if $D_i(t)>1$},\\
    (D, B), \quad\text{with prob. 0.7 if $D_i(t)\leq 1$},
    \end{cases}
\end{align}
where $(D, B)$ is a random state when a new vehicle arrives at the charging spot $i$.
Specifically, $a_i(t)=0$ means being passive and $a_i(t)=1$ means being active.   There are total $N=100$ charging spots and a maximum $M=30$ can be served at each time.

The agent receives a base reward at each time from arm $i$ according to
\begin{align*}
    r_i(t)=\begin{cases}
    (1-0.5)a_i(t), \quad \text{if $B_i(t)>0, D_i(t)>1$},\\
    (1-0.5)a_i(t)-0.2(B_i(t)-a_i(t))^2, \\
    \qquad\qquad\qquad\quad \text{if $ B_i(t)>0, D_i(t)=1$},\\
    0, \quad \text{Otherwise}.
    \end{cases}
\end{align*}
Based on this, we randomly generate a sequence of coefficients
to increase or decrease the reward for each episode. In our setting, we consider an MDP with 100 episode, and each episode
contains 100 time steps. The control coefficient for episode i is $0.5+(i-1)/99$. The whole experiment runs in 1000 Monte
Carlo independent rounds.


\end{document}